\documentclass{article}

\usepackage{microtype}
\usepackage{graphicx}
\usepackage{subcaption}
\usepackage{booktabs} % for professional tables
\usepackage{multirow}
\usepackage[table]{xcolor}
\usepackage{colortbl}
\usepackage{algorithmic}

\usepackage{hyperref}
\usepackage{enumitem}

\newcommand{\camera}[1]{\textcolor{black}{#1}}
\definecolor{ours}{RGB}{255,245,235} % 淡橙
\definecolor{lightgray}{RGB}{245,245,255}
\definecolor{lightgreen}{RGB}{245,252,245}
\definecolor{darkgreen}{RGB}{225,242,225}

\usepackage[preprint]{icml2026}

\usepackage{amsmath}
\usepackage{amssymb}
\usepackage{mathtools}
\usepackage{amsthm}
\usepackage{graphicx}
\usepackage{subcaption}
\usepackage{enumitem}

\usepackage[capitalize,noabbrev]{cleveref}

\theoremstyle{plain}
\newtheorem{theorem}{Theorem}[section]
\newtheorem{proposition}[theorem]{Proposition}
\newtheorem{lemma}[theorem]{Lemma}

\theoremstyle{definition}

\theoremstyle{remark}

\usepackage{fontawesome5} 
\usepackage[textsize=tiny]{todonotes}
\usepackage{xcolor}
\definecolor{reda}{RGB}{192,0,0}
 \hypersetup{
 	linkcolor    = reda,
 }

\icmltitlerunning{Modality-Decoupled Online Recursive Editing}

\begin{document}

\twocolumn[
  \icmltitle{Modality-Decoupled Online Recursive Editing}
  % It is OKAY to include author information, even for blind submissions: the
  % style file will automatically remove it for you unless you've provided
  % the [accepted] option to the icml2026 package.

  % List of affiliations: The first argument should be a (short) identifier you
  % will use later to specify author affiliations Academic affiliations
  % should list Department, University, City, Region, Country Industry
  % affiliations should list Company, City, Region, Country

  % You can specify symbols, otherwise they are numbered in order. Ideally, you
  % should not use this facility. Affiliations will be numbered in order of
  % appearance and this is the preferred way.
  \icmlsetsymbol{equal}{*}

  \begin{icmlauthorlist}
    \icmlauthor{Siyuan Li}{hitsz,pengcheng}
    \icmlauthor{Youyuan Zhang}{hitsz,pengcheng}
    \icmlauthor{Fangming Liu}{pengcheng,hust}
     \icmlauthor{Jing Li \textsuperscript{\faIcon[regular]{envelope}}}{hitsz}
  \end{icmlauthorlist}

  \icmlaffiliation{hitsz}{Harbin Institute of Technology, Shenzhen, China.}
  \icmlaffiliation{pengcheng}{Peng Cheng Laboratory, China.}
  \icmlaffiliation{hust}{Huazhong University of Science and Technology, China}
  \icmlcorrespondingauthor{Jing Li}{jingli.phd@hotmail.com}

  % You may provide any keywords that you find helpful for describing your
  % paper; these are used to populate the "keywords" metadata in the PDF but
  % will not be shown in the document
  \icmlkeywords{Machine Learning, ICML}

  \vskip 0.3in
]

% this must go after the closing bracket ] following \twocolumn[ ...

% This command actually creates the footnote in the first column listing the
% affiliations and the copyright notice. The command takes one argument, which
% is text to display at the start of the footnote. The \icmlEqualContribution
% command is standard text for equal contribution. Remove it (just {}) if you
% do not need this facility.

% Use ONE of the following lines. DO NOT remove the command.
% If you have no special notice, KEEP empty braces:
\printAffiliationsAndNotice{}  % no special notice (required even if empty)
% Or, if applicable, use the standard equal contribution text:
% \printAffiliationsAndNotice{\icmlEqualContribution}

\begin{abstract}
Online model editing for multimodal large language models (MLLMs) requires assimilating a stream of corrections under tight compute and memory budgets. Yet editors developed for text-only LLMs often degrade on MLLMs: visually dominant activations skew the statistics that shape updates, causing \emph{cross-modal conflict}, while sequential writes become entangled in a shared edit space and amplify long-horizon interference, causing \emph{inter-edit interference}. To address these, we propose \textbf{M-ORE}, a modality-decoupled online recursive editor for lifelong MLLM adaptation. M-ORE is derived from a unified proximal-projection formulation and admits a closed-form update with a Sherman-Morrison recursion, yielding constant per-edit overhead. It maintains module-wise locality statistics for the text stack and the visual projector to avoid visually dominated update shaping and performs continual updates in a fixed orthogonal low-rank edit subspace via a Sherman-Morrison recursion to mitigate long-horizon interference. Experiments on multiple MLLM backbones and online editing benchmarks show that our M-ORE method consistently improves reliability, generality, and locality over strong baselines, while achieving favorable quality-efficiency scaling. Our code is publicly available at \url{https://github.com/lab-klc/M-ORE}.
\end{abstract}

\section{Introduction}

Large Language Models (LLMs) have become a foundation of natural language processing~\cite{touvron2023llama,zhao2023survey}. 
Recent Multimodal Large Language Models (MLLMs) extend LLMs with visual perception and achieve strong performance on vision-language tasks~\cite{lin2024video,zhu2026medeyes,liang2026render,shi2026mmerror,lu2026chordedit}. 
However, their knowledge is encoded in parameters and thus remains static after training, reducing reliability as real-world facts evolve~\cite{jiang2026when}. 
Since full retraining is costly, \emph{model editing} provides an efficient alternative by updating knowledge with minimal parameter changes while preserving general capabilities~\cite{meng2022locating,fangalphaedit}.

\begin{figure}[t]
    \centering
    \includegraphics[width=0.95\linewidth]{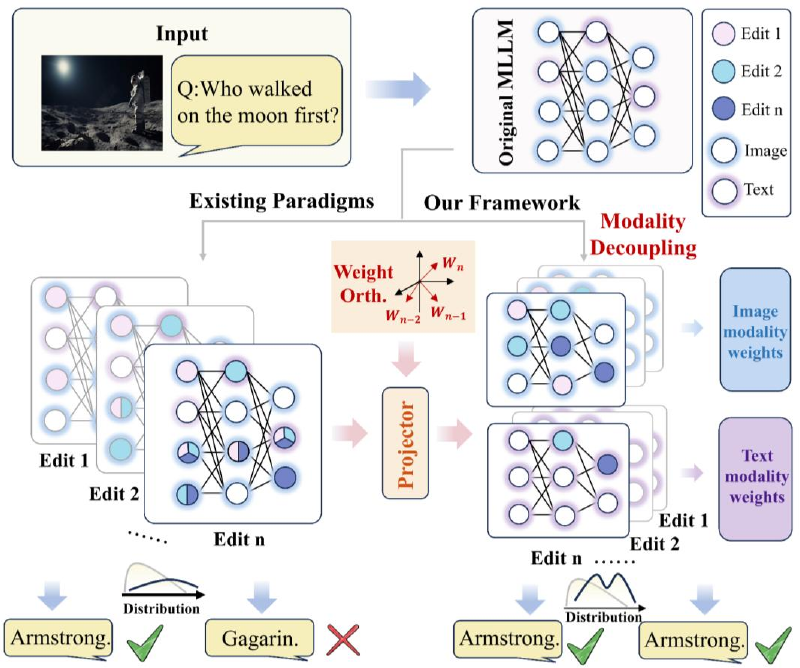}
\caption{Overview of M-ORE and its contrast to existing online MLLM editing paradigms.
M-ORE decouples vision/text updates and performs continual writes in a fixed orthogonal low-rank space.}
    \label{fig:intro}
    \vspace{-4mm}
\end{figure}  

Recent research in model editing has mainly focused on pure LLMs and achieved notable success~\cite{DBLP:conf/iclr/MitchellLBFM22,mitchell2022memory,fangalphaedit}. However, directly applying text-oriented editors to MLLMs is often ineffective: the visual modality and its coupling with text introduce pronounced modality discrepancies \cite{shi2025dualedit,chen2025attribution}. Mainstream approaches (e.g., ROME and AlphaEdit~\cite{meng2022locating,fangalphaedit}) typically optimize a global objective and implicitly assume that input representations share relatively homogeneous statistical properties; yet in MLLMs, visual representations often exhibit higher energy, thereby dominating the estimation of covariance and inducing \emph{Cross-modal Conflict}.
Moreover, practical systems require \emph{online} editing under a stream of corrections~\cite{cao2025deltaedit}, where parameter-modifying methods accumulate \emph{inter-edit interference} and drift~\cite{li2024can}, while parameter-preserving methods incur edit-growing computation or storage~\cite{chen2025lifelong}.
As a result, existing MLLM editors rarely address both failure modes while retaining \emph{constant-overhead} online updates.

% Moreover, real-world deployments often require \emph{online model editing}, where models must accommodate a continuous stream of corrections \cite{cao2025deltaedit}. In this setting, existing approaches face a fundamental trade-off: parameter-modifying methods \cite{li2024can} often suffer from \emph{inter-edit interference}, which induces persistent distribution drift, while parameter-preserving methods \cite{chen2025lifelong}  typically incur computation that grows linearly with the number of accumulated edits. To the best of our knowledge, current MLLM editing methods cannot jointly resolve \emph{cross-modal conflict} and \emph{inter-edit interference} without sacrificing \emph{constant-overhead} online updating.

To bridge this gap, we first conduct pilot analyses to diagnose the failure modes above. 
We observe that \emph{cross-modal conflict} primarily arises from the mismatch in gradient-energy scales across modalities and \emph{statistical heterogeneity}, whereas \emph{inter-edit interference} stems from the competition between newly introduced edits and previously injected edits for representational capacity in the shared parameter space, causing \emph{weight entanglement}. Figure~\ref{fig:intro} summarizes our high-level design and contrasts it with prior paradigms.

Driven by the preceding analysis,
we propose Modality-decoupled Online Recursive Editing (M-ORE), a novel constant-overhead framework for lifelong adaptation in MLLMs. 
M-ORE maintains separate locality statistics for different edited modules (text layers vs.\ visual projector) to prevent \emph{cross-modal conflict}, and performs continual writes in a \emph{fixed orthogonal} low-rank subspace with a \emph{Sherman-Morrison} grounded closed-form recursion.
This design yields efficient online updates by adaptively suppressing updates on heavily-used coordinates in the fixed edit subspace, thereby reducing long-range \emph{inter-edit entanglement}.
Experiments on representative MLLM backbones and online multimodal editing benchmarks demonstrate that M-ORE achieves a better stability-plasticity trade-off than strong baselines under strict $O(1)$ per-edit overhead.
Our main contributions are summarized as follows:
\begin{itemize}[noitemsep,nolistsep,leftmargin=*]
    \item We identify two key failure modes when applying LLM-based editors to MLLMs in the online setting: \emph{cross-modal conflict} induced by modality scale mismatch and \emph{inter-edit interference} induced by entangled sequential updates (Section~\ref{sec:analysis}).
    \item We propose M-ORE, a modality-decoupled editor with a Sherman--Morrison grounded closed-form recursion in a fixed orthogonal low-rank write space, enabling constant-overhead online editing (Section~\ref{sec:method}).
    \item We conduct extensive online editing experiments on multiple MLLM backbones and benchmarks, demonstrating that M-ORE consistently improves the stability-plasticity trade-off and achieves a quality-efficiency scaling compared with strong baselines (Section~\ref{sec:experiments}).
\end{itemize}

\section{Related Work}

\subsection{Model Editing}
\paragraph{Model Editing for LLMs.}
LLM editing methods are commonly grouped into \emph{parameter-modifying} and \emph{parameter-preserving} paradigms~\cite{dai-etal-2022-knowledge,DBLP:conf/iclr/SinitsinPPPB20,zhang2024comprehensivestudyknowledgeediting,DBLP:conf/iclr/HuangSZZR023}.
Parameter-modifying editors directly update internal weights, including (i) \emph{locate-then-edit} methods such as ROME~\cite{meng2022locating} and MEMIT~\cite{meng2023massediting}, which localize knowledge-bearing components and apply closed-form low-rank updates; AlphaEdit~\cite{fangalphaedit} adds null-space constraints, and DeltaEdit~\cite{cao2025deltaedit} analyzes long-horizon error accumulation; and (ii) \emph{meta-learning} editors such as KE~\cite{de-cao-etal-2021-editing} and MEND~\cite{DBLP:conf/iclr/MitchellLBFM22}, which train hypernetworks to predict edit updates from gradients or error signals.
Parameter-preserving approaches avoid changing the backbone, instead using retrieval/in-context prompting (e.g., IKE~\cite{DBLP:conf/emnlp/ZhengLDFWXC23}) or lightweight auxiliary models/modules (e.g., SERAC~\cite{mitchell2022memory}, GRACE~\cite{DBLP:conf/nips/HartvigsenSPKG23}) to locally override behavior.
While effective for text-only settings, these techniques often degrade in multimodal regimes where visual inputs introduce substantial heterogeneity and noise~\cite{chen2025attribution,DBLP:conf/aaai/YuCZH24}.

\paragraph{Model Editing for MLLMs.}
Editing MLLMs is still emerging~\cite{cheng2023edit}.
Early studies mainly adapt LLM-based editors to multimodal benchmarks~\cite{he2024llmsmeetmultimodalgeneration,zhang2024comprehensivestudyknowledgeediting}; MMEdit systematically evaluates such adaptations and shows that naive transfer struggles with coupled vision-language representations~\cite{cheng2023edit}.
Recent methods start to incorporate multimodal-specific designs: VisEdit~\cite{chen2025attribution} uses attribution to identify and modify critical visual units; DualEdit~\cite{shi2025dualedit} introduces gated dual-branch editing; LiveEdit~\cite{chen2025lifelong} targets lifelong editing via expert composition.
Nevertheless, existing approaches typically incur edit-growing retrieval/composition costs or accumulate interference under sequential weight updates, and a principled framework that simultaneously mitigates \emph{cross-modal conflict} and \emph{inter-edit interference} under strict online constraints remains underexplored~\cite{DBLP:conf/emnlp/YaoWT0LDC023,gu-etal-2024-model}.

\subsection{Online Editing}

\paragraph{Online editing for LLMs.}
Online (or sequential) editing extends single-step correction to a stream of updates, requiring models to address the plasticity-stability dilemma over time \cite{mitchell2022memory,jiang-etal-2025-neuron}.
In text-only LLMs, retrieval-based methods \cite{chen-etal-2024-lifelong} mitigate catastrophic forgetting by storing edits in external memory, but incur inference latency that grows linearly with the number of edits\cite{DBLP:conf/nips/0104L0XY0X0C24}.
Locate-then-edit approaches \cite{meng2022locating} are computationally efficient, yet often degrade severely in sequential settings.

\begin{figure*}[ht]
    \centering
    \includegraphics[width=1.0\linewidth]{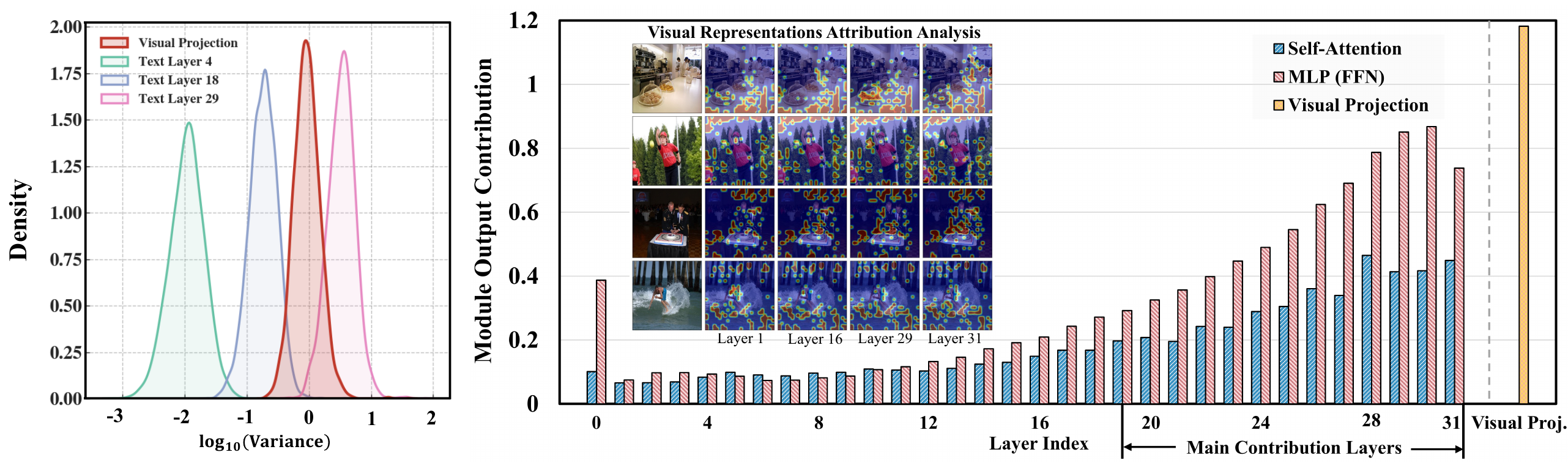}
\caption{Cross-modal conflict caused by modality mismatch (BLIP2-OPT).
\textbf{Left:} log-variance density indicates higher-energy visual features than text activations.
\textbf{Right:} attribution analysis shows visually dominated contributions, implying that shared global statistics bias updates toward the visual subspace and weaken textual preservation.
Results for other MLLMs are provided in the Appendix~\ref{app:cross-modal}.}
    \label{fig:cross-modal-conflict}
    \vspace{-3mm}
\end{figure*}

\paragraph{Online editing for MLLMs.}
In contrast, online editing for MLLMs remains under-explored \cite{chen2025lifelong} and faces additional challenges induced by modality heterogeneity \cite{cheng2023edit,chen2025attribution,shi2025dualedit}.
Current solutions largely rely on \emph{parameter-preserving}.
For example, LiveEdit \cite{chen2025lifelong} proposes a mixture-of-experts framework that generates an edit-specific LoRA expert for each sample and routes queries at inference time.
However, such approaches incur substantial memory overhead as the number of edits increases \cite{shi2025dualedit,DBLP:conf/icml/ThedeRBAH25}.
Crucially, \emph{parameter-modifying} online editing for MLLMs remains largely unexplored \cite{DBLP:conf/icml/0001JCBZSF025,DBLP:conf/icml/Zhang0ZL25,DBLP:conf/iclr/DengWPDSC25}. 

\section{Analysis on Online Sequential Editing}
\label{sec:analysis}

\subsection{Preliminary}
\label{sec:prelim}
\paragraph{Multimodal Language Model.}

We consider an MLLM $f_\theta$ with a vision encoder $\mathcal{E}_v$, projector $\mathcal{P}$, and LLM $\mathcal{M}$.
Given $(x^v,x^q)$, where $x_t^v$ is the visual input, $x_t^q$ is the text query, and the model autoregressively predicts $p(y\mid x^v,x^q)$ from embeddings
$H=[\mathcal{P}(\mathcal{E}_v(x^v)),\mathrm{Embed}(x^q)]$.
For transformer layer $l$, we adopt the standard residual decomposition~\cite{meng2022locating,fangalphaedit}:
\begin{equation}
\small
h^{l}=h^{l-1}+a^{l}+m^{l},\quad 
m^{l}=W_{\mathrm{out}}^{l}\,\sigma\!\big(W_{\mathrm{in}}^{l}\,\gamma(h^{l-1}+a^{l})\big),
\end{equation}
where $a^{l}$ and $m^{l}$ denote the attention and FFN outputs, $\gamma(\cdot)$ is LayerNorm, and $\sigma(\cdot)$ is a nonlinearity.
Following the FFN key-value view~\cite{geva2021transformer}, we define
\begin{equation}
k^{l}\triangleq \sigma\!\big(W_{\mathrm{in}}^{l}\,\gamma(h^{l-1}+a^{l})\big),\quad
v^{l}\triangleq m^{l}=W_{\mathrm{out}}^{l}k^{l}.
\end{equation}
Many parameter-modifying editors rewrite knowledge by updating $W_{\mathrm{out}}^{l}$; we denote it as $W$ when clear.

\paragraph{Online Editing in MLLMs.}
\label{sec:online}
We study online editing of an MLLM $f_\theta$ that maps a multimodal input $x=(x^v,x^q)\in\mathcal{X}$ to an autoregressive output distribution over text.
Following the FFN key-value view, we consider \emph{parameter-modifying} edits that update a selected FFN output matrix $W$ while keeping the remaining parameters fixed, i.e., $\theta=(\theta_0,W)$ with $\theta_0$ frozen.
At step $t$, an edit request is a target pair $e_t=(x_t^e,y_t^e)$ such that $y_t^e \neq f_{(\theta_0,W_{t-1})}(x_t^e)$.
An editor $\mathrm{ME}$ produces an additive update with bounded per-edit cost:
\begin{equation}
\small
\Delta W_t=\mathrm{ME}\!\left(f_{(\theta_0,W_{t-1})},x_t^e,y_t^e\right),\;
W_t=W_{t-1}+\Delta W_t,
\end{equation}
so $f_{\theta_t}=f_{(\theta_0,W_t)}$ and $W_t=W_0+\sum_{i=1}^{t}\Delta W_i$.

A successful online editor should satisfy three criteria~\cite{cheng2023edit,chen2025lifelong}:
\emph{Reliability} (correct on the edited request),
\emph{Generality} (holds under semantically equivalent text/visual variants),
and \emph{Locality} (minimal side effects on unrelated inputs).
We quantify locality by the distributional shift between $f_{\theta_t}$ and $f_{\theta_{t-1}}$ on an irrelevant set $\mathcal{U}_t$:
\begin{equation}
\small
\mathbb{E}_{x\sim \mathcal{U}_t}\Big[\exp\!\big(-\mathrm{KL}\big(p_{\theta_t}(\cdot|x)\,\|\,p_{\theta_{t-1}}(\cdot|x)\big)\big)\Big],
\end{equation}
where $p_\theta(\cdot|x)$ is the next-token distribution under the same decoding prefix.
Full metric definitions are in Appendix~\ref{app:rgl}.
We next present pilot analyses to motivate our design.

\begin{figure*}[ht]
    \centering
    \begin{subfigure}[t]{0.245\textwidth}
        \centering
        \includegraphics[width=\linewidth]{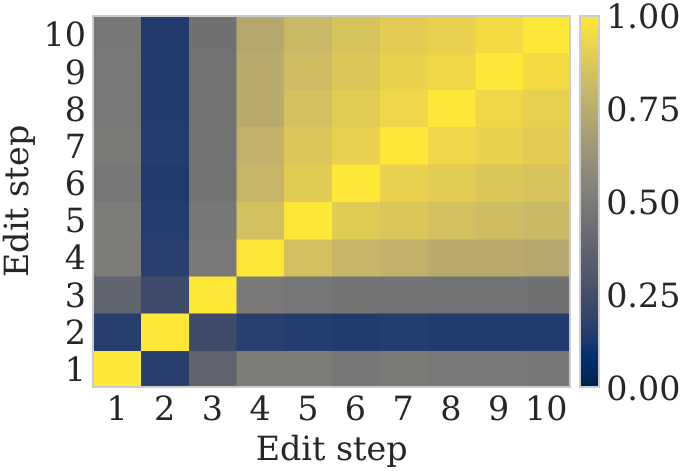}
        \caption{Top-$k$ coordinate overlap.}
        \label{fig:top-k-overlap}
    \end{subfigure}\hfill 
    \begin{subfigure}[t]{0.245\textwidth}
        \centering
        \includegraphics[width=\linewidth]{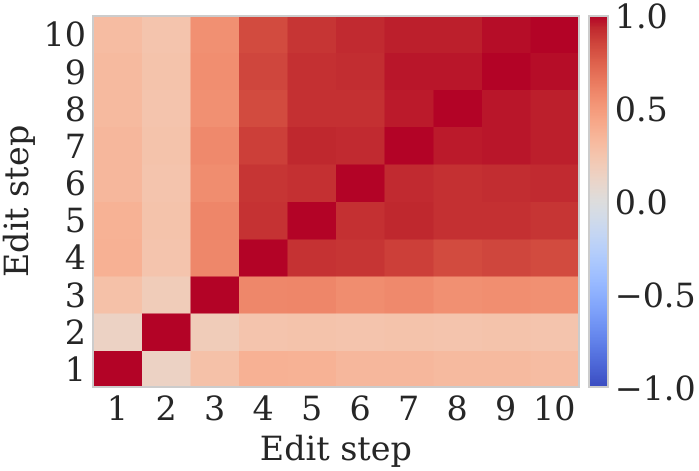}
        \caption{Weight cosine similarity.}
        \label{fig:cosine-similarity}
    \end{subfigure}\hfill
    \begin{subfigure}[t]{0.25\textwidth}
        \centering
        \includegraphics[width=\linewidth]{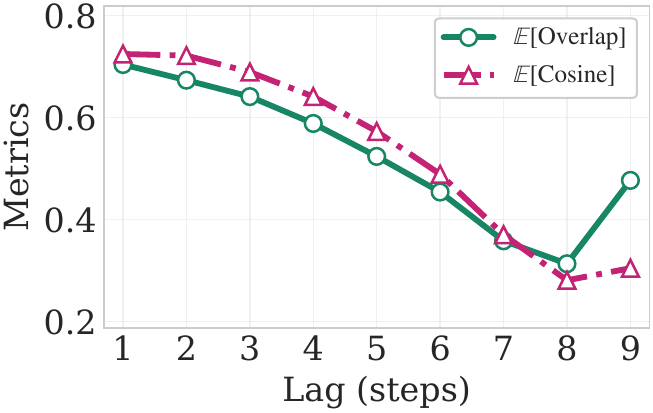}
        \caption{Entanglement horizon vs. lag.}
        \label{fig:entanglement-horizon}
    \end{subfigure}\hfill
    \begin{subfigure}[t]{0.252\textwidth}
        \centering
        \includegraphics[width=\linewidth]{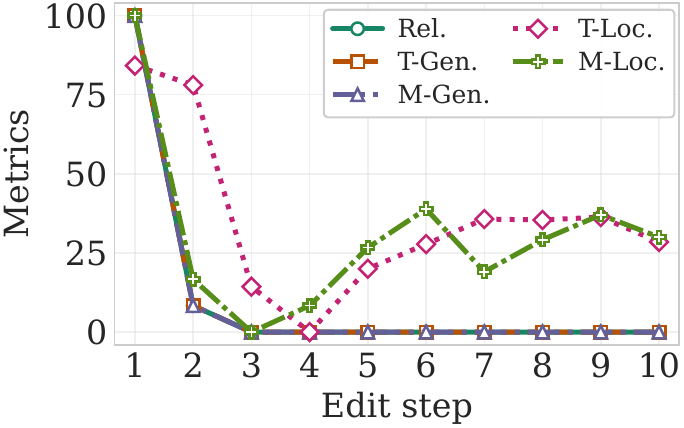}
        \caption{Sequential forgetting.}
        \label{fig:forgetting-curves} 
    \end{subfigure}

    \caption{Inter-edit interference induced by MEND on BLIP2-OPT over 10 online sequential edits on the E-IC test set.}
\label{inter_edit_interference}
\vspace{-3mm}
\end{figure*}

% \textbf{Generality.} requires the edit to hold under semantically equivalent variants of the request, including both textual and visual perturbations. Let $\mathcal{N}_t(x)$ be a neighborhood (e.g., paraphrases of $x_t$ or benign transformations of $x_v$); then: 
% \begin{equation}
% \mathbb{E}_{x'\sim \mathcal{N}_t(x_e^t)}\big[\mathbb{I}\big(f_{\theta_t}(x')=y_e^t\big)\big]\ .
% \end{equation}
% \textbf{Locality.} demands minimal side effects on unrelated inputs. Let $\mathcal{U}_t$ denote an irrelevant set
% (e.g., samples not semantically connected to the edit); we first measure distribution shift by the KL divergence
% between next-token distributions of the post-edit and pre-edit models:
% \begin{equation}
% \mathbb{E}_{x\sim \mathcal{U}_t}\Big[\exp\big(-\mathrm{KL}\big(p_{\theta_t}(\cdot|x)\,\|\,p_{\theta_{t-1}}(\cdot|x)\big)\big)\Big]\ ,
% \end{equation}
% where $p_{\theta}(\cdot|x)$ is the next-token distribution under the same decoding prefix. Detailed definitions are provided in Appendix~\ref{app:rgl}.
% Next, we conduct a detailed pilot study to derive design insights for our method.

\subsection{Cross-modal Conflict}
\label{sec:cross-modal-conflict}

We identify \emph{cross-modal conflict} as a key failure mode when applying text-oriented editors to MLLMs: visual representations typically have much larger scale and can dominate the second-order statistics used for preconditioning or constraints, thereby biasing edits toward the visual subspace~\footnote{Estimating full cross-modal covariance is $O(d^2)$ and impractical at MLLM scale; we therefore use diagonal variance and layer-wise energy as simple second-order proxies, which already reveal the scale mismatch that biases pooled statistics.}.

\textbf{Statistical heterogeneity.}
We compute diagonal variances of textual hidden states across transformer layers and compare them with the projected visual representations.
Figure~\ref{fig:cross-modal-conflict} reveals a clear scale gap: variance magnitudes vary across text layers, while the visual projection consistently stays at a higher scale than early text layers.
As a result, pooling statistics across modalities is prone to visual-dominated estimates and cross-modal contamination.

\textbf{Energy dominance and propagation.}
We further measure layer-wise output energy averaged over the same edits.
Figure~\ref{fig:cross-modal-conflict} shows that the visual projection contributes the largest energy, and the MLP branch increasingly dominates in higher layers; deeper blocks also attend more strongly to salient image regions.
These patterns suggest that strong visual signals are amplified and propagated through shared blocks.
Taken together, these results motivate maintaining \emph{separate} second-order statistics for different edited modules (text layers vs.\ visual projection), rather than sharing a single statistic across modalities.

\subsection{Inter-edit Interference}
\label{sec:inter-edit-interference}

Beyond single-step reliability, \emph{online} MLLM editing must absorb a stream of updates without overwriting previously injected knowledge. We observe a common failure mode, \emph{inter-edit interference}, where successive edits become increasingly entangled in a shared parameter subspace, leading to accumulated drift and forgetting.

\textbf{Edits collapse into an ``edit core''.}
At each step $t$, we extract the effective update $\Delta W_t$ and compute pairwise overlap of the top-$k$ updated coordinates.
Figure~\ref{fig:top-k-overlap} shows a clear transition: early edits activate diverse coordinates, whereas later edits repeatedly modify a compact, stable subset, indicating collapse into a shared \emph{edit core}.

\textbf{Entanglement is predominantly co-directional.}
We further measure directional coupling via cosine similarity.
Figure~\ref{fig:cosine-similarity} reveals mostly positive correlations that become strongly aligned in the late-stage block, suggesting that interference is driven by co-directional accumulation within the reused edit core rather than push–pull cancelations. Consistently, Figure~\ref{fig:mend_drift_sequential} shows that MLLM hidden-state distributions drift after only a few edits.

\textbf{Long-range coupling explains forgetting.}
Aggregating the above pairwise statistics by lag $\tau$ yields an \emph{entanglement horizon} (Figure~\ref{fig:entanglement-horizon}): both overlap and cosine decay slowly and remain non-trivial for distant edits.
This long-range coupling provides a parameter-level explanation for sequential degradation: as new edits continue to reuse and move along the same edit-core subspace, earlier edits are progressively overwritten, matching the forgetting trends in Figure~\ref{fig:forgetting-curves}.
Motivated by these observations, sequential updates should be geometrically de-entangled.
We thus restrict edits to a fixed orthogonal low-rank write space and apply a recursive rule with an implicit orthogonalization bias, discouraging repeated reuse of heavily-occupied coordinates and mitigating collapse and drift.

\begin{figure}[t]
    \centering
    \includegraphics[width=0.8\linewidth]{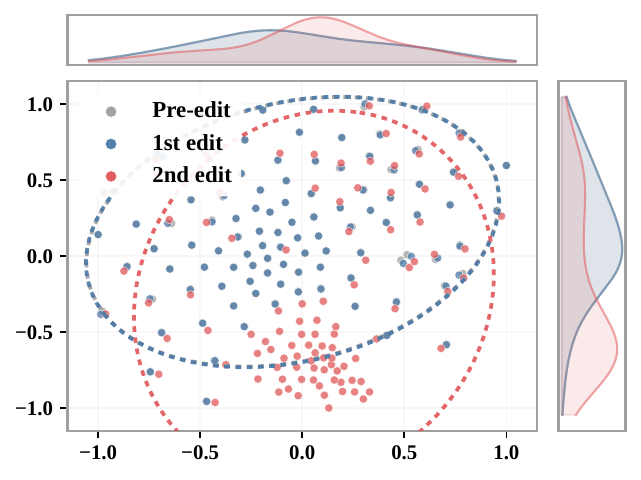}
    \caption{Sequential visualization of MEND-induced hidden-state shifts on BLIP2-OPT across 100 consecutive locality samples.}
    \label{fig:mend_drift_sequential}
    \vspace{-3mm}
\end{figure}

%In this section, we rethink a \emph{projection-based} view of online editing constraints and derive a Sherman--Morrison grounded closed-form recursive rule (Section \ref{sec:proj-principle}--Section \ref{sec:rls-write}). Based on the findings discussed in Section~\ref{sec:prelim}, we then instantiate this rule for online MLLM editing to mitigate \emph{cross-modal conflict} and \emph{inter-edit interference} under strict constant overhead (Section~\ref{sec:instantiation}).

\section{Methodology}
\label{sec:method}

\subsection{Unified Proximal Projection Principle}
\label{sec:proj-principle}
Existing MLLM editors are mostly adapted from \emph{locate-then-edit} or \emph{meta-learning} paradigms, both of which are fragile in the online multimodal regime: reliable localization is costly, while meta-training is expensive and sensitive to deployment-time shifts. 
We instead adopt a \emph{unified optimization} view: each online edit should (i) fit the current request (reliability/generality) while (ii) minimizing side effects on unrelated inputs (locality).
At step $t$, let \camera{$W_{t-1}$ denote the editable parameters} and $g_t=\nabla_{W_{t-1}}\mathcal{L}_{\text{edit}}$ be the edit gradient, \camera{where $\mathcal{L}_{\text{edit}}$ denotes the standard editing loss} (See Appendix~\ref{app:loss}). We compute the update via a single \emph{proximal projection}:
\begin{equation}
\small
\Delta W_t=\arg\min_{\Delta W}\ \|\Delta W+\eta g_t\|_F^2
+\mathrm{Tr}\!\left(\Delta W\,\camera{C_{t-1}}\,\Delta W^\top\right),
\label{eq:prox-proj-rgl}
\end{equation}
where the first term is a one-step descent surrogate for fitting the edit, and the quadratic form $C_{t-1}$ penalizes updates along directions that are important for locality.
\camera{The locality statistic $C_{t-1}$ is accumulated online from step~$1$ through step $t-1$, using a fixed-size \emph{step-only} context set $\mathcal{B}_{s}$ at each step $s$ (e.g., edit batch and a small locality sample at step $s$)}:
\begin{equation}
\small
\camera{
\small
C_{t-1}
= C_0 + 
\sum_{s=1}^{t-1}
\sum_{x\in\mathcal{B}_{s}}
k_s(x)k_s(x)^\top,
\quad C_0=\mathbf{0}.}
\end{equation}
where $k_s(x)$ is the FFN key induced by multimodal input $x$ at the edited module. 
This projection view is closely related to the quadratic target-matching formulation used by \emph{locate-then-edit} methods (Appendix~\ref{app:equiv-target-match}).

\subsection{Drift-Free Low-Rank Coordinate Subspace}
\label{sec:steady-chart}
Maintaining $C_{t-1}$ in the full space is infeasible online. We restrict edits to a low-rank write interface with a \emph{time-invariant} coordinate system. 
For each edited layer $l\in\mathcal{L}$:
\begin{equation}
\small
\widetilde{W}^{(l)} = W^{(l)} + \Delta W^{(l)},\quad
\Delta W^{(l)} = B^{(l)}A^{(l)},
\label{eq:lora}
\end{equation}
where $A^{(l)}\in\mathbb{R}^{r\times d}$ defines a fixed write coordinate system and $B^{(l)}\in\mathbb{R}^{d_{\text{out}}\times r}$ is updated online.

\vspace{-2mm}

\paragraph{Frozen orthogonal coordinates.}
We orthogonally initialize and \emph{freeze} $A^{(l)}$ such that its rows are orthonormal:
\begin{equation}
\small
A^{(l)}{A^{(l)}}^\top = I_r,\quad A^{(l)}~\text{frozen},\quad B^{(l)}~\text{updated (online)}.
\label{eq:freezeA}
\end{equation}
This fixed coordinate system enables us to carry the locality quadratic constraint $C_{t-1}$ in Eq.~\eqref{eq:prox-proj-rgl} into a stable low-dimensional statistic, which can be accumulated online.

\vspace{-2mm}

\paragraph{Steady-space locality features.}
Given buffer keys $k_{t-1}^{(l)}(x)\in\mathbb{R}^{d}$, we project them into the steady coordinates, 
\begin{equation}
z_{t-1}^{(l)}(x)=A^{(l)}k_{t-1}^{(l)}(x)\in\mathbb{R}^{r}.
\end{equation}
The full projected second-order statistic would be $\small\sum_{x\in\mathcal{B}_{t-1}^{(l)}} z_{t-1}^{(l)}(x)z_{t-1}^{(l)}(x)^\top$.
To keep constant overhead, we summarize the step context with a rank-one sketch obtained by masked pooling over the relevant token positions: 
$
\bar z_{t-1}^{(l)} \triangleq \text{Mean}\big(\{z_{t-1}^{(l)}(x)\mid x\in\mathcal{B}_{t-1}^{(l)}\}\big)\in\mathbb{R}^r.
\label{eq:z}
$
We then maintain a compact steady-space locality matrix:
\begin{equation}
\small
\camera{
S_{t-1}^{(l)}
=
S_0^{(l)}
+
\sum_{s=1}^{t-1}
\bar z_s^{(l)}(\bar z_s^{(l)})^\top,
\quad
S_0^{(l)}=\lambda I_r,\ \lambda>0 .}
\label{eq:S-def}
\end{equation}
where $\lambda$ provides numerical stability and sets the plasticity–locality trade-off. A formal derivation of the induced steady-space quadratic form is provided in Appendix~\ref{app:steady-map}.

% \vspace{-2mm}

% Here, $\lambda$ acts as an initial regularization constant that determines the trade-off between accommodating new edits and preserving the model's original knowledge (locality). A larger $\lambda$ ensures numerical stability and prevents drastic parameter shifts during the recursive update. A formal derivation of the induced steady-space quadratic form is provided in Appendix~\ref{app:steady-map}.

\subsection{Steady-Subspace Recursive Least Squares Update}
\label{sec:rls-write}
\camera{
We now instantiate Eq.~\eqref{eq:prox-proj-rgl} in the steady space with
$B^{(l)}$ as the editable variables. Let
$G_t^{(l)}=\nabla_{B^{(l)}_{t-1}}\mathcal{L}_{\text{edit}}$ be the corresponding
gradient at edit step $t$. The current per-layer update is computed using only
the statistics accumulated before step $t$:
\begin{equation}
\small
\arg\min_{\Delta B^{(l)}}\
\|\Delta B^{(l)}+\eta G_t^{(l)}\|_F^2
+ \mathrm{Tr}\!\Big(
\Delta B^{(l)}\,S_{t-1}^{(l)}\,(\Delta B^{(l)})^\top
\Big),
\label{eq:proj-obj}
\end{equation}
whose unique minimizer is the closed-form write:
\begin{equation}
\small
\Delta B_t^{(l)}
= -\eta\,G_t^{(l)}\big(I+S_{t-1}^{(l)}\big)^{-1}
\triangleq
-\eta\,G_t^{(l)}P_{t-1}^{(l)} .
\label{eq:closedform}
\end{equation}
Here, $P_{t-1}^{(l)} \triangleq (I+S_{t-1}^{(l)})^{-1}$ serves as the
steady-space preconditioner for the current edit at step $t$. Since $S_{t-1}^{(l)}$
accumulates outer products of previous steady features, $P_{t-1}^{(l)}$
suppresses updates on coordinates that have been repeatedly activated by past
edits, yielding an implicit de-entangling bias across edits.
After applying the current edit, the current step feature $\bar z_t^{(l)}$ is
efficiently inserted into the recursive statistic via the Sherman--Morrison
lemma, avoiding direct matrix inversion and yielding the preconditioner for the
next edit step $t+1$:
\vspace{-1mm}
\begin{equation}
\small
P_t^{(l)}
=
P_{t-1}^{(l)}
-
\frac{
P_{t-1}^{(l)}\bar z_t^{(l)}(\bar z_t^{(l)})^\top P_{t-1}^{(l)}
}{
1+(\bar z_t^{(l)})^\top P_{t-1}^{(l)}\bar z_t^{(l)}
},\quad
P_0^{(l)}=\frac{1}{1+\lambda}I_r .
\label{eq:sm-update}
\end{equation}
Thus, $P_{t-1}^{(l)}$ is used to compute $\Delta B_t^{(l)}$, while the updated
$P_t^{(l)}$ is cached and used only from step $t+1$ onward. Each active layer
stores only $P_t^{(l)}\in\mathbb{R}^{r\times r}$ and updates it in $O(r^2)$
time, yielding constant per-edit overhead. Derivations of
Eq.~\eqref{eq:closedform} and Eq.~\eqref{eq:sm-update} are in
Appendix~\ref{app:deriv}.
}

\begin{table*}[ht]
\centering
\footnotesize
\setlength{\tabcolsep}{2.8pt}
\renewcommand{\arraystretch}{1.02}
\caption{Online editing results on E-VQA and E-IC for BLIP2-OPT and LLaVA-v1.5 under different edit horizons. “Rel.”, “T/M-Gen.”, and “T/M-Loc.” abbreviate \emph{Reliability}, \emph{Generality}, and \emph{Locality} (for text/modal evaluations), respectively.
The subscript of each method (e.g., $_1$, $_{100}$) denotes the number of online edits performed. Rows shaded in light purple indicate \emph{parameter-modifying} methods.
Due to space limitations, complete results are provided in Appendix~\ref{app:rq1}.}

\label{tab:online_editing}
\begin{tabular}{>{\centering\arraybackslash}p{2.5em}| c |cccccc |cccccc}
\toprule
\multirow{2}{*}{\textbf{Model}} & \multirow{2}{*}{\textbf{Methods}} &
\multicolumn{6}{c|}{\textbf{E-VQA}} & \multicolumn{6}{c}{\textbf{E-IC}} \\
\cmidrule(lr){3-8} \cmidrule(lr){9-14}
& & Rel.$^\uparrow$ & T-Gen.$^\uparrow$ & M-Gen.$^\uparrow$ & T-Loc.$^\uparrow$ & M-Loc.$^\uparrow$ & Avg.$^\uparrow$ &
Rel.$^\uparrow$ & T-Gen.$^\uparrow$ & M-Gen.$^\uparrow$ & T-Loc.$^\uparrow$ & M-Loc.$^\uparrow$ & Avg.$^\uparrow$ \\
\midrule

% ===================== Model: BLIP2-OPT =====================
\multirow{16}{*}{\rotatebox[origin=c]{90}{\textbf{BLIP2-OPT}}}

& \cellcolor{lightgray}FT-L$_{1}$
  & \cellcolor{lightgray}100.00
  & \cellcolor{lightgray}100.00
  & \cellcolor{lightgray}60.00
  & \cellcolor{lightgray}94.74
  & \cellcolor{lightgray}100.00
  & \cellcolor{lightgray}90.95
  & \cellcolor{lightgray}96.77
  & \cellcolor{lightgray}95.02
  & \cellcolor{lightgray}90.72
  & \cellcolor{lightgray}90.05
  & \cellcolor{lightgray}68.27
  & \cellcolor{lightgray}88.16 \\

& \cellcolor{lightgray}FT-M$_{1}$
  & \cellcolor{lightgray}100.00
  & \cellcolor{lightgray}96.67
  & \cellcolor{lightgray}63.33
  & \cellcolor{lightgray}100.00
  & \cellcolor{lightgray}73.33
  & \cellcolor{lightgray}86.67
  & \cellcolor{lightgray}100.00
  & \cellcolor{lightgray}100.00
  & \cellcolor{lightgray}76.92
  & \cellcolor{lightgray}100.00
  & \cellcolor{lightgray}24.79
  & \cellcolor{lightgray}80.34 \\

& \cellcolor{lightgray}MEND$_{1}$
  & \cellcolor{lightgray}100.00
  & \cellcolor{lightgray}100.00
  & \cellcolor{lightgray}100.00
  & \cellcolor{lightgray}60.61
  & \cellcolor{lightgray}33.33
  & \cellcolor{lightgray}78.79
  & \cellcolor{lightgray}100.00
  & \cellcolor{lightgray}100.00
  & \cellcolor{lightgray}\textbf{100.00}
  & \cellcolor{lightgray}84.21
  & \cellcolor{lightgray}100.00
  & \cellcolor{lightgray}96.84 \\

& \cellcolor{lightgray}AlphaEdit$_{1}$
  & \cellcolor{lightgray}60.00
  & \cellcolor{lightgray}50.00
  & \cellcolor{lightgray}40.00
  & \cellcolor{lightgray}91.64
  & \cellcolor{lightgray}73.33
  & \cellcolor{lightgray}62.99
  & \cellcolor{lightgray}30.77
  & \cellcolor{lightgray}30.77
  & \cellcolor{lightgray}30.77
  & \cellcolor{lightgray}89.47
  & \cellcolor{lightgray}100.00
  & \cellcolor{lightgray}56.35 \\

& SERAC$_{1}$ & 100.00 & 100.00 & 100.00 & 89.47 & 80.00 & 93.89 & 97.38 & 95.52 & 86.11 & 100.00 & 63.82 & 88.57 \\
& IKE$_{1}$   & 100.00 & 100.00 & 100.00 & 52.63 & 13.33 & 73.19 & 100.00 & 100.00 & 100.00 & 63.16 & 10.00 & 74.63 \\
& LiveEdit$_{1}$  & 92.68 & 92.33 & 89.25 & \textbf{100.00} & 95.57 & 93.97 & 80.67 & 80.67 & 77.79 & 100.00 & 98.09 & 87.44 \\
& \cellcolor{ours}\textbf{M-ORE$_{1}$}
  & \cellcolor{ours}\textbf{100.00} & \cellcolor{ours}\textbf{100.00} & \cellcolor{ours}\textbf{100.00} & \cellcolor{ours}94.74 & \cellcolor{ours}\textbf{100.00} & \cellcolor{ours}\textbf{98.95}
  & \cellcolor{ours}\textbf{100.00} & \cellcolor{ours}\textbf{100.00} & \cellcolor{ours}93.75 & \cellcolor{ours}\textbf{100.00} & \cellcolor{ours}\textbf{100.00} & \cellcolor{ours}\textbf{98.75} \\
\addlinespace[2pt]

\cmidrule{2-14}

& \cellcolor{lightgray}FT-L$_{100}$
  & \cellcolor{lightgray}26.00
  & \cellcolor{lightgray}18.00
  & \cellcolor{lightgray}11.00
  & \cellcolor{lightgray}86.28
  & \cellcolor{lightgray}53.12
  & \cellcolor{lightgray}38.88
  & \cellcolor{lightgray}79.45
  & \cellcolor{lightgray}71.69
  & \cellcolor{lightgray}57.82
  & \cellcolor{lightgray}92.23
  & \cellcolor{lightgray}55.18
  & \cellcolor{lightgray}71.27 \\

& \cellcolor{lightgray}FT-M$_{100}$
  & \cellcolor{lightgray}58.40
  & \cellcolor{lightgray}50.85
  & \cellcolor{lightgray}43.69
  & \cellcolor{lightgray}\textbf{100.00}
  & \cellcolor{lightgray}46.85
  & \cellcolor{lightgray}59.96
  & \cellcolor{lightgray}67.18
  & \cellcolor{lightgray}62.17
  & \cellcolor{lightgray}51.63
  & \cellcolor{lightgray}\textbf{100.00}
  & \cellcolor{lightgray}9.81
  & \cellcolor{lightgray}58.16 \\

& \cellcolor{lightgray}MEND$_{100}$
  & \cellcolor{lightgray}1.00
  & \cellcolor{lightgray}1.00
  & \cellcolor{lightgray}1.00
  & \cellcolor{lightgray}90.42
  & \cellcolor{lightgray}77.20
  & \cellcolor{lightgray}34.12
  & \cellcolor{lightgray}0.00
  & \cellcolor{lightgray}0.00
  & \cellcolor{lightgray}0.00
  & \cellcolor{lightgray}46.96
  & \cellcolor{lightgray}54.05
  & \cellcolor{lightgray}20.20 \\

& \cellcolor{lightgray}AlphaEdit$_{100}$
  & \cellcolor{lightgray}36.83
  & \cellcolor{lightgray}28.50
  & \cellcolor{lightgray}26.78
  & \cellcolor{lightgray}86.35
  & \cellcolor{lightgray}69.97
  & \cellcolor{lightgray}49.69
  & \cellcolor{lightgray}27.94
  & \cellcolor{lightgray}28.57
  & \cellcolor{lightgray}26.69
  & \cellcolor{lightgray}90.52
  & \cellcolor{lightgray}78.83
  & \cellcolor{lightgray}50.51 \\

& SERAC$_{100}$ & 88.03 & 85.18 & 88.03 & 89.95 & 37.60 & 77.76 & 74.02 & 63.79 & 56.93 & 77.52 & 42.94 & 63.04 \\
& IKE$_{100}$   & 83.53 & 83.00 & 84.72 & 74.63 & 6.37 & 66.45 & 81.18 & 71.13 & \textbf{84.30} & 79.90 & 11.93 & 65.69 \\
& LiveEdit$_{100}$  & 91.83 & 91.16 & 85.02 & 99.31 & \textbf{92.78} & 92.02 & 74.63 & 74.33 & 61.95 & 96.77 & 95.34 & 80.60 \\
& \cellcolor{ours}\textbf{M-ORE$_{100}$}
  & \cellcolor{ours}\textbf{96.05} & \cellcolor{ours}\textbf{92.05} & \cellcolor{ours}\textbf{94.06} & \cellcolor{ours}91.77 & \cellcolor{ours}88.85 & \cellcolor{ours}\textbf{92.56}
  & \cellcolor{ours}\textbf{83.14} & \cellcolor{ours}\textbf{78.17} & \cellcolor{ours}60.08 & \cellcolor{ours}95.51 & \cellcolor{ours}\textbf{95.58} & \cellcolor{ours}\textbf{82.49} \\
\midrule
\midrule

% ===================== Model: LLaVA-v1.5 =====================
\multirow{16}{*}{\rotatebox[origin=c]{90}{\textbf{LLaVA-v1.5}}}

& \cellcolor{lightgray}FT-L$_{1}$
  & \cellcolor{lightgray}95.66
  & \cellcolor{lightgray}99.00
  & \cellcolor{lightgray}81.84
  & \cellcolor{lightgray}89.38
  & \cellcolor{lightgray}89.98
  & \cellcolor{lightgray}91.17
  & \cellcolor{lightgray}100.00
  & \cellcolor{lightgray}100.00
  & \cellcolor{lightgray}89.80
  & \cellcolor{lightgray}91.67
  & \cellcolor{lightgray}28.01
  & \cellcolor{lightgray}81.89 \\

& \cellcolor{lightgray}FT-M$_{1}$
  & \cellcolor{lightgray}95.00
  & \cellcolor{lightgray}95.00
  & \cellcolor{lightgray}79.67
  & \cellcolor{lightgray}100.00
  & \cellcolor{lightgray}85.83
  & \cellcolor{lightgray}91.10
  & \cellcolor{lightgray}100.00
  & \cellcolor{lightgray}100.00
  & \cellcolor{lightgray}76.47
  & \cellcolor{lightgray}100.00
  & \cellcolor{lightgray}26.11
  & \cellcolor{lightgray}80.52 \\

& \cellcolor{lightgray}MEND$_{1}$
  & \cellcolor{lightgray}95.53
  & \cellcolor{lightgray}95.53
  & \cellcolor{lightgray}83.79
  & \cellcolor{lightgray}74.82
  & \cellcolor{lightgray}59.65
  & \cellcolor{lightgray}81.86
  & \cellcolor{lightgray}96.81
  & \cellcolor{lightgray}97.65
  & \cellcolor{lightgray}\textbf{96.44}
  & \cellcolor{lightgray}94.74
  & \cellcolor{lightgray}100.00
  & \cellcolor{lightgray}97.13 \\

& \cellcolor{lightgray}AlphaEdit$_{1}$
  & \cellcolor{lightgray}72.57
  & \cellcolor{lightgray}72.57
  & \cellcolor{lightgray}70.67
  & \cellcolor{lightgray}88.04
  & \cellcolor{lightgray}96.67
  & \cellcolor{lightgray}80.10
  & \cellcolor{lightgray}47.06
  & \cellcolor{lightgray}47.06
  & \cellcolor{lightgray}52.94
  & \cellcolor{lightgray}100.00
  & \cellcolor{lightgray}100.00
  & \cellcolor{lightgray}69.41 \\

& SERAC$_{1}$ & 90.00 & 90.00 & 60.00 & 100.00 & 23.33 & 72.67 & 88.61 & 90.17 & 81.45 & 98.24 & 50.83 & 81.86 \\
& IKE$_{1}$   & 50.00 & 50.00 & 50.00 & 58.33 & 15.00 & 44.67 & 94.12 & 94.12 & 94.12 & 62.50 & 12.50 & 71.47 \\
& LiveEdit$_{1}$  & 93.36 & 93.67 & \textbf{87.91} & 100.00 & 100.00 & 94.99 & 82.33 & 82.33 & 80.67 & 100.00 & 100.00 &  89.07 \\
& \cellcolor{ours}\textbf{M-ORE$_{1}$}
  & \cellcolor{ours}\textbf{100.00} & \cellcolor{ours}\textbf{100.00} & \cellcolor{ours}85.12 & \cellcolor{ours}\textbf{100.00} & \cellcolor{ours}\textbf{100.00} & \cellcolor{ours}\textbf{97.02}
  & \cellcolor{ours}\textbf{100.00} & \cellcolor{ours}\textbf{100.00} & \cellcolor{ours}94.12 & \cellcolor{ours}\textbf{100.00} & \cellcolor{ours}\textbf{100.00} & \cellcolor{ours}\textbf{98.82} \\
\addlinespace[2pt]

\cmidrule{2-14}
& \cellcolor{lightgray}FT-L$_{100}$
  & \cellcolor{lightgray}74.55
  & \cellcolor{lightgray}66.59
  & \cellcolor{lightgray}66.14
  & \cellcolor{lightgray}84.15
  & \cellcolor{lightgray}65.71
  & \cellcolor{lightgray}71.43
  & \cellcolor{lightgray}86.00
  & \cellcolor{lightgray}82.11
  & \cellcolor{lightgray}78.15
  & \cellcolor{lightgray}77.13
  & \cellcolor{lightgray}11.33
  & \cellcolor{lightgray}66.94 \\

& \cellcolor{lightgray}FT-M$_{100}$
  & \cellcolor{lightgray}85.67
  & \cellcolor{lightgray}81.75
  & \cellcolor{lightgray}67.03
  & \cellcolor{lightgray}\textbf{100.00}
  & \cellcolor{lightgray}46.19
  & \cellcolor{lightgray}76.13
  & \cellcolor{lightgray}84.57
  & \cellcolor{lightgray}80.05
  & \cellcolor{lightgray}62.88
  & \cellcolor{lightgray}\textbf{100.00}
  & \cellcolor{lightgray}8.32
  & \cellcolor{lightgray}67.16 \\

& \cellcolor{lightgray}MEND$_{100}$
  & \cellcolor{lightgray}1.09
  & \cellcolor{lightgray}1.09
  & \cellcolor{lightgray}1.01
  & \cellcolor{lightgray}75.33
  & \cellcolor{lightgray}61.67
  & \cellcolor{lightgray}28.04
  & \cellcolor{lightgray}0.11
  & \cellcolor{lightgray}0.08
  & \cellcolor{lightgray}0.04
  & \cellcolor{lightgray}25.52
  & \cellcolor{lightgray}31.33
  & \cellcolor{lightgray}11.42 \\

& \cellcolor{lightgray}AlphaEdit$_{100}$
  & \cellcolor{lightgray}71.33
  & \cellcolor{lightgray}70.93
  & \cellcolor{lightgray}70.67
  & \cellcolor{lightgray}88.45
  & \cellcolor{lightgray}77.67
  & \cellcolor{lightgray}75.81
  & \cellcolor{lightgray}51.98
  & \cellcolor{lightgray}48.21
  & \cellcolor{lightgray}47.23
  & \cellcolor{lightgray}92.59
  & \cellcolor{lightgray}56.90
  & \cellcolor{lightgray}59.38 \\

& SERAC$_{100}$ & 87.71 & 85.03 & 67.13 & 92.27 & 20.83 & 70.59 & 73.38 & 71.00 & 59.72 & 72.88 & 25.85 & 60.57 \\
& IKE$_{100}$   & 38.87 & 35.85 & 39.40 & 46.22 & 11.24 & 34.32 & 78.78 & 77.37 & 78.07 & 53.86 & 12.88 & 60.19 \\
& LiveEdit$_{100}$  & 90.22 & 91.39 & 81.49 & 98.05 & \textbf{95.27} & 91.28 & 78.49 & 78.77 & 65.50 & 98.77 & \textbf{96.76} & 83.66 \\
& \cellcolor{ours}\textbf{M-ORE$_{100}$}
  & \cellcolor{ours}\textbf{97.43} & \cellcolor{ours}\textbf{94.30} & \cellcolor{ours}\textbf{85.40} & \cellcolor{ours}96.32 & \cellcolor{ours}91.90 & \cellcolor{ours}\textbf{93.07}
  & \cellcolor{ours}\textbf{94.64} & \cellcolor{ours}\textbf{93.27} & \cellcolor{ours}\textbf{78.66} & \cellcolor{ours}97.39 & \cellcolor{ours}89.71 & \cellcolor{ours}\textbf{90.73} \\
\bottomrule
\end{tabular}
\vspace{-2mm}
\end{table*}

\begin{table*}[t]
\centering
\footnotesize
\setlength{\tabcolsep}{10pt}
\renewcommand{\arraystretch}{1.0}
\caption{Ablation study of M-ORE design choices on E-IC task for LLaVA-v1.5. \textbf{$\Delta$} indicates the change in Average score.}
\label{tab:ablation_struct}
\begin{tabular}{l| c |cccccc | c}
\toprule
\multirow{2}{*}{\textbf{Model}} & \multirow{2}{*}{\textbf{Variant}} & \multicolumn{6}{c|}{\textbf{E-IC}} & \multirow{2}{*}{\textbf{$\Delta$}} \\
\cmidrule(lr){3-8}
& & Rel.$^\uparrow$ & T-Gen.$^\uparrow$ & M-Gen.$^\uparrow$ & T-Loc.$^\uparrow$ & M-Loc.$^\uparrow$ & Avg.$^\uparrow$ & \\
\midrule

% ===================== Model: LLaVA-v1.5 =====================
\multirow{6}{*}{\rotatebox[origin=c]{90}{\textbf{LLaVA-v1.5}}}
& \cellcolor{ours}\textbf{M-ORE}$_{1}$              
  & \cellcolor{ours}100.00
  & \cellcolor{ours}100.00
  & \cellcolor{ours}94.12
  & \cellcolor{ours}100.00
  & \cellcolor{ours}100.00
  & \cellcolor{ours}98.82
  & \cellcolor{ours} -- \\
& w/o freezing $A^{(l)}$$_{1}$         & 97.98 & 97.21 & 85.24 & 100.00 & 81.36 & 92.35 & \color{red}$-$6.47 \\
& w/o pooling$_{1}$                    & 100.00 & 100.00 & 96.67 & 100.00 & 100.00 & 99.33 & \color{blue}$+$0.51 \\
% \cmidrule{2-9}
% & \cellcolor{ours}\textbf{M-ORE}$_{10}$             
%   & \cellcolor{ours}97.25
%   & \cellcolor{ours}98.20
%   & \cellcolor{ours}85.51
%   & \cellcolor{ours}100.00
%   & \cellcolor{ours}90.24
%   & \cellcolor{ours}94.24
%   & \cellcolor{ours} -- \\
% & w/o freezing $A^{(l)}$$_{10}$        & 90.51 & 89.64 & 80.07 & 95.52 & 63.46 & 83.84 & \color{red}$-$10.40 \\
% & w/o pooling$_{10}$                   & 97.44 & 98.08 & 87.74 & 100.00 & 95.14 & 95.68 & \color{blue}$+$1.44 \\
\cmidrule{2-9}
& \cellcolor{ours}\textbf{M-ORE}$_{100}$          
  & \cellcolor{ours}94.64
  & \cellcolor{ours}93.27
  & \cellcolor{ours}78.66
  & \cellcolor{ours}97.39
  & \cellcolor{ours}89.71
  & \cellcolor{ours}90.73
  & \cellcolor{ours} -- \\
& w/o freezing $A^{(l)}$$_{100}$       & 83.70 & 82.92 & 71.25 & 80.86 & 44.43 & 72.63 & \color{red}$-$18.10 \\
& w/o pooling$_{100}$                  & 95.79 & 94.84 & 80.20 & 98.27 & 93.05 & 92.43 & \color{blue}$+$1.70 \\
\bottomrule
\end{tabular}
\vspace{-1mm}
\end{table*}

\subsection{Implications of the Steady-Space Recursion}
\label{sec:instantiation}

Steady-space RLS rule in Eq.~\eqref{eq:closedform}--Eq.~\eqref{eq:sm-update}
already implies the key design principles needed for online MLLM editing.

\vspace{-1mm}

\paragraph{How Does M-ORE Resolve \emph{Cross-Modal Conflict}?}
Cross-modal conflict arises when heterogeneous modules (visual projector vs.\ text layers) are forced to share statistics (Section~\ref{sec:cross-modal-conflict}). 
\camera{Our formulation avoids this issue by maintaining a separate preconditioner $P^{(l)}$ for each edited module, so that vision-side and text-side statistics are fully decoupled. High-energy visual features are therefore confined to the projector statistics, preventing contamination of text-side updates.}
When forming $\bar z^{(l)}$, we further apply modality-specific masks and pool only the relevant token positions, preventing mixed-token summaries inside MLLMs.

% Cross-modal conflict occurs when the visual projector and text layers, which have markedly different activation scales, are constrained to share the same second-order statistic (Section~\ref{sec:cross-modal-conflict}). Our formulation avoids this by design: we maintain a \emph{layer-wise} steady-space preconditioner $P_t^{(l)}$ for each edited module. Consequently, high-energy visual activations affect only the projector’s statistic $P_t^{(l_v)}$ and do not propagate into textual preconditioners. In addition, when forming $\bar z_t^{(l)}$, we use modality-specific masks and pool only the corresponding token positions, preventing mixed-token statistics within shared transformer blocks.

\vspace{-1mm}

\paragraph{How Does M-ORE Mitigate \emph{Inter-Edit Interference}?}
% \paragraph{Orthogonalized Recursive in a Drift-Free Subspace.}
Inter-edit interference is driven by repeatedly reusing a shared \emph{edit core} (Section~\ref{sec:inter-edit-interference}). 
In our fixed orthogonal write coordinates (Eq.~\eqref{eq:freezeA}), the accumulation $S_{t-1}^{(l)}=\lambda I_r+\sum_{i\le {t-1}}\bar z_i^{(l)}(\bar z_i^{(l)})^\top$ increases the penalty on frequently used coordinates, and the resulting preconditioner $P_{t-1}^{(l)}$ adaptively suppresses writes into those occupied coordinates. 
Consequently, $\Delta B_t^{(l)}=-\eta G_t^{(l)}P_{t-1}^{(l)}$ discourages persistent reuse of the same edit subspace, reducing long-horizon entanglement with constant overhead.

\section{Experiments}
\label{sec:experiments}
We conduct experiments to address the following questions:
\begin{itemize}[leftmargin=*, labelsep=0.5em, noitemsep,nolistsep]
    \item \textbf{RQ1:} 
    How does M-ORE compare with baselines in online MLLM editing, particularly in alleviating \emph{cross-modal conflict} and \emph{inter-edit interference}?
    \item \textbf{RQ2:} 
    Does M-ORE preserve the edited model's general capabilities on standard generalization evaluations?
    \item \textbf{RQ3:} What are the time and space complexities of M-ORE for each online edit compared to the baselines? Concretely, how does M-ORE achieve a better quality–efficiency trade-off under the same settings?
    \item \textbf{RQ4:} Can M-ORE effectively prevent shifts in the distribution of hidden representations after editing?
\end{itemize}

\subsection{Experiment Setup}
We briefly introduce the datasets, metrics, backbones and baselines, \camera{with full experimental setup and orthogonal-basis implementation details in Appendices~\ref{app:setup} and~\ref{sec:implementation}}.

\vspace{-2mm}

\paragraph{Datasets \& Metrics.}
Following~\cite{cheng2023edit}, we evaluate online editing on E-VQA (Editing VQA) and E-IC (Editing Image Caption), where E-IC requires finer-grained visual grounding.
We report \emph{Reliability}, \emph{Generality}, and \emph{Locality} (Section~\ref{sec:prelim}), and further decompose Generality/Locality into text- and image-conditioned evaluations.

\vspace{-2mm}

\paragraph{MLLM Backbones \& Baseline Editors.}
We use two representative MLLM backbones with distinct architectures and scales: BLIP2-OPT (2.7B)~\cite{DBLP:conf/icml/0008LSH23} and LLaVA-v1.5 (7B)~\cite{DBLP:conf/cvpr/LiuLLL24}.
Since dedicated online MLLM editors remain limited, we follow MMEdit-style evaluation~\cite{cheng2023edit} and adapt widely-used LLM editors for comparison.
We group baselines into \emph{parameter-modifying} methods (FT-L/FT-M~\cite{cheng2023edit}, MEND~\cite{DBLP:conf/iclr/MitchellLBFM22}, AlphaEdit~\cite{fangalphaedit}) and \emph{parameter-preserving} methods (IKE~\cite{DBLP:conf/emnlp/ZhengLDFWXC23}, SERAC~\cite{mitchell2022memory}, LiveEdit~\cite{chen2025lifelong}).
% For comprehensive evaluation, We select diverse VLLM backbones spanning architectures and parameter scales, including BLIP2-OPT (2.7B)~\cite{DBLP:conf/icml/0008LSH23} and LLaVA-v1.5 (7B)~\cite{DBLP:conf/cvpr/LiuLLL24}.
% % and MiniGPT-4 (7B)~\cite{DBLP:conf/iclr/Zhu0SLE24}
% Given the lack of effective online multimodal editors, we follow~\cite{cheng2023edit} and align with the setup in Section~\ref{sec:prelim} to enable comprehensive comparisons. Specifically, we adapt LLM-based editors to MLLMs and categorize them into two groups: \textit{parameter-modifying} and \textit{parameter-preserving}. The former includes FT-L~\cite{cheng2023edit}, FT-M~\cite{cheng2023edit}, MEND~\cite{DBLP:conf/iclr/MitchellLBFM22}, and AlphaEdit~\cite{fangalphaedit}, while the latter covers IKE~\cite{DBLP:conf/emnlp/ZhengLDFWXC23}, SERAC~\cite{mitchell2022memory}, and LiveEdit~\cite{chen2025lifelong}.

\begin{table}[t]
\centering
\small
\caption{Effect of different write-space basis initializations on E-IC with LLaVA-v1.5. We compare Gaussian, Xavier, data-driven, and the proposed orthogonal initialization under short- and long-horizon online editing. The data-driven basis is initialized using activation statistics collected from the training set}
\label{tab:basis_init}
\resizebox{\linewidth}{!}{
\begin{tabular}{l|c c c c c}
\toprule
\textbf{Basis Choice} & Rel. & T-Gen. & M-Gen. & T-Loc. & M-Loc. \\
\midrule
Gaussian$_{1}$    & 76.47 & 76.47 & 64.71 & 100.00 & 100.00 \\
Xavier$_{1}$      &  5.88 &  5.88 &  5.88 & 100.00 & 100.00 \\
Data-driven$_{1}$ & 100.00 & 100.00 & 94.12 & 91.67 & 100.00 \\
\cellcolor{ours}\textbf{Orthogonal$_{1}$}
    & \cellcolor{ours}\textbf{100.00}
    & \cellcolor{ours}\textbf{100.00}
    & \cellcolor{ours}\textbf{94.12}
    & \cellcolor{ours}\textbf{100.00}
    & \cellcolor{ours}\textbf{100.00} \\
\midrule
Gaussian$_{100}$    & 71.43 & 70.76 & 67.04 & 95.71 & 70.76 \\
Xavier$_{100}$      &  8.94 &  8.85 &  9.13 & 96.02 & 85.50 \\
Data-driven$_{100}$ & 90.07 & 85.79 & 72.29 & 86.59 & 83.02 \\
\cellcolor{ours}\textbf{Orthogonal$_{100}$}
    & \cellcolor{ours}\textbf{94.64}
    & \cellcolor{ours}\textbf{93.27}
    & \cellcolor{ours}\textbf{78.66}
    & \cellcolor{ours}\textbf{97.39}
    & \cellcolor{ours}\textbf{89.71} \\
\bottomrule
\end{tabular}
}
\vspace{-1mm}
\end{table}

\begin{figure}[t]
    \centering
    \begin{subfigure}[t]{0.495\columnwidth}
        \centering
        \includegraphics[width=\linewidth]{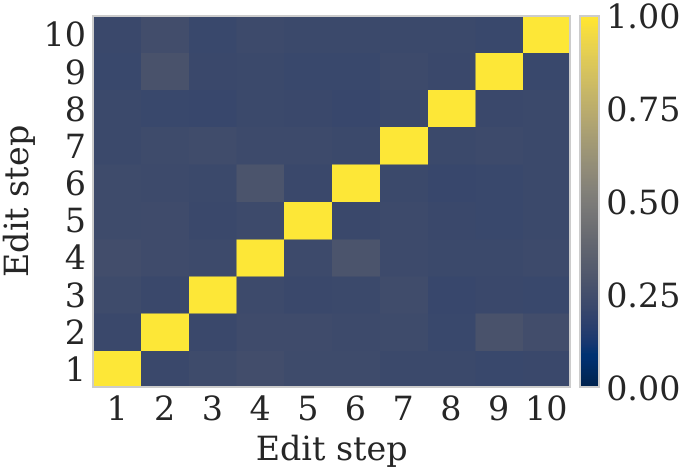}
    \end{subfigure}\hfill
    \begin{subfigure}[t]{0.495\columnwidth}
        \centering
        \includegraphics[width=\linewidth]{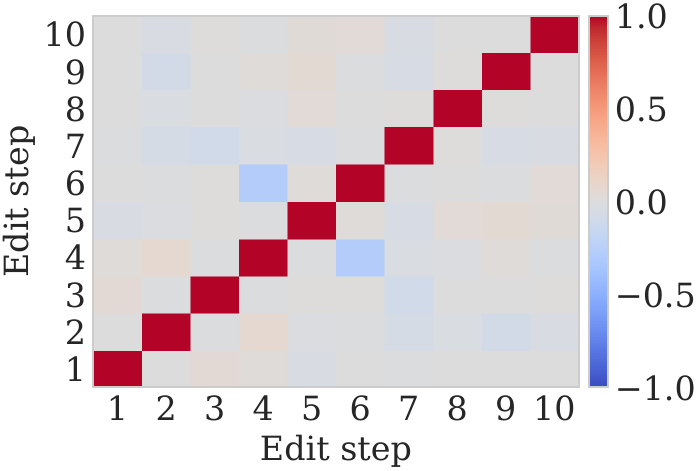}
    \end{subfigure}
    \caption{Inter-edit interference statistics of M-ORE on BLIP2-OPT over 10 online sequential edits.}
        \label{fig:inter-edit-more}
        \vspace{-2mm}
\end{figure}

\begin{figure*}[t]
    \centering
    % --- 第一行 ---
    \begin{subfigure}[t]{0.33\textwidth}
        \centering
        \includegraphics[width=\linewidth]{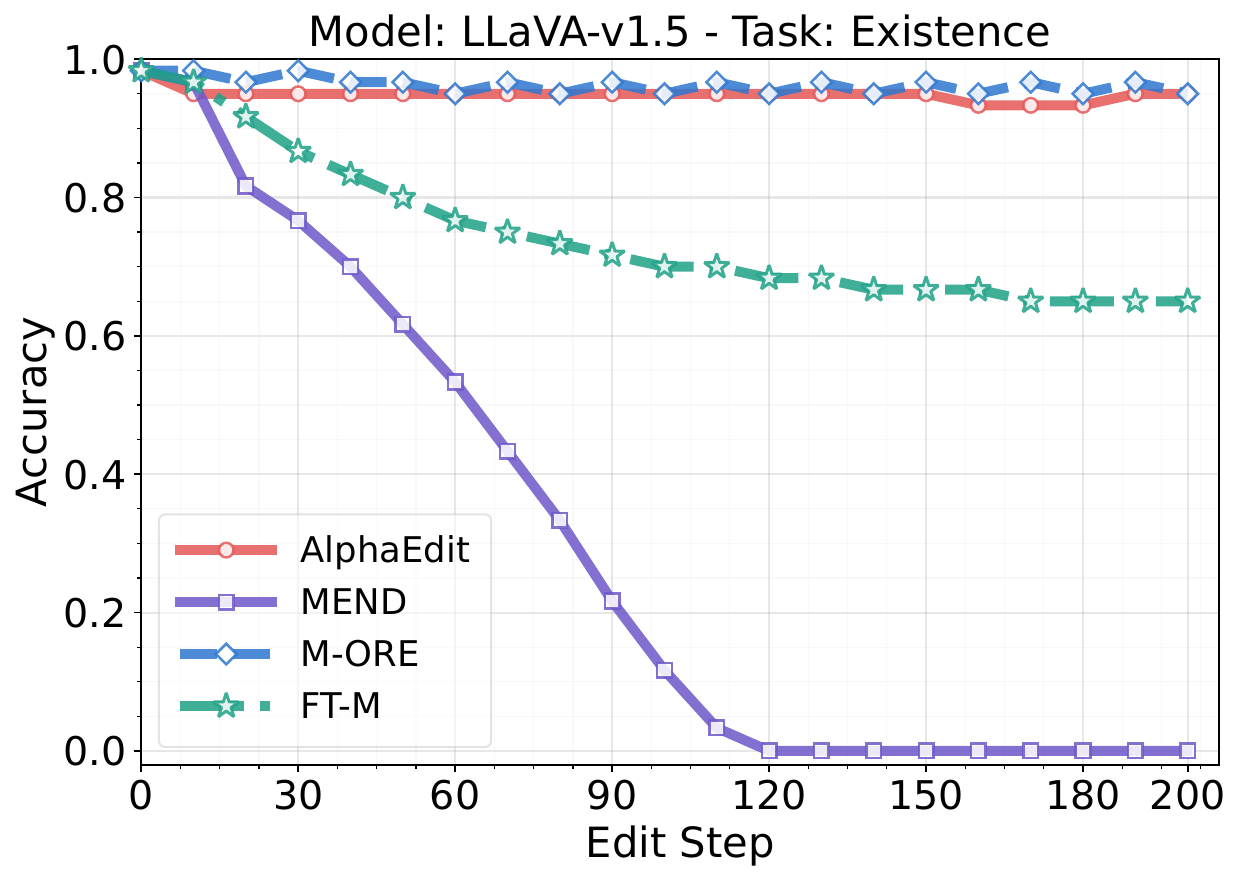}
        % \caption{Existence}
    \end{subfigure}\hfill
    \begin{subfigure}[t]{0.33\textwidth}
        \centering
        \includegraphics[width=\linewidth]{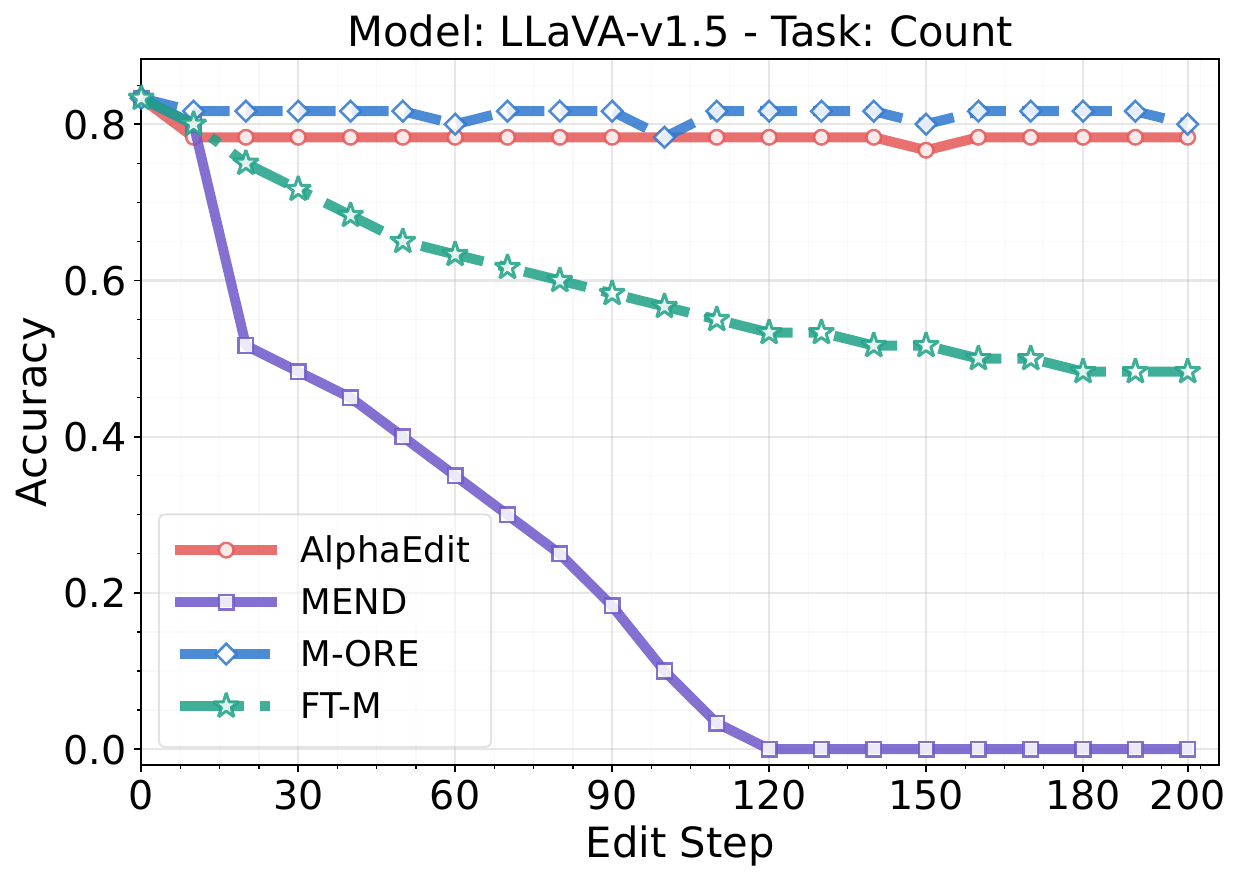}
        % \caption{Count}
    \end{subfigure}\hfill
    \begin{subfigure}[t]{0.33\textwidth}
        \centering
        \includegraphics[width=\linewidth]{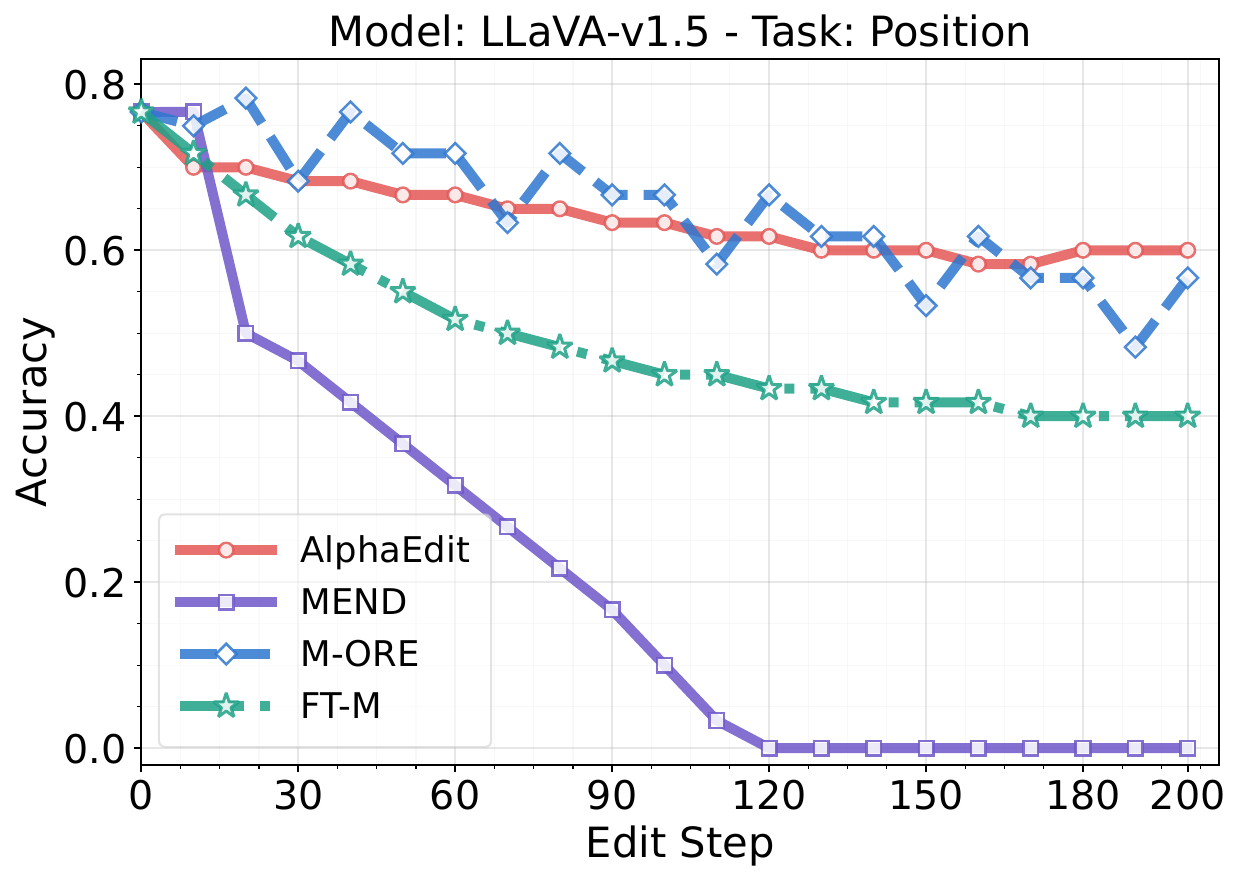}
        % \caption{Position}
    \end{subfigure}
    
    % \vspace{0.3cm} %
    
    % --- 第二行 ---
    \begin{subfigure}[t]{0.33\textwidth}
        \centering
        \includegraphics[width=\linewidth]{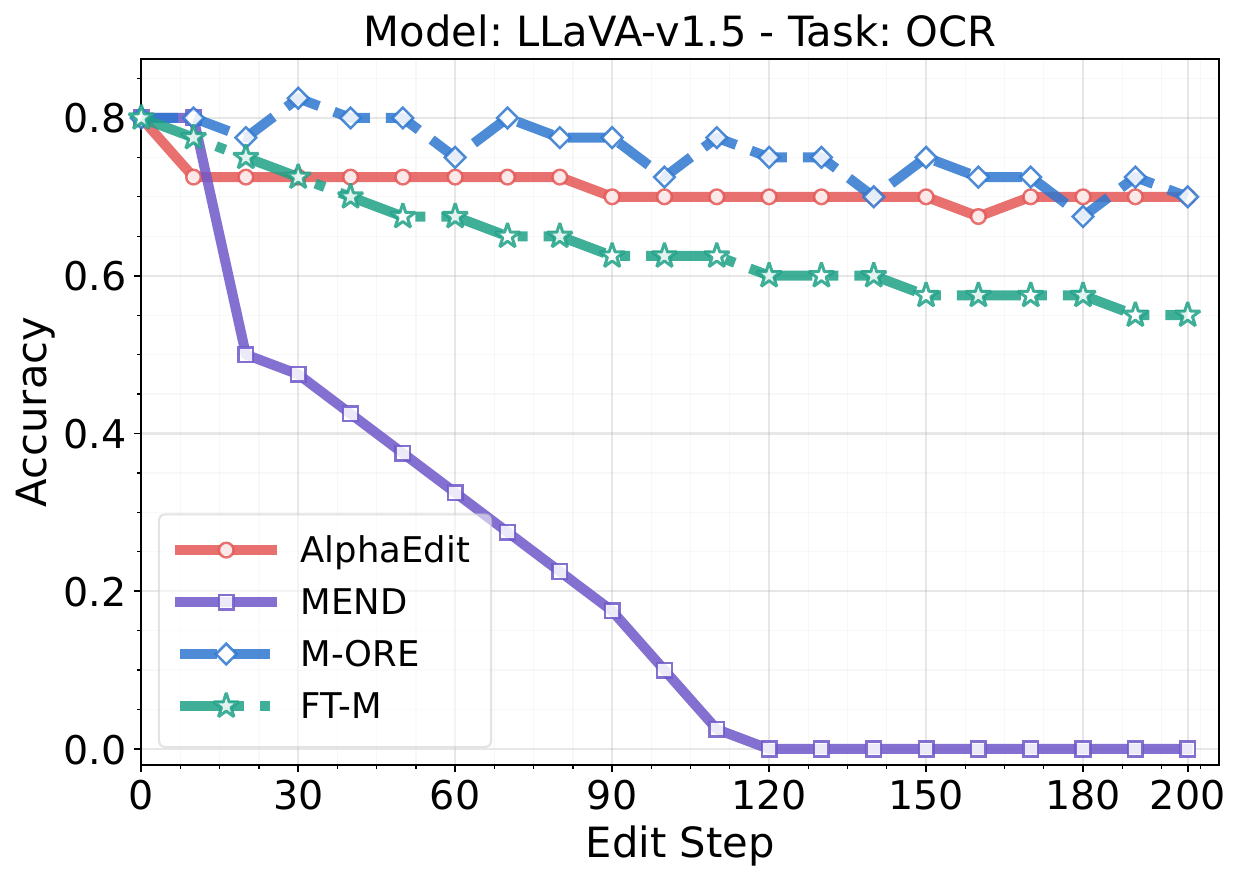}
        % \caption{OCR}
    \end{subfigure}\hfill
    \begin{subfigure}[t]{0.33\textwidth}
        \centering
        \includegraphics[width=\linewidth]{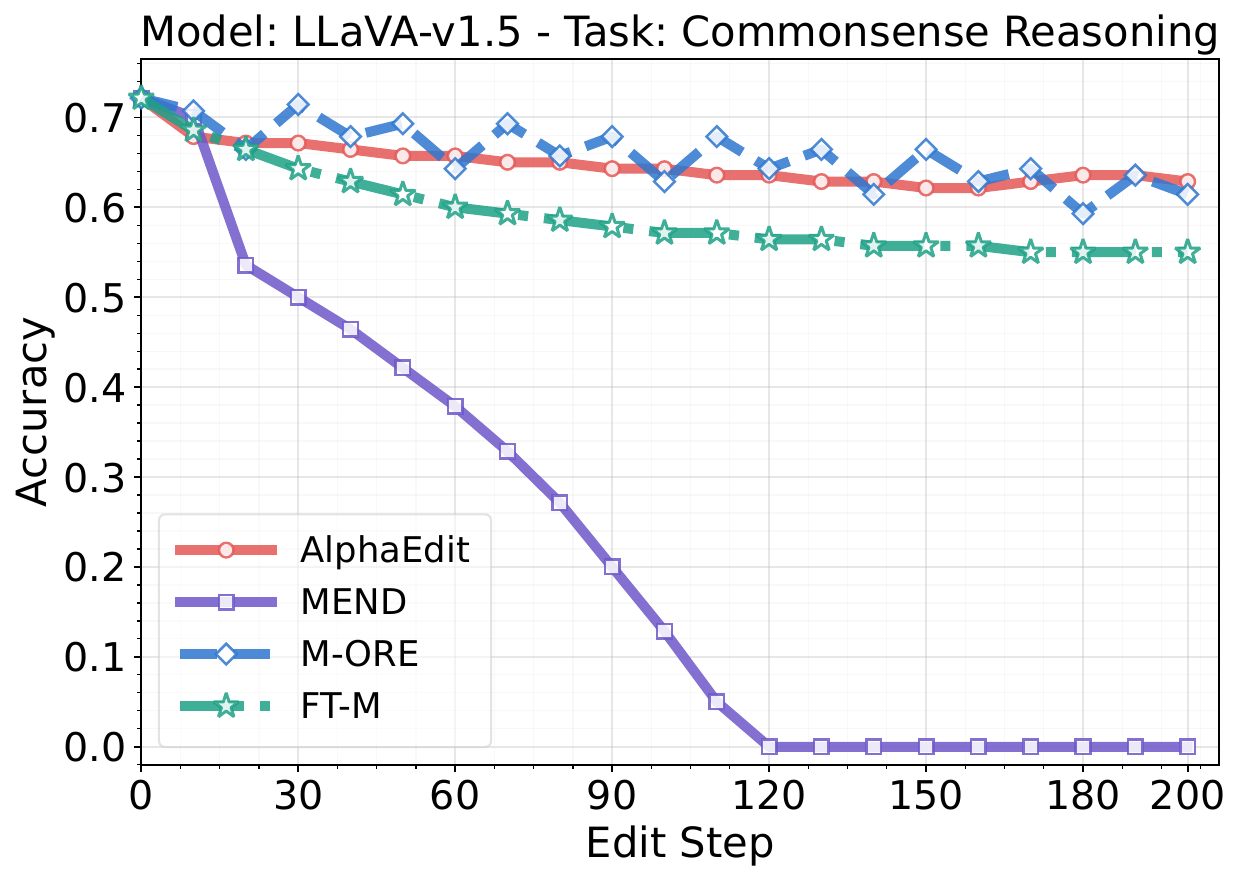}
        % \caption{Commonsense}
    \end{subfigure}\hfill
    \begin{subfigure}[t]{0.33\textwidth}
        \centering
        \includegraphics[width=\linewidth]{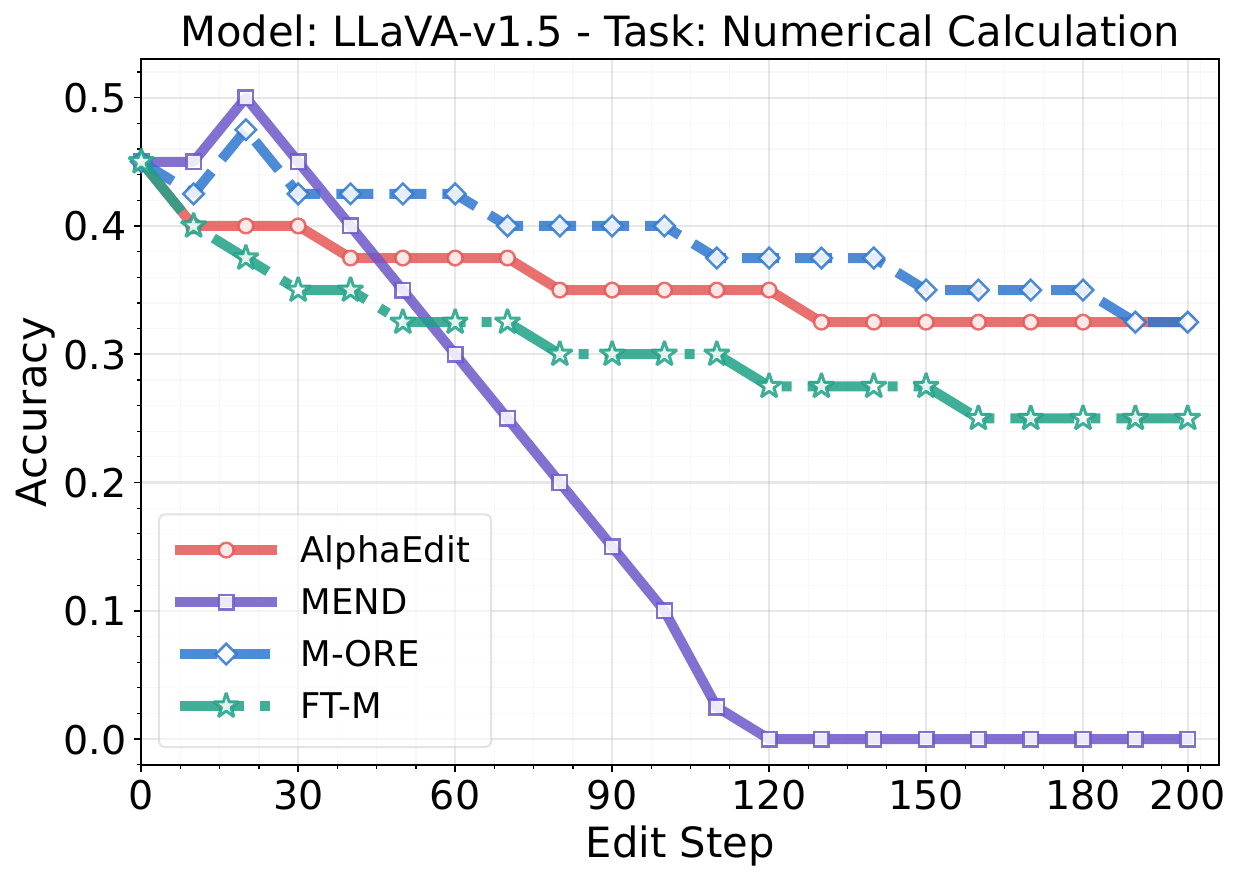}
        % \caption{Numerical}
    \end{subfigure}
    \caption{accuracy of the post-edited LLaVA-v1.5 (7B) on six tasks used for MLLM general capability testing.}
    \label{fig:mme_six}
    \vspace{-3mm}
\end{figure*}

\subsection{Comprehensive Performance Comparison (RQ1)}

\paragraph{Main Results.}
Table~\ref{tab:online_editing} summarizes online editing results on multiple MLLM backbones, where long edit streams expose long-horizon robustness beyond single-edit success. Overall, existing baselines exhibit two recurring failure patterns. \emph{Parameter-modifying} methods, such as FT variants and MEND, can achieve strong single-edit reliability but tend to accumulate drift under sequential edits, leading to severe degradation at $t{=}100$. For example, MEND collapses to nearly zero reliability and generality on both backbones after long-horizon editing. AlphaEdit, while effective for text-only LLMs, remains brittle in MLLMs, likely due to unreliable multimodal localization and second-order statistics under modality heterogeneity. \emph{Parameter-preserving} methods, such as SERAC, IKE, and LiveEdit, better avoid direct weight drift, but still struggle to maintain stable multimodal behavior as the edit horizon grows.

In contrast, M-ORE consistently maintains a stronger stability-plasticity trade-off under long edit horizons. Under $t{=}100$ edits, M-ORE$_{100}$ achieves the best average performance across both backbones and both tasks. On BLIP2-OPT, M-ORE$_{100}$ improves over LiveEdit$_{100}$ by $+0.54$ average points on E-VQA and $+1.89$ average points on E-IC. In particular, it improves E-IC reliability from $74.63$ to $83.14$ and text generality from $74.33$ to $78.17$, while keeping locality comparable. On LLaVA-v1.5, the gains are more pronounced: M-ORE$_{100}$ improves the average score over LiveEdit$_{100}$ by $+1.79$ points on E-VQA and $+7.07$ points on E-IC. For E-IC, it substantially improves reliability from $78.49$ to $94.64$, text generality from $78.77$ to $93.27$, and multimodal generality from $65.50$ to $78.66$, demonstrating stronger long-horizon edit retention and generalization.
Moreover, Figure~\ref{fig:inter-edit-more} shows no late-stage ``edit-core'' collapse for M-ORE: cross-step top-$k$ overlaps remain low and cosine couplings stay near zero, indicating more disentangled sequential updates and reduced long-range drift/forgetting. \textbf{For additional experimental results, including \underline{sensitivity analyses}, \underline{extension to vision-side editing}, and \underline{case studies}, please refer to Appendix~\ref{app:ablation}, ~\ref{app:vision_side_extension}, and~\ref{app:case_study}}.

\vspace{-2.0mm}

\paragraph{Ablation Studies.}
We conduct ablation experiments to validate M-ORE's key design choices, including the drift-free orthogonal write space and the constant-cost sketch/pooling scheme for estimating locality statistics. Specifically, we consider the following variants:
\begin{itemize}[leftmargin=*, labelsep=0.5em, noitemsep,nolistsep]
    \item \textbf{w/o freezing $A^{(l)}$}: This variant updates the write basis $A^{(l)}$ online jointly with $B^{(l)}$, rather than keeping $A^{(l)}$ fixed as in Eq.~\eqref{eq:freezeA}. It removes the drift-free coordinate constraint and allows the low-rank write space to change over the edit stream.
    \item \textbf{w/o pooling}: This variant replaces the rank-one sketch in Eq.~\eqref{eq:S-def} with the full projected statistic {$ \sum_{x\in\mathcal{B}_{t-1}^{(l)}} z_{t-1}^{(l)}(x)z_{t-1}^{(l)}(x)^\top$}, and then updates $P_t^{(l)}$.
\end{itemize}
Table~\ref{tab:ablation_struct} shows that \emph{freezing} the orthogonal basis $A^{(l)}$ is critical for long-horizon stability: removing it consistently reduces the average performance, and the degradation grows with the edit horizon, most notably at $t{=}100$. 
This indicates that a stable coordinate system is necessary to prevent sequential drift and mitigate accumulated inter-edit interference. 
In contrast, removing masked pooling when forming $\bar z_t^{(l)}$ does not degrade performance and can slightly improve the average score, suggesting that our rank-one sketch is a robust constant-overhead approximation to the full projected statistic rather than a fragile heuristic.

\vspace{-2mm}

\paragraph{Basis Initialization Analysis.}
To examine the robustness of the steady-subspace design more explicitly, we compare different initializations of the fixed basis $A^{(l)}$. As shown in Table~\ref{tab:basis_init}, random initializations such as Gaussian and Xavier are substantially less stable, especially under long edit horizons. Although the data-driven basis improves over random initialization, it still falls behind the orthogonal basis. In contrast, the proposed orthogonal initialization achieves the best overall performance at both $t{=}1$ and $t{=}100$, demonstrating that an orthonormal and drift-free coordinate system provides a more reliable low-rank write space for editing. 
We further test five random seeds for the orthogonal basis. As shown in Table~\ref{tab:seed_sensitivity}, M-ORE remains highly stable across seeds, with average absolute deviations of only $0.07$ on E-VQA and $0.13$ on E-IC relative to the default seed 42. 

\begin{table}[t]
\centering
\small
\caption{Seed sensitivity of the orthogonal basis initialization on LLaVA-v1.5 with $t{=}100$ online edits.}
\label{tab:seed_sensitivity}
\setlength{\tabcolsep}{7pt}
\begin{tabular}{l|cc|c|cc}
\toprule
\textbf{Metric} & 40 & 41 & \cellcolor{ours}\textbf{42} & 43 & 44 \\
\midrule
E-VQA Avg. 
& 93.16 & 93.11 & \cellcolor{ours}93.07 & 93.11 & 93.16 \\
E-IC Avg. 
& 90.40 & 90.87 & \cellcolor{ours}90.73 & 90.87 & 90.69 \\
\bottomrule
\end{tabular}
\vspace{-4mm}
\end{table}

\subsection{General Capability Evaluation (RQ2)}
To assess whether editing preserves general-purpose multimodal abilities, we evaluate the post-edited LLaVA-v1.5 on representative categories from the MME benchmark~\cite{mme}. Since MME contains 14 categories, we report six representative ones here and defer the remaining results to Appendix~\ref{app:rq2}. The selected categories include:
\begin{itemize}[noitemsep,nolistsep,leftmargin=*, labelsep=0.4em]
\item \textbf{Existence}: tests whether a queried object/concept is present in the image.
\item \textbf{Count}: evaluates counting by verifying the queried number of target objects.
\item \textbf{Position}: measures spatial understanding of object locations and relative relations.
\item \textbf{OCR}: assesses text recognition and grounding for image-based questions.
\item \textbf{Commonsense Reasoning}: probes image-grounded commonsense inference beyond literal perception.
\item \textbf{Numerical Calculation}: evaluates image-grounded arithmetic based on numbers/formulas in the image.
\end{itemize}
Figure~\ref{fig:mme_six} shows that M-ORE largely preserves general capabilities after editing, with stability comparable to the unedited model and AlphaEdit. The advantage is most evident on reasoning-heavy tasks (e.g., OCR and Numerical Calculation), where M-ORE avoids the progressive degradation observed in other baselines. In contrast, parameter-modifying baselines suffer clear catastrophic forgetting under long horizons: MEND drops to near-zero accuracy on Existence/Count after 100 edits, and FT-M exhibits a steady decline across multiple categories.

% As illustrated in Figure~\ref{fig:mme_six}, the performance evolution across six tasks reveals that M-ORE robustly preserves the general capability of MLLMs, maintaining stability comparable to the unedited model and AlphaEdit. Particularly in reasoning-intensive tasks like OCR and Numerical Calculation, M-ORE avoids performance decay, effectively disentangling edited knowledge from core reasoning abilities. Conversely, baseline methods exhibit severe catastrophic forgetting; MEND’s performance collapses to near zero in Existence and Count tasks after 100 steps, while FT-M shows continuous degradation. This stark contrast underscores the difficulty conventional methods face in balancing editing plasticity with the preservation of the model’s pre-trained distribution.

\begin{table}[t]
\footnotesize
\centering
\caption{Per-edit computational and memory complexities and their dependence on edit stream length $t$. Time/edit accounts for editor-specific updates and statistic maintenance, excluding the shared forward/backward cost for computing edit gradients.
Here $r\ll d$, $|\mathcal{L}|$ denotes the number of edited layers, and $p_{\text{mem}}$ denotes the size of an edit-specific stored item. We do not include meta-learning editors since they require an additional training stage.} 
% and their cost is dominated by meta-training.}
\label{tab:complexity}
\setlength{\tabcolsep}{3.5pt}
\begin{tabular}{l|c|c|c}
\toprule
\textbf{Method} & \textbf{Time / edit} & \textbf{Memory} & $\boldsymbol{t}$\textbf{-dep.} \\
\midrule
\textbf{M-ORE (ours)} &
$O\!\big(|\mathcal{L}|\,d_{\text{out}}r^2\big)$ &
$O\!\big(|\mathcal{L}|\,d_{\text{out}}r\big)$ &
$O(1)$ \\
\midrule
Locate-then-edit &
$O(d^3)$ or $O(d^2)$ &
$O(d^2)$ &
$O(1)$ \\
Null-space constraint &
$O(d\,t^2)$ &
$O(d\,t)$ &
$\uparrow$ \\
Parameter-preserving &
$O(t)$ or $O(\log t)$ &
$O(t\,p_{\text{mem}})$ &
$\uparrow$ \\
Naive finetuning &
$O\!\big(|\mathcal{L}|\,d_{\text{out}}r\big)$ &
$O\!\big(|\mathcal{L}|\,d_{\text{out}}r\big)$ &
$O(1)$ \\
\bottomrule
\end{tabular}
\vspace{-1mm}
\end{table}

\begin{figure}[t]
    \centering
    \includegraphics[width=1.0\linewidth]{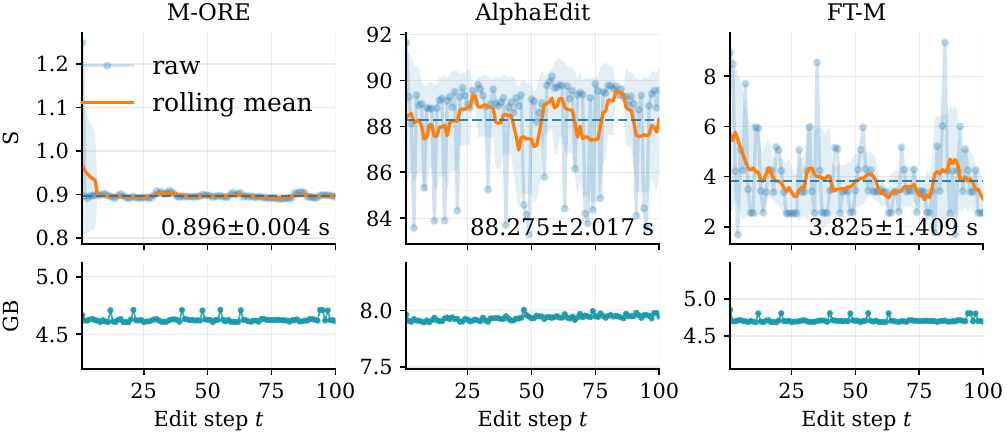}
    \caption{Efficiency metrics of editors on LLaVA-v1.5 over the online edit sequence $t$. 
\textbf{Top:} Edit latency (raw and rolling mean) . 
\textbf{Bottom:} Incremental peak memory ($\Delta$ peak mem).}
    \label{fig:scaling}
    \vspace{-4mm}
\end{figure}

\subsection{Online Editing Efficiency Analysis (RQ3)}
We report the \emph{editor-specific} overhead per online edit.
Let $r\ll d$ be the steady-subspace rank, $d$ / $d_{\text{out}}$ the FFN key/input and output dimensions, and $\mathcal{L}$ the edited layers.
With a constant-size online buffer, M-ORE has 
$T_{\text{M-ORE}} \approx O\!\big(|\mathcal{L}|\,d_{\text{out}}r^2\big)$, $M_{\text{M-ORE}}
\approx O\!\big(|\mathcal{L}|\,d_{\text{out}}r\big)$. Both of them are constant with respect to the edit stream length~$t$. 

Table~\ref{tab:complexity} summarizes per-edit time/memory complexities of representative editors, \camera{with derivations provided in Appendix~\ref{app:complexity}}. M-ORE achieves \emph{constant} per-edit time and memory without relying on dense second-order solvers or edit-growing storage, leading to a favorable long-horizon quality-efficiency trade-off. In contrast, while naive finetuning is cheaper per edit, it fails to preserve multimodal locality under sequential edits (Table~\ref{tab:online_editing}).
Beyond the complexity analysis, we examine the \emph{empirical scaling} of editor-specific overhead along the edit stream.
As shown in Figure~\ref{fig:scaling}, M-ORE incurs only a short warmup at the first edit (one-off initialization), after which the per-edit latency quickly plateaus and stays stable with increasing $t$, matching the predicted $O(1)$ time.
Similarly, $\Delta$peak mem remains flat without monotonic growth, indicating bounded editor state under a constant-size buffer.
% Overall, the scaling curves corroborate that M-ORE avoids edit-growing constraints and storage, enabling a stable long-horizon quality-efficiency trade-off (Table~\ref{tab:complexity}).

\begin{figure}[t]
\centering
    \begin{subfigure}[t]{0.24\textwidth}
    \centering
    \includegraphics[width=\linewidth]{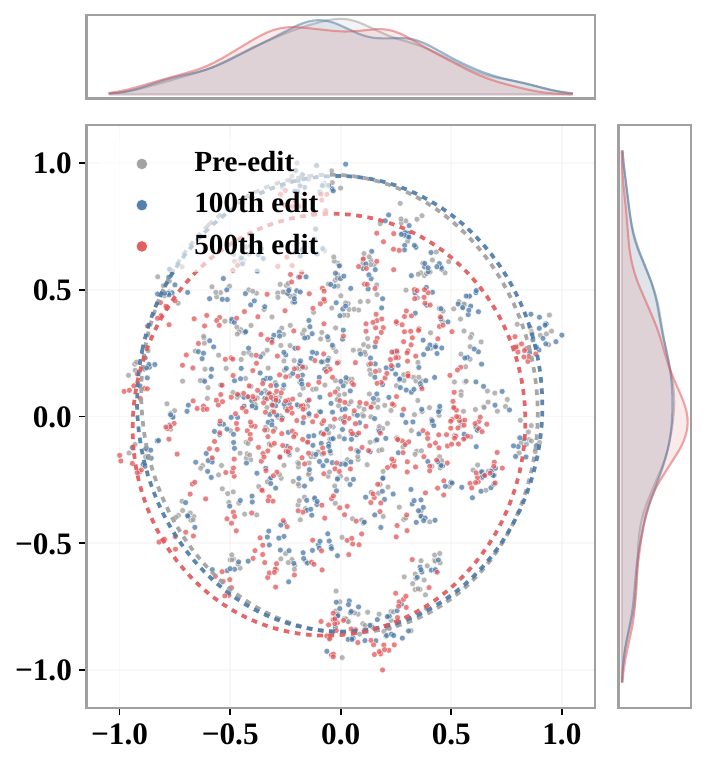}
    \subcaption{M-ORE}
    \end{subfigure}\hfill
    \begin{subfigure}[t]{0.24\textwidth}
    \centering
    \includegraphics[width=\linewidth]{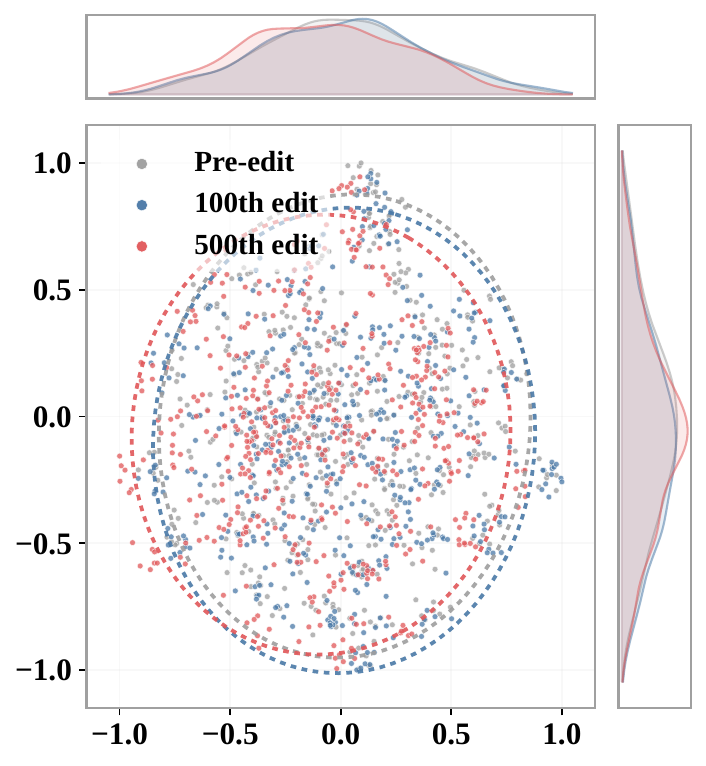}
        \subcaption{AlphaEdit}
    \end{subfigure}
\caption{Sequential visualization of hidden-state shifts on LLaVA across 500 consecutive locality samples. }
\label{fig:rq4_drift}
\vspace{-4mm}
\end{figure}

\subsection{Representation Shift Analysis (RQ4)}
We evaluate overfitting by measuring the hidden-state distribution shift on locality samples after sequential edits. Since these samples are unrelated to the edited requests, an effective editor should preserve their representations while incorporating new knowledge.
As shown in Figure~\ref{fig:rq4_drift}, M-ORE keeps the post-edit distributions after the 100th and 500th edits close to the pre-edit distribution, with substantial overlap in both the scatter plots and marginal densities. This indicates that M-ORE does not globally distort the representation space under long edit horizons. AlphaEdit also shows comparable representation stability, but M-ORE achieves this while maintaining stronger long-horizon editing performance in Table~\ref{tab:online_editing}. 
% These results suggest that M-ORE effectively mitigates long-horizon overfitting while preserving locality on unrelated inputs.

\section{Conclusion}

We studied online editing for multimodal large language models and identified two key obstacles that limit existing editors in practice: \emph{cross-modal conflict} induced by modality-dependent scale mismatch, and \emph{inter-edit interference} caused by long-horizon entanglement under sequential writes.
To address both under strict efficiency constraints, we proposed M-ORE, a modality-decoupled online recursive editor derived from a unified proximal-projection view.
M-ORE maintains module-wise locality statistics for the text stack and the visual projector, and performs continual updates in a fixed orthogonal low-rank edit subspace via a Sherman-Morrison grounded closed-form recursion, enabling constant per-edit compute and memory.
Extensive experiments on representative MLLM backbones and lifelong multimodal editing benchmarks show that M-ORE consistently improves reliability, generality, and locality while preserving general capabilities, achieving a favorable long-horizon quality-efficiency trade-off.

\section*{Impact Statement}
This work aims to improve the reliability and maintainability of multimodal large language models by enabling efficient, online updates under strict compute and memory budgets.
A practical benefit is that outdated or incorrect multimodal knowledge (e.g., factual errors grounded in images or instructions) can be corrected without expensive retraining, potentially reducing harmful misinformation and improving downstream system robustness.

As with all model editing techniques, the same capability could be misused to inject undesired or misleading behaviors into deployed models.
We therefore emphasize the importance of controlled access, auditing, and provenance tracking for edits, as well as evaluating edits under both capability and safety criteria.
Our method is designed to preserve pre-edit behaviors via locality constraints, but it does not replace broader safety measures such as content filtering, red-teaming, and policy enforcement.

We release implementation details to support reproducibility and future research on safe and accountable editing, including better monitoring of edit side effects, stronger verification of edit intent, and standardized benchmarks for multimodal online editing.

\section*{Acknowledgements}
This work was supported in part by National Natural Science Foundation of China (62476070), Shenzhen Science and Technology Program (JCYJ20241202123503005,  GXWD20231128103232001, ZDSYS20230626091203008, KQTD20240729102154066), Department of Science and Technology of Guangdong (2024A1515011540), National Key R\&D Program of China (SQ2024YFE0200592), the Major Key Project of PCL under Grant PCL2025A10 and PCL2024A06, and in part by the Shenzhen Science and Technology Program under Grant RCJC20231211085918010.

\bibliography{reference}
\bibliographystyle{icml2026}

\clearpage
\appendix
\onecolumn

\section*{Appendix}

\section{Experimental Setup}
\label{app:setup}
In this section, we provide detailed descriptions of experimental setup, including introduction to datasets, explanation of evaluation metrics and editing objective, discussion of baseline methods and implementation details.

\subsection{Datasets}
\label{app:datasets}

\begin{itemize}[noitemsep,nolistsep,leftmargin=*]
    \item \textbf{E-VQA \cite{cheng2023edit}:} Designed for rectifying errors in VQA-v2 \cite{DBLP:conf/cvpr/GoyalKSBP17}, this dataset contains 6,346 training and 2,093 testing samples. It requires MLLMs to analyze visual content alongside textual questions to produce precise responses.
    \item \textbf{E-IC \cite{cheng2023edit}:} Developed to correct descriptive errors in COCO Caption \cite{chen2015microsoftcococaptionsdata}, it consists of 2,849 training and 1,000 testing instances. The task demands a comprehensive understanding of images to generate accurate captions.
    \item \textbf{Evaluation Composition:} In addition to the editing samples, the benchmark includes auxiliary samples for evaluating whether edits generalize to equivalent inputs and preserve unrelated knowledge:
    \begin{itemize}
        \item \textbf{Generality:} Textual generality is assessed using semantically rephrased text inputs generated by ChatGLM \cite{DBLP:conf/acl/DuQLDQY022} for E-VQA and manually written prompt templates for E-IC. Visual generality is assessed using reinterpreted images generated by Stable Diffusion 2.1 \cite{DBLP:conf/cvpr/RombachBLEO22}.
        \item \textbf{Locality:} Textual locality is evaluated using NQ \cite{DBLP:journals/tacl/KwiatkowskiPRCP19}, and multimodal locality is evaluated using OK-VQA \cite{DBLP:conf/cvpr/MarinoRFM19}, measuring whether unrelated knowledge remains unchanged after editing.
    \end{itemize}
\end{itemize}

\subsection{Evaluation Metrics}
\label{app:rgl}

We consider an online edit stream
$\{e_t=(x_t^v,x_t^q,y_t)\}_{t=1}^{T}$, where $x_t^v$ is the visual input,
$x_t^q$ is the text query, and $y_t$ is the desired edited response.
Let $f_{\theta_t}$ denote the model after applying the $t$-th edit.
We use $\mathbb{I}(\cdot)$ as the correctness indicator after answer normalization.

\paragraph{Reliability (Rel.).}
Reliability measures whether the edited model produces the desired response on the edited request:
\begin{equation}
\small
\mathrm{Rel}
= \frac{1}{T}\sum_{t=1}^{T}
\mathbb{I}\!\left(f_{\theta_t}(x_t^v,x_t^q)=y_t\right).
\end{equation}

\paragraph{Generality (T-Gen. / M-Gen.).}
Generality evaluates whether the edit generalizes to semantically equivalent variants of the edited request.
For each edit step $t$, we define a text-query neighborhood
$\mathcal{G}_t^q(x_t^q)$, such as paraphrased queries, and a visual neighborhood
$\mathcal{G}_t^v(x_t^v)$, such as benign image transformations or semantically equivalent visual inputs.
We report text-side and multimodal-side generality as:
\begin{align}
\small
\mathrm{T\text{-}Gen}
&= \frac{1}{T}\sum_{t=1}^{T}
\mathbb{E}_{\tilde{x}_t^q \sim \mathcal{G}_t^q(x_t^q)}
\left[
\mathbb{I}\!\left(f_{\theta_t}(x_t^v,\tilde{x}_t^q)=y_t\right)
\right], \\
\small
\mathrm{M\text{-}Gen}
&= \frac{1}{T}\sum_{t=1}^{T}
\mathbb{E}_{\tilde{x}_t^v \sim \mathcal{G}_t^v(x_t^v)}
\left[
\mathbb{I}\!\left(f_{\theta_t}(\tilde{x}_t^v,x_t^q)=y_t\right)
\right].
\end{align}

\paragraph{Locality (T-Loc. / M-Loc.).}
Locality measures the absence of side effects on inputs unrelated to the current edit.
Let $\mathcal{U}_t^q$ denote text-side locality inputs, which are not semantically related to the edited query $x_t^q$,
and let $\mathcal{U}_t^v$ denote visual-side locality inputs, which are not semantically related to the edited visual input $x_t^v$.
We quantify locality by the distributional shift between the post-edit model $f_{\theta_t}$ and the pre-edit model
$f_{\theta_{t-1}}$ using the KL divergence over next-token distributions:
\begin{align}
\small
\mathrm{T\text{-}Loc}
&= \frac{1}{T}\sum_{t=1}^{T}
\mathbb{E}_{\tilde{x}_t^q \sim \mathcal{U}_t^q}
\left[
\exp\!\left(
-\mathrm{KL}\!\left(
p_{\theta_t}(\cdot|\tilde{x}_t^q)
\,\|\, 
p_{\theta_{t-1}}(\cdot|\tilde{x}_t^q)
\right)
\right)
\right], \\
\small
\mathrm{M\text{-}Loc}
&= \frac{1}{T}\sum_{t=1}^{T}
\mathbb{E}_{(\tilde{x}_t^v,\tilde{x}_t^q) \sim \mathcal{U}_t^v}
\left[
\exp\!\left(
-\mathrm{KL}\!\left(
p_{\theta_t}(\cdot|\tilde{x}_t^v,\tilde{x}_t^q)
\,\|\, 
p_{\theta_{t-1}}(\cdot|\tilde{x}_t^v,\tilde{x}_t^q)
\right)
\right)
\right].
\end{align}
Here $p_{\theta}(\cdot|x)$ is the model's next-token distribution under the same decoding prefix, and higher values indicate better locality.

\subsection{Edit objective $\mathcal{L}_{\text{edit}}$.}
\label{app:loss}
At edit step $t$, we optimize the negative log-likelihood of the desired edited responses:
\begin{equation}
\small
\mathcal{L}_{\text{edit}}^{t}
=
\mathbb{E}_{(x_t^v,x_t^q,y_t)}
\Big[-\log p_{\theta}\big(y_t\mid x_t^v,x_t^q\big)\Big].
\end{equation}
% \begin{align}
% \small
% \mathcal{L}_{\text{rel}}^{(t)}
% &=\mathbb{E}_{(x_v^t,x_t^t,y_e^t)}\Big[-\log p_{\theta}\big(y_e^t\mid x_v^t,x_t^t\big)\Big],\\
% \small
% \mathcal{L}_{\text{gen}}^{(t)}
% &=\mathbb{E}_{(x_v^t,x_t^t,y_e^t)}
% \mathbb{E}_{x_t'\sim\mathcal{G}_t^{\text{text}}(x_t^t),\ x_v'\sim\mathcal{G}_t^{\text{vis}}(x_v^t)}
% \Big[
% -\log p_{\theta}\big(y_e^t\mid x_v',x_t^t\big)
% -\log p_{\theta}\big(y_e^t\mid x_v^t,x_t'\big)
% \Big],\\
% \small
% \mathcal{L}_{\text{loc}}^{(t)}
% &=\mathbb{E}_{(x_v^t,x_t^t)}
% \mathbb{E}_{x_t^{\ell}\sim\mathcal{L}_t^{\text{text}}(x_t^t),\ x_v^{\ell}\sim\mathcal{L}_t^{\text{vis}}(x_v^t)}
% \Big[
% \mathrm{KL}\!\big(p_{\theta}(\cdot\mid x_t^{\ell})\ \|\ p_{\theta_{t-1}}(\cdot\mid x_t^{\ell})\big)
% +
% \mathrm{KL}\!\big(p_{\theta}(\cdot\mid x_v^{\ell},x_t^t)\ \|\ p_{\theta_{t-1}}(\cdot\mid x_v^{\ell},x_t^t)\big)
% \Big].
% \end{align}

\subsection{MLLM Backbones}

We employ several representative Multimodal Large Language Models (MLLMs) as our experimental backbones. Specific model versions are detailed in the following.
\begin{itemize}[noitemsep,nolistsep,leftmargin=*]
\item\textbf{LLaVA\footnote{\url{https://huggingface.co/liuhaotian/llava-v1.5-7b}} \cite{DBLP:conf/cvpr/LiuLLL24}:} LLaVA aligns vision and language by using an MLP-based visual projector to map image features into the embedding space of LLaMA~\cite{DBLP:conf/cvpr/LiuLLL24}. With GPT-4-generated instruction-following data~\cite{openai2024gpt4technicalreport}, it shows strong fine-grained visual reasoning and complex instruction following.
\item \textbf{BLIP-2\footnote{\url{https://huggingface.co/Salesforce/blip2-opt-2.7b}} \cite{DBLP:conf/icml/0008LSH23}:} BLIP-2 introduces a lightweight Q-Former and a two-stage pre-training pipeline to bridge a frozen vision encoder and a frozen LLM. Following~\cite{cheng2023edit,chen2025lifelong}, we adopt the BLIP2-OPT variant, which prioritizes inference efficiency by compressing visual tokens before interfacing with OPT.
% \item \textbf{MiniGPT-4\footnote{\url{https://huggingface.co/Vision-CAIR/MiniGPT-4}} \cite{29}:} Built upon the Built on BLIP-2-style vision-language extraction, MiniGPT-4 connects a frozen visual encoder to Vicuna~\cite{38} via a lightweight projection module, enabling parameter-efficient cross-modal alignment while keeping the backbone components largely frozen.
\end{itemize}

\subsection{Baseline Editors}

\paragraph{Fine-tuning (FT).}
We include two finetuning variants commonly used for MLLM editing: FT-L and FT-M~\cite{cheng2023edit}.
FT-L updates (only) the last layer of the language transformer, while FT-M finetunes the visual encoder (or vision-side adaptation module) for each edit sample, following the setup in prior work~\cite{cheng2023edit}.
\paragraph{MEND.}
Model Editor Networks with Decomposition~\cite{DBLP:conf/iclr/MitchellLBFM22} is a hypernetwork-based editor that learns to predict parameter updates efficiently.
It trains a set of lightweight MLP hypernetworks that take decomposed backpropagated gradients on edit samples as input and output offsets to the target FFN parameters.
After editor-specific training, these hypernetworks can generate per-edit updates that satisfy the editing objective with relatively low runtime overhead.
\paragraph{AlphaEdit.}
AlphaEdit~\cite{fangalphaedit} is a plug-and-play, optimization-based editor that achieves targeted updates while explicitly preserving pre-edit behaviors.
Under a locate-then-edit pipeline, it first solves for a task-specific weight update on a chosen subset (e.g., FFN $W_{\text{out}}$), and then projects this update into the null space induced by a set of \emph{retain} keys, enforcing invariance on those activations to control side effects.
This null-space projection also helps reduce interference across sequential edits by keeping previously protected mappings unchanged.

\paragraph{SERAC.}
Semi-parametric Editing with a Retrieval-Augmented Counterfactual~\cite{mitchell2022memory} is a memory-based editing method.
It stores edited samples and trains (i) a \emph{scope classifier} to detect whether a query is related to previous edits, and (ii) a small \emph{counterfactual model} to produce the modified response when the query falls within the edit scope.
Otherwise, the original model is used for generation.
Following the standard setup~\cite{chen2025lifelong}, we instantiate the scope classifier with BERT~\cite{devlin-etal-2019-bert} and the counterfactual model with OPT-125M~\cite{zhang2022optopenpretrainedtransformer}.

\paragraph{IKE.}
In-Context Knowledge Editing~\cite{DBLP:conf/emnlp/ZhengLDFWXC23} performs editing by \emph{retrieval-augmented in-context prompting}, without directly updating model parameters.
Given a target fact pair $(x^\ast, y^\ast)$, IKE retrieves $k$ demonstrations $C=\{c_1,\ldots,c_k\}$ from a training set using an unsupervised retriever (e.g., cosine similarity), and concatenates them as in-context examples to guide generation.
The demonstrations are ordered by similarity to the target, and the resulting augmented prompt aims to maximize $P(y \mid x, f, C)$ for inputs $x$ that fall within the scope of the target prompt.

\paragraph{LiveEdit.}
LiveEdit~\cite{chen2025lifelong} is designed for \emph{lifelong} / \emph{streaming} edits in vision--language models, aiming to maintain edit quality over long horizons.
Instead of repeatedly overwriting shared weights, it stores edits as lightweight low-rank experts (a low-rank MoE) and performs gated composition at inference, with routing that favors visually and textually relevant experts to mitigate inter-edit interference.
These mechanisms promote stability and reduce cumulative drift under continuous updates.

\subsection{Implementation Details}

\label{sec:implementation}

\paragraph{Aligned baseline protocols.}
For FT-L, FT-M, MEND, SERAC, and LiveEdit, we follow and align with the official editing protocol in LiveEdit/MMEdit~\cite{chen2025lifelong,cheng2023edit}, including the same edit-stream construction, training/evaluation splits, and per-edit optimization settings.
For IKE, we use the MMEdit-aligned setup~\cite{cheng2023edit} to ensure a fair comparison under identical edit scopes.

\paragraph{AlphaEdit configuration and multimodal $K_0$.}
For AlphaEdit, we adopt the hyperparameters recommended in the original paper~\cite{fangalphaedit}.
To estimate the retain key set $K_0$ for MLLMs, we build $K_0$ using samples from E-VQA and E-IC~\cite{cheng2023edit}, so that both visual and textual knowledge are covered when constructing the null-space constraint. We restrict the editable modules to the last seven transformer layers of each MLLM, since these upper layers are the primary contribution layers identified in Section~\ref{sec:cross-modal-conflict}.

\paragraph{M-ORE configuration.}
We perform online updates with M-ORE using the closed-form solver (Eq.~\eqref{eq:closedform}), without iterative optimization. 
% and \camera{we provide the complete end-to-end editing algorithm in \textbf{Algorithm~\ref{alg:more}}}.
To make the setting more challenging, we use a per-edit batch size of 1 throughout.
We apply model-specific hyperparameters as follows:
\begin{itemize}[noitemsep,nolistsep,leftmargin=*]
\item \textbf{LLaVA-v1.5.} We set the LoRA rank to $r=512$ with $\alpha/r=2.0$. We use shared hyperparameters for the language and vision components: $\eta=0.1$ and $\lambda=2000$. We update the last seven transformer layers as well as the visual projection layer. 
We initialize $A^{(l)}$ using \texttt{torch.nn.init.orthogonal}. The online-updated matrix $B^{(l)}$ is initialized to zeros, ensuring the pre-edit model remains unchanged at step 0.
\item \textbf{BLIP2-OPT.} We set the LoRA rank to $r=128$ with $\alpha/r=2.0$. For the language backbone, we use $\eta=0.06$ and $\lambda=5000$; for the visual projection, we use $\eta_{\text{vis}}=0.03$ and $\lambda_{\text{vis}}=20000$. We update the last seven transformer layers as well as the visual projection layer. We use the same initialization strategy as above.
\end{itemize}

\paragraph{Codebase.}
For fairness and reproducibility, we implement all methods (including M-ORE and baselines) on top of the widely-used \textsc{EasyEdit} editing framework\footnote{\url{https://github.com/zjunlp/EasyEdit}}, and make our modifications within its unified editing/evaluation pipeline.
All experiments are conducted on a single NVIDIA H20 GPU (96GB) with NVLink.
We additionally provide our M-ORE implementation in the \url{https://github.com/lab-klc/M-ORE}

\section{Supplementary Experiments}
\label{app:suppl}

\subsection{Cross-modal Conflict}
\label{app:cross-modal}
For the attribution analysis, we sample four instances from E-IC and evaluate LLaVA-v1.5 and BLIP2-OPT.
We quantify layer-wise attributions for the self-attention, MLP, and visual projector modules by aggregating their activation magnitudes across the four representative samples to derive the final contribution profiles.
We additionally compute diagonal activation variances (log-variance) for textual layers and the projected visual representations.
We provide the corresponding cross-modal conflict results for LLaVA-v1.5 in Figure~\ref{fig:cross-modal-conflict_llava}.

To further verify whether the identified cross-modal conflict persists in more recent MLLM architectures, we additionally examine Qwen2-VL-7B-Instruct~\cite{wang2024qwen2}~\footnote{\url{https://huggingface.co/Qwen/Qwen2-VL-7B-Instruct}} and Qwen3-VL-8B-Instruct~\cite{bai2025qwen3}~\footnote{\url{https://huggingface.co/Qwen/Qwen3-VL-8B-Instruct}}.
As shown in Table~\ref{tab:qwen_modality_scale}, both models exhibit a clear modality-scale mismatch: the visual projection layer produces substantially larger mean outputs than representative MLP layers.
Specifically, the visual projection output reaches $17.67$ on Qwen2-VL and $51.51$ on Qwen3-VL, while the representative MLP-layer outputs are only in the ranges of $4.60$--$11.66$ and $10.03$--$25.50$, respectively.
These results indicate that the cross-modal conflict is not specific to BLIP2-OPT or LLaVA-v1.5, but remains observable in newer MLLM backbones.

\begin{figure*}[t]
    \centering
    \includegraphics[width=1.0\linewidth]{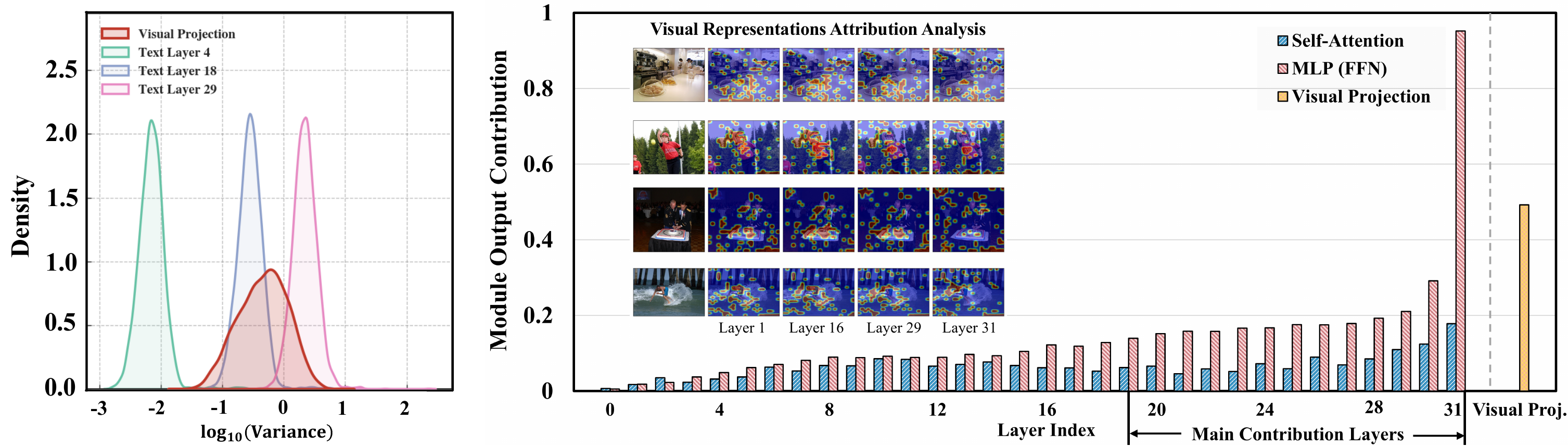}
\caption{Cross-modal conflict caused by modality mismatch (LLaVA-v1.5). \textbf{Left:} log-variance density indicates higher-energy visual features than text activations.
\textbf{Right:} attribution analysis shows visually dominated contributions, implying that shared global statistics bias updates toward the visual subspace and weaken textual preservation.}
    \label{fig:cross-modal-conflict_llava}
    \vspace{-2mm}
\end{figure*}

\begin{table}[t]
\centering
\small
\caption{Modality-scale mismatch on recent MLLM backbones. We report the mean output magnitudes of the visual projection layer and representative MLP layers.}
\setlength{\tabcolsep}{16pt}
\label{tab:qwen_modality_scale}
\begin{tabular}{l| c| c}
\toprule
Backbone & Visual Projection & Representative MLP Layers \\
\midrule
Qwen2-VL & 17.67 & 4.60--11.66 \\
Qwen3-VL & 51.51 & 10.03--25.50 \\
\bottomrule
\end{tabular}
\vspace{-3mm}
\end{table}

\subsection{Comprehensive Performance Comparison (RQ1)}
\label{app:rq1}
We provide the complete online editing results on E-VQA and E-IC for BLIP2-OPT and LLaVA-v1.5 under all edit horizons in Table~\ref{tab:full_online_editing}, which supplements the partial results reported in the main paper. 

\subsection{General Capability Evaluation (RQ2)}
\label{app:rq2}
MME~\cite{mme} contains 14 evaluation categories. The main paper reports six representative tasks for brevity, while we provide the remaining eight categories here to complete the benchmark:
\begin{itemize}[noitemsep,nolistsep,leftmargin=*]
\item \textbf{Artwork}: evaluates understanding of artistic images and stylized visual content.
\item \textbf{Celebrity}: tests recognition and reasoning about well-known public figures in images.
\item \textbf{Color}: measures color perception and discrimination for queried objects/regions.
\item \textbf{Landmark}: evaluates recognition of landmarks and related visual reasoning.
\item \textbf{Poster}: assesses understanding of poster-like images with dense visual-textual layouts.
\item \textbf{Scene}: tests holistic scene understanding and context-aware reasoning.
\item \textbf{Text Translation}: evaluates translating text appearing in images into the target language.
\item \textbf{Code Reasoning}: probes reasoning over code-like or structured text content presented visually.
\end{itemize}
Figure~\ref{fig:mme_complete} summarizes the accuracy trajectories of post-edited LLaVA-v1.5 across these additional tasks under the same evaluation protocol.

\subsection{Hyperparameter Analysis}
\label{app:ablation}
% We conduct ablations and sensitivity analyses to validate M-ORE’s key design choices, including the drift-free orthogonal write space, the constant-cost sketch/pooling scheme for estimating locality statistics, and the primary hyperparameters (LoRA rank and $\lambda$). Specifically, (1) w/o freezing $A^{(l)}$ updates the write basis $A^{(l)}$ online jointly with $B^{(l)}$ (rather than keeping $A^{(l)}$ fixed as in Eq.~\eqref{eq:freezeA}), thereby removing the drift-free coordinate constraint; (2) w/o pooling replaces the rank-one sketch in Eq.~\eqref{eq:S-def} with accumulation of the full projected statistic $\sum_{x\in\mathcal{B}_t^{(l)}} z_t^{(l)}(x)z_t^{(l)}(x)^\top$, and updates $S_t^{(l)}$ and $P_t^{(l)}$ accordingly. Results are summarized in Table~\ref{tab:ablation_struct} and Figures~\ref{fig:Hyper_Rank}--\ref{fig:Hyper_lambda}.

% Table~\ref{tab:ablation_struct} shows that \emph{freezing} the orthogonal basis $A^{(l)}$ is critical for long-horizon stability: removing it consistently reduces average performance, and the degradation grows with the edit horizon (most notably at $t{=}100$), indicating that a stable coordinate system is necessary to prevent sequential drift and interference accumulation. In contrast, removing masked pooling when forming $\bar z_t^{(l)}$ does not degrade performance and can slightly improve the average score, suggesting that our rank-one sketch is a robust constant-overhead approximation rather than a fragile heuristic.

\paragraph{Sensitivity to rank $r$.}
Figure~\ref{fig:Hyper_Rank} studies the LoRA rank $r\in\{128,256,512,1024\}$. Increasing $r$ improves edit capacity and generally benefits multimodal generalization under long horizons, while the gains saturate beyond a moderate rank (e.g., $r{=}512$). Small ranks tend to underfit multimodal updates, consistent with insufficient write capacity.

\paragraph{Sensitivity to locality regularization $\lambda$.}
Figure~\ref{fig:Hyper_lambda} varies $\lambda$ in $\{100,1000,2000,3000\}$. Small $\lambda$ leads to worse locality and lower average scores, indicating insufficient preservation of pre-edit behavior, whereas strong regularization can slightly reduce plasticity. A moderate range yields the best overall stability-plasticity trade-off across edit horizons.

\begin{table}[t]
\centering
\small
\setlength{\tabcolsep}{16pt}
\caption{Online E-IC editing results of M-ORE on recent other MLLM backbones.}
\label{tab:qwen_online_editing}
\begin{tabular}{l|c|c c c c c|c}
\toprule
Backbone & \#Edit & Rel. & T-Gen. & M-Gen. & T-Loc. & M-Loc. & Avg. \\
\midrule
Qwen2-VL & 1   & 100.00 & 100.00 & 92.31 & 100.00 & 96.67 & 97.80 \\
Qwen2-VL & 10  & 96.56  & 97.33  & 85.18 & 91.47  & 90.67 & 92.24 \\
Qwen2-VL & 100 & 93.66  & 93.51  & 80.15 & 89.43  & 90.75 & 89.50 \\
\midrule
Qwen3-VL & 1   & 100.00 & 100.00 & 84.62 & 100.00 & 100.00 & 96.92 \\
Qwen3-VL & 10  & 95.51  & 96.35  & 88.84 & 95.93  & 97.55 & 94.84 \\
Qwen3-VL & 100 & 91.85  & 92.54  & 80.50 & 91.63  & 93.33 & 89.97 \\
\bottomrule
\end{tabular}
\vspace{-3mm}
\end{table}

\subsection{Extension to More Intricate Vision-Language Alignment Architectures}
\label{app:vision_side_extension}

First, we evaluate online E-IC editing on more recent MLLM backbones, including Qwen2-VL-7B-Instruct and Qwen3-VL-8B-Instruct. Importantly, we directly reuse the hyperparameter setting from LLaVA-v1.5 without architecture-specific retuning. As shown in Table~\ref{tab:qwen_online_editing}, M-ORE maintains strong performance across different edit horizons on both newer architectures. Even after $t{=}100$ edits, M-ORE achieves average scores of $89.50$ on Qwen2-VL and $89.97$ on Qwen3-VL, indicating that its online recursive editing mechanism remains effective across modern MLLM architectures.

Second, M-ORE does not assume that the visual projector is a single homogeneous module. Its decoupling principle is module-wise: when a backbone contains multiple vision-language alignment modules, M-ORE can assign an independent locality statistic and preconditioner to each edited module. This confines high-energy visual features to the corresponding vision-side statistics and prevents them from contaminating the update geometry of text-side modules. To verify this extension beyond the default projector-plus-text-LLM setting, we further evaluate vision-side parameter editing on LLaVA-v1.5 under $t{=}100$ edits. Besides the default setting that edits the projector and text-LLM modules, we consider two variants: \textbf{(1) editing the vision encoder and projector only}, and (\textbf{2) editing the vision encoder, projector, text-LLM jointly}. For the vision encoder, we edit only its last three layers.

As shown in Tables~\ref{tab:vision_side_evqa} and~\ref{tab:vision_side_eic}, M-ORE remains compatible with vision-side editing. Editing the vision encoder and projector alone preserves high text locality but yields lower reliability and generality, suggesting that vision-side updates alone are insufficient to fully absorb multimodal factual corrections. Jointly editing vision-side modules and text-LLM modules improves multimodal generality, especially on E-VQA, but slightly reduces the overall average compared with the default setting. These results support our design choice of applying module-wise decoupled statistics to the projector and text-LLM modules by default, while also showing that the same principle naturally extends to more intricate alignment architectures with multiple editable vision-language modules.

\subsection{Case Study}
\label{app:case_study}
We provide qualitative case studies on both E-IC and E-VQA using LLaVA-v1.5, covering challenging edits that require fine-grained visual grounding (Figures~\ref{fig:Case_Study_EIC_1}--\ref{fig:Case_Study_EVQA_2}).
For each example, we report the target concept, the original prompt, the model’s outputs before/after editing, and a paraphrased prompt to probe text generality.
To visualize grounding and potential side effects, we additionally plot image-space attribution/attention rollout maps for the pre-edit model (\texttt{Base}) and the post-edit model at representative layers (\texttt{L1}, \texttt{L16}, \texttt{L31}).
Each group contains an edited sample (top) and an unrelated locality sample (bottom). We summarize the main qualitative findings as follows:
\begin{itemize}[noitemsep,nolistsep,leftmargin=*]
    \item \textbf{Edit success \& paraphrase robustness.}
    M-ORE consistently corrects the edited samples to the target responses, and the correction remains valid under paraphrased prompts.
    \item \textbf{Locality preservation.}
    On the paired locality samples, the post-edit model largely preserves the pre-edit predictions, indicating limited side effects.
    \item \textbf{Grounded attribution shifts.}
    Attribution maps show that, after editing, deeper layers place more mass on image regions supporting the updated concept.
    In contrast, locality samples exhibit spatial patterns close to the pre-edit baseline, suggesting the edit is grounded rather than inducing spurious global evidence shifts.
    \item \textbf{Occasional locality flips.}
    In a small number of locality cases, the decoded output changes after editing. To better understand the cause of these flips, we compare stable and flipped locality cases in Table~\ref{tab:locality_flip_analysis}. As shown, flipped cases tend to have smaller pre-edit top1--top2 margins, indicating that \emph{they lie near fragile decoding boundaries and can be affected by small distributional shifts}. For multimodal locality, flipped cases also show higher image similarity to the edited sample, suggesting that \emph{imperfect visual irrelevance and residual visual coupling may contribute to occasional flips}. In contrast, the low prompt token-F1 suggests that these flips are not primarily caused by surface-level textual overlap.
\end{itemize}

\begin{table}[t]
\centering
\small
\caption{Vision-side extension results on E-VQA with LLaVA-v1.5 under $t{=}100$ online edits.}
\label{tab:vision_side_evqa}
\resizebox{\linewidth}{!}{
\begin{tabular}{l |c c c c c c}
\toprule
Variant & Rel. & T-Gen. & M-Gen. & T-Loc. & M-Loc. & Avg. \\
\midrule
\cellcolor{ours}\textbf{M-ORE (default: projector + text-LLM)}
& \cellcolor{ours}\textbf{97.43}
& \cellcolor{ours}\textbf{94.30}
& \cellcolor{ours}85.40
& \cellcolor{ours}96.32
& \cellcolor{ours}\textbf{91.90}
& \cellcolor{ours}\textbf{93.07} \\
M-ORE w/ vis (vision encoder + projector only) & 83.16 & 80.42 & 77.77 & \textbf{100.00} & 79.80 & 84.23 \\
M-ORE w/ vis-text (vision encoder + projector + text-LLM) & 92.32 & 87.18 & \textbf{92.82} & 93.92 & 87.17 & 90.68 \\
\bottomrule
\end{tabular}
}
\end{table}

\begin{table}[t]
\centering
\small
\caption{Vision-side extension results on E-IC with LLaVA-v1.5 under $t{=}100$ online edits.}
\label{tab:vision_side_eic}
\resizebox{\linewidth}{!}{
\begin{tabular}{l |c c c c c c}
\toprule
Variant & Rel. & T-Gen. & M-Gen. & T-Loc. & M-Loc. & Avg. \\
\midrule
\cellcolor{ours}\textbf{M-ORE (default: projector + text-LLM)}
& \cellcolor{ours}\textbf{94.64}
& \cellcolor{ours}\textbf{93.27}
& \cellcolor{ours}78.66
& \cellcolor{ours}97.39
& \cellcolor{ours}\textbf{89.71}
& \cellcolor{ours}\textbf{90.73} \\
M-ORE w/ vis (vision encoder + projector only) & 81.67 & 81.34 & 62.44 & \textbf{100.00} & 81.50 & 81.39 \\
M-ORE w/ vis-text (vision encoder + projector + text-LLM) & 91.07 & 90.33 & \textbf{80.98} & 98.05 & 87.95 & 89.68 \\
\bottomrule
\end{tabular}
}
\end{table}

\begin{table}[t]
\centering
\small
\setlength{\tabcolsep}{20pt}
\caption{Comparison between stable and flipped locality cases. Pre top1--top2 margin measures pre-edit decoding confidence; mean gold shift measures the distributional movement toward the locality reference answer; prompt token-F1 measures lexical overlap between the edit prompt and locality prompt; image cosine measures visual similarity between the edit image and multimodal locality image.}
\label{tab:locality_flip_analysis}
\begin{tabular}{l|c|c|c|c}
\toprule
Metric & Text stable & Text flip & MM stable & MM flip \\
\midrule
Pre top1--top2 margin 
& 3.28 
& \cellcolor{lightgreen}\textbf{2.46} 
& 2.50 
& \cellcolor{darkgreen}\textbf{1.07} \\

Mean gold shift 
& 0.0053 
& 0.0040 
& 0.0002 
& \cellcolor{darkgreen}\textbf{0.0901} \\

Prompt token-F1 
& 0.077 
& 0.072 
& 0.063 
& \cellcolor{lightgreen}\textbf{0.031} \\

Image cosine 
& -- 
& -- 
& 0.062 
& \cellcolor{darkgreen}\textbf{0.319} \\
\bottomrule
\end{tabular}
\end{table}

\begin{figure*}[ht]
    \centering
    % --- 第一行 ---
    \begin{subfigure}[t]{0.33\textwidth}
        \centering
        \includegraphics[width=\linewidth]{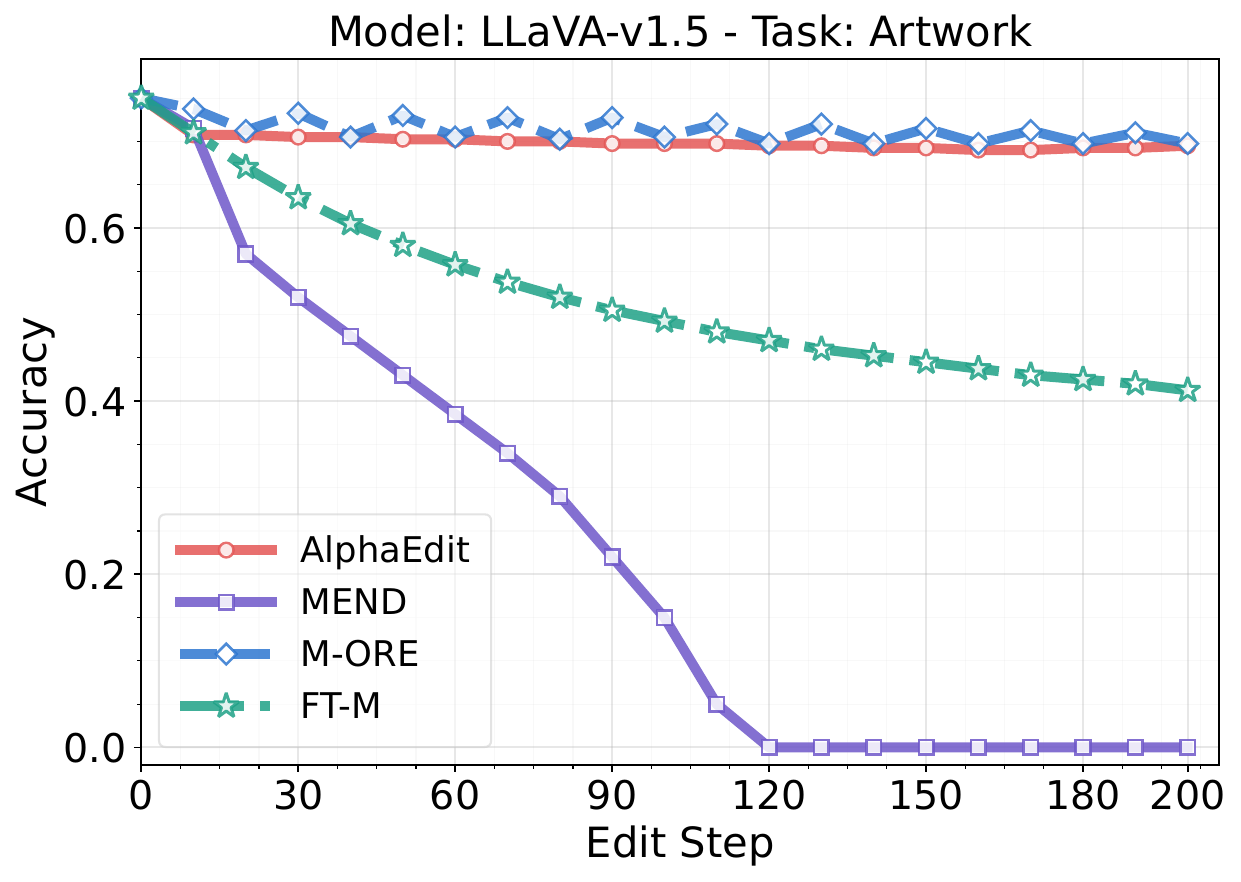}
        % \caption{Existence}
    \end{subfigure}\hfill
    \begin{subfigure}[t]{0.33\textwidth}
        \centering
        \includegraphics[width=\linewidth]{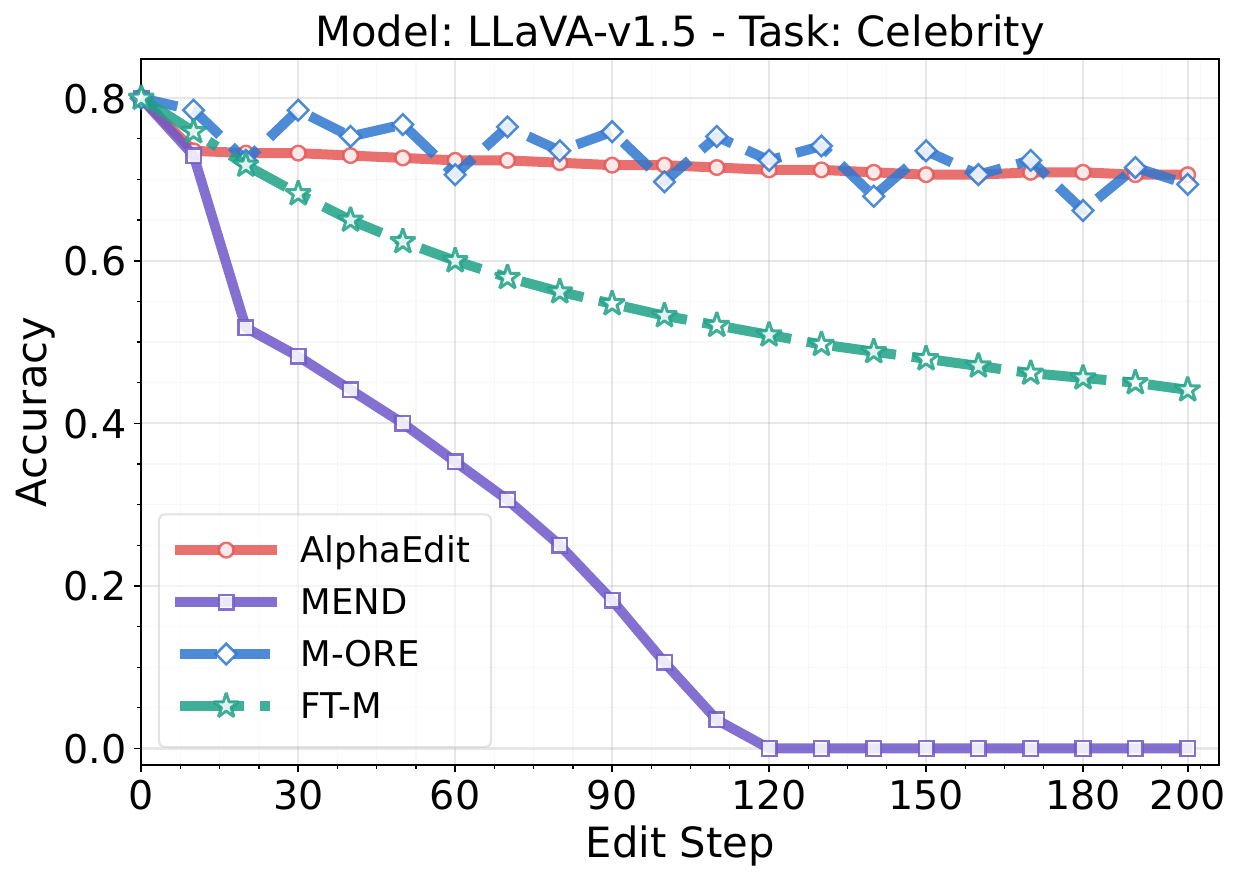}
        % \caption{Count}
    \end{subfigure}\hfill
    \begin{subfigure}[t]{0.33\textwidth}
        \centering
        \includegraphics[width=\linewidth]{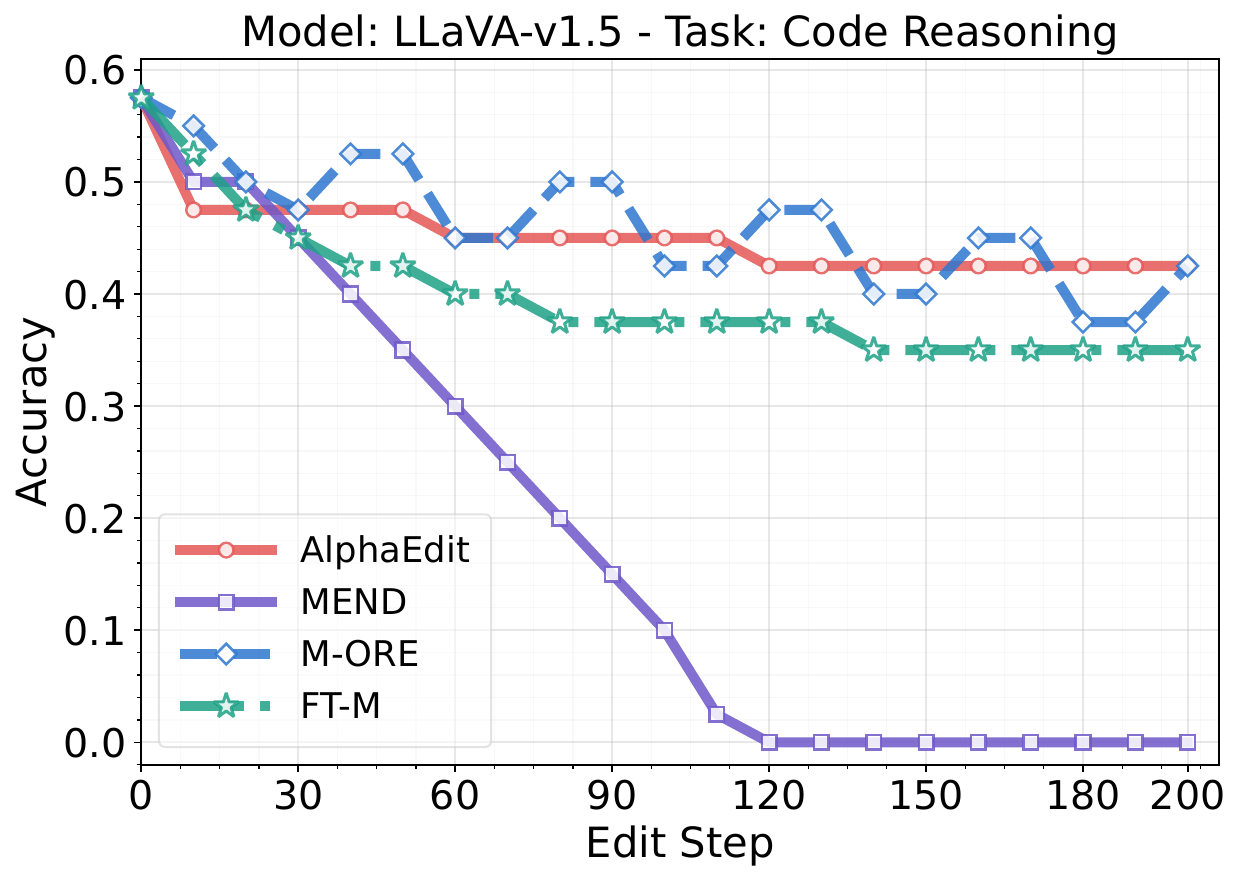}
        % \caption{Position}
    \end{subfigure}
    
    % \vspace{0.3cm} %
    
    % --- 第二行 ---
    \begin{subfigure}[t]{0.33\textwidth}
        \centering
        \includegraphics[width=\linewidth]{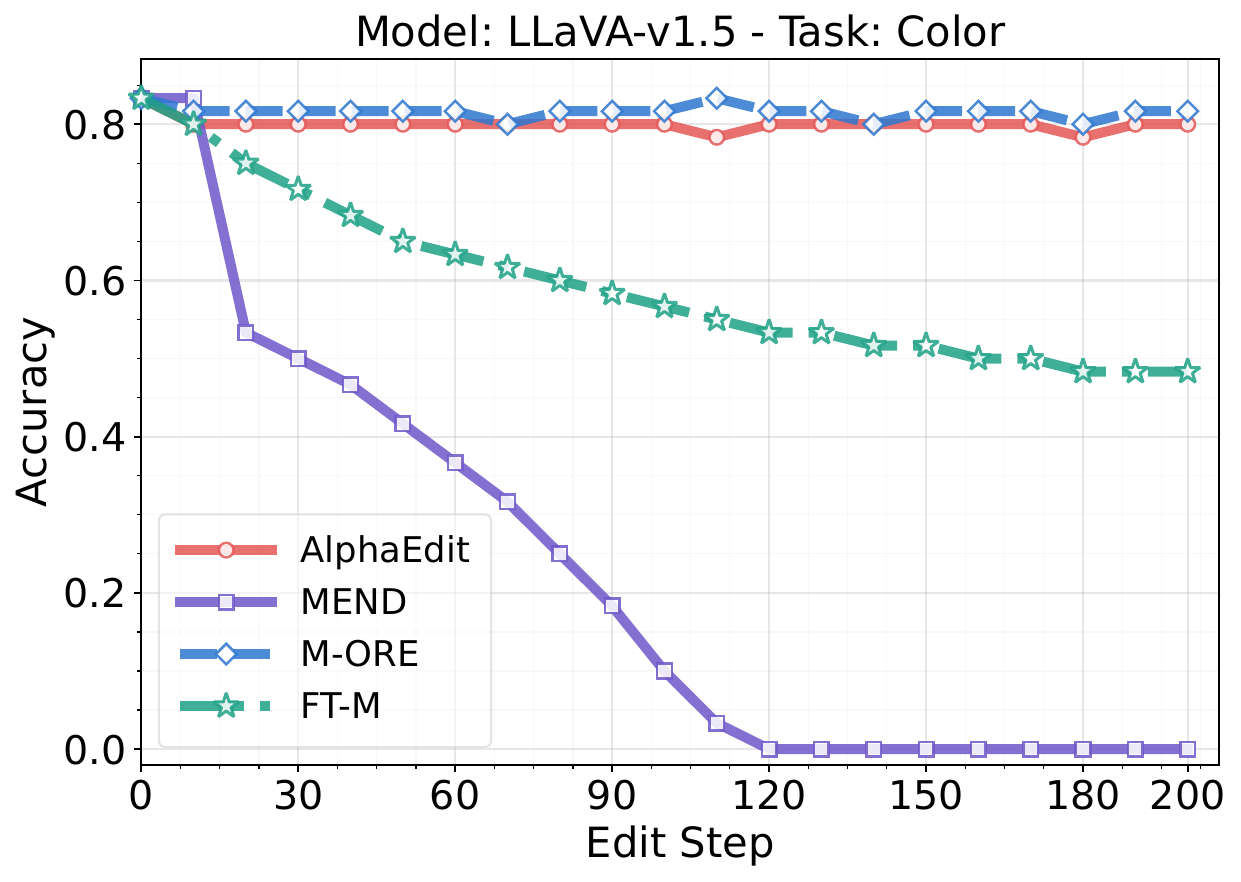}
        % \caption{OCR}
    \end{subfigure}\hfill
    \begin{subfigure}[t]{0.33\textwidth}
        \centering
        \includegraphics[width=\linewidth]{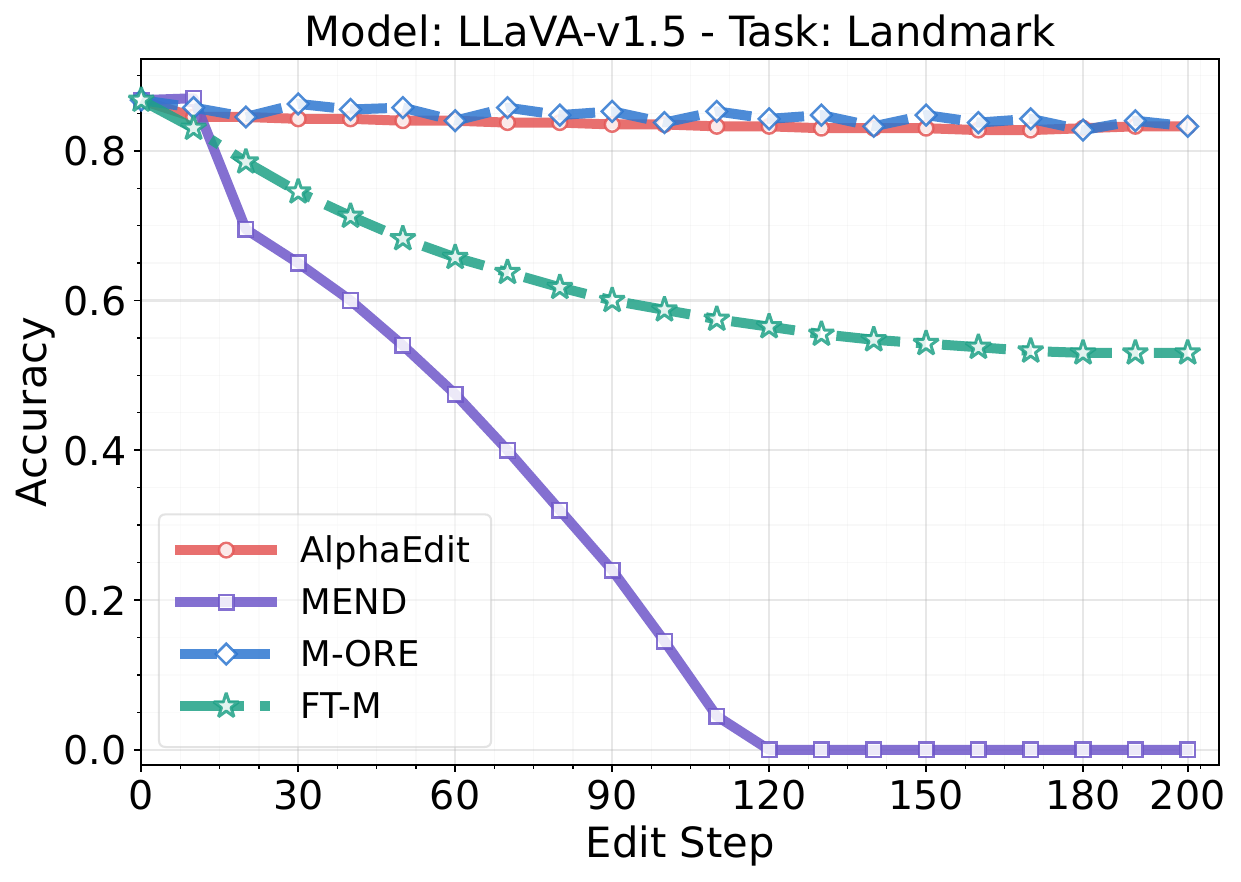}
        % \caption{Commonsense}
    \end{subfigure}\hfill
    \begin{subfigure}[t]{0.33\textwidth}
        \centering
        \includegraphics[width=\linewidth]{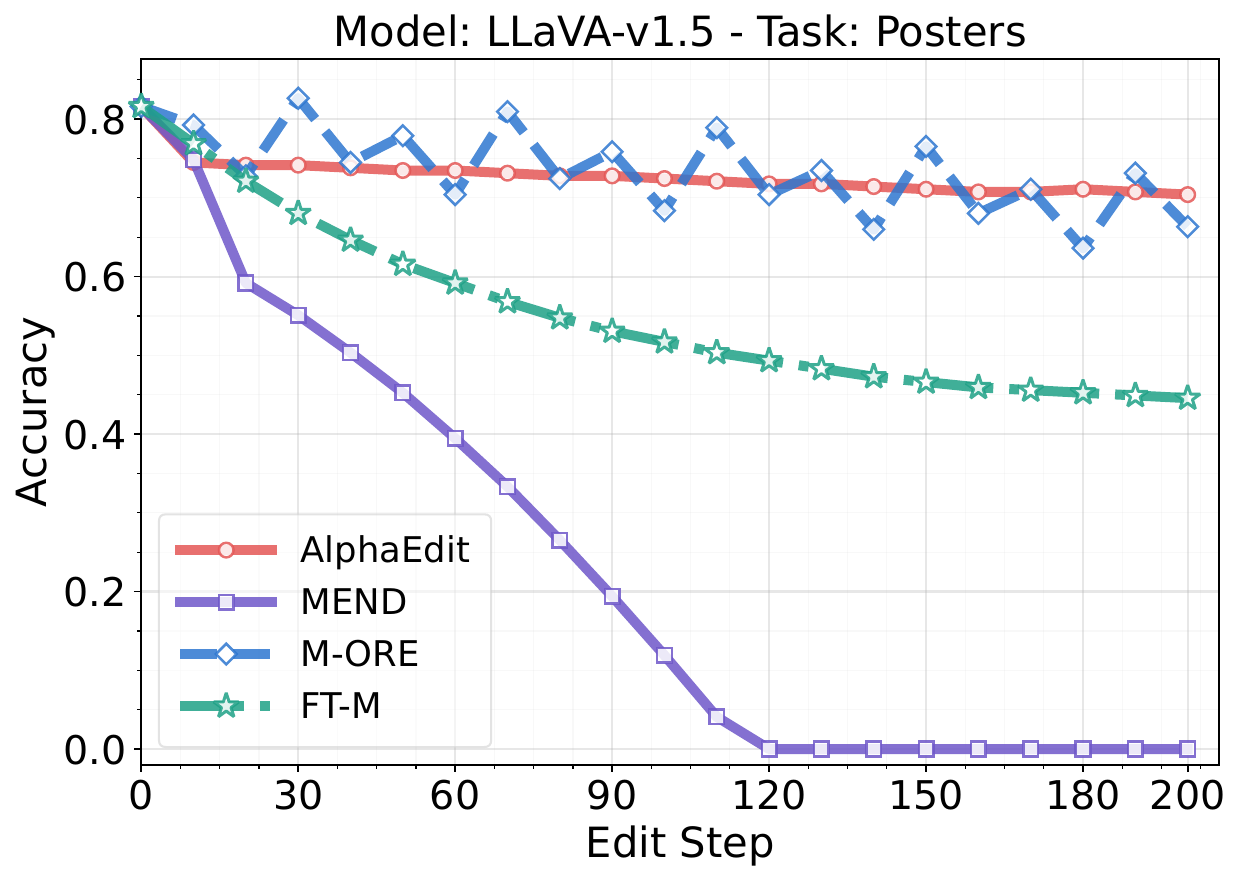}
        % \caption{Numerical}
    \end{subfigure}
        % --- 第三行 ---
        \begin{subfigure}[t]{0.33\textwidth}
        \centering
        \includegraphics[width=\linewidth]{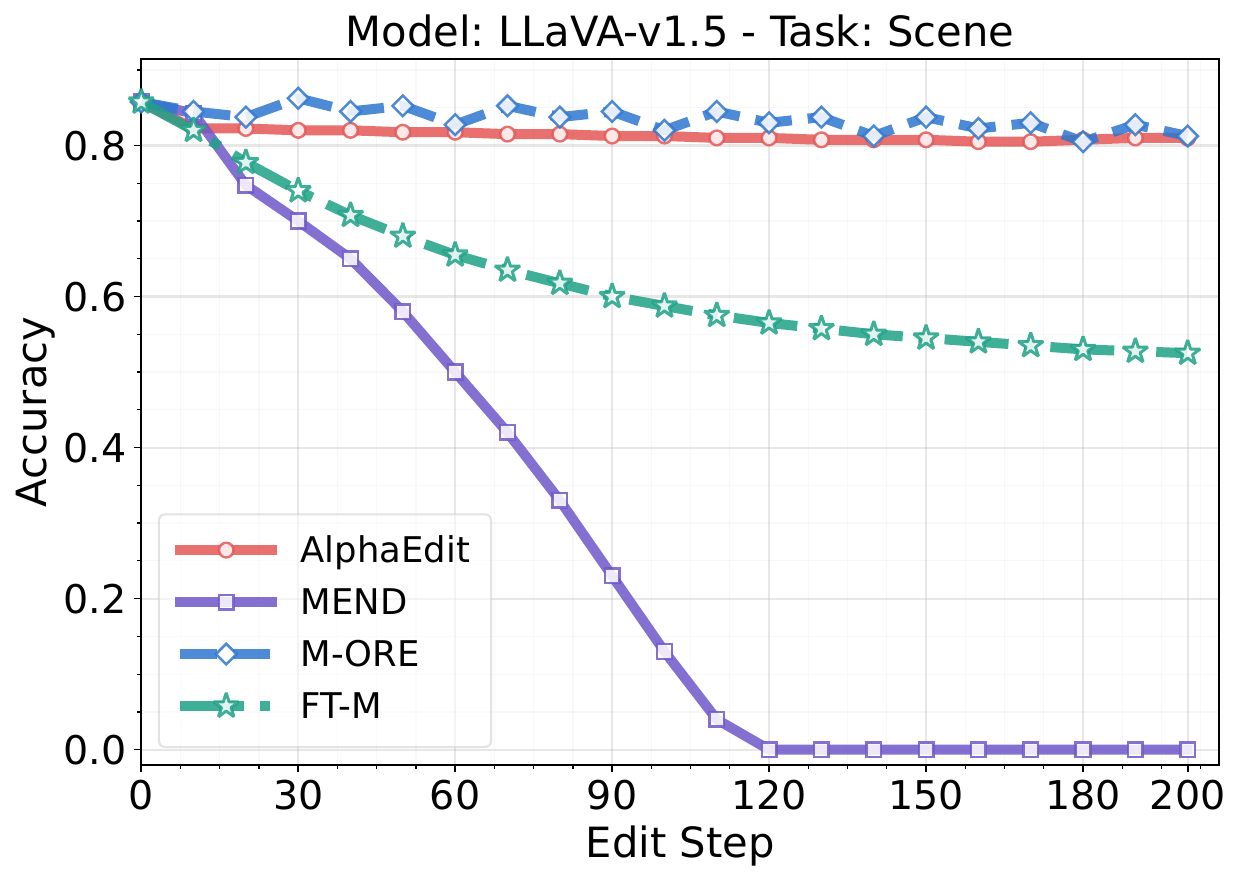}
        % \caption{Existence}
    \end{subfigure}
    \begin{subfigure}[t]{0.33\textwidth}
        \centering
        \includegraphics[width=\linewidth]{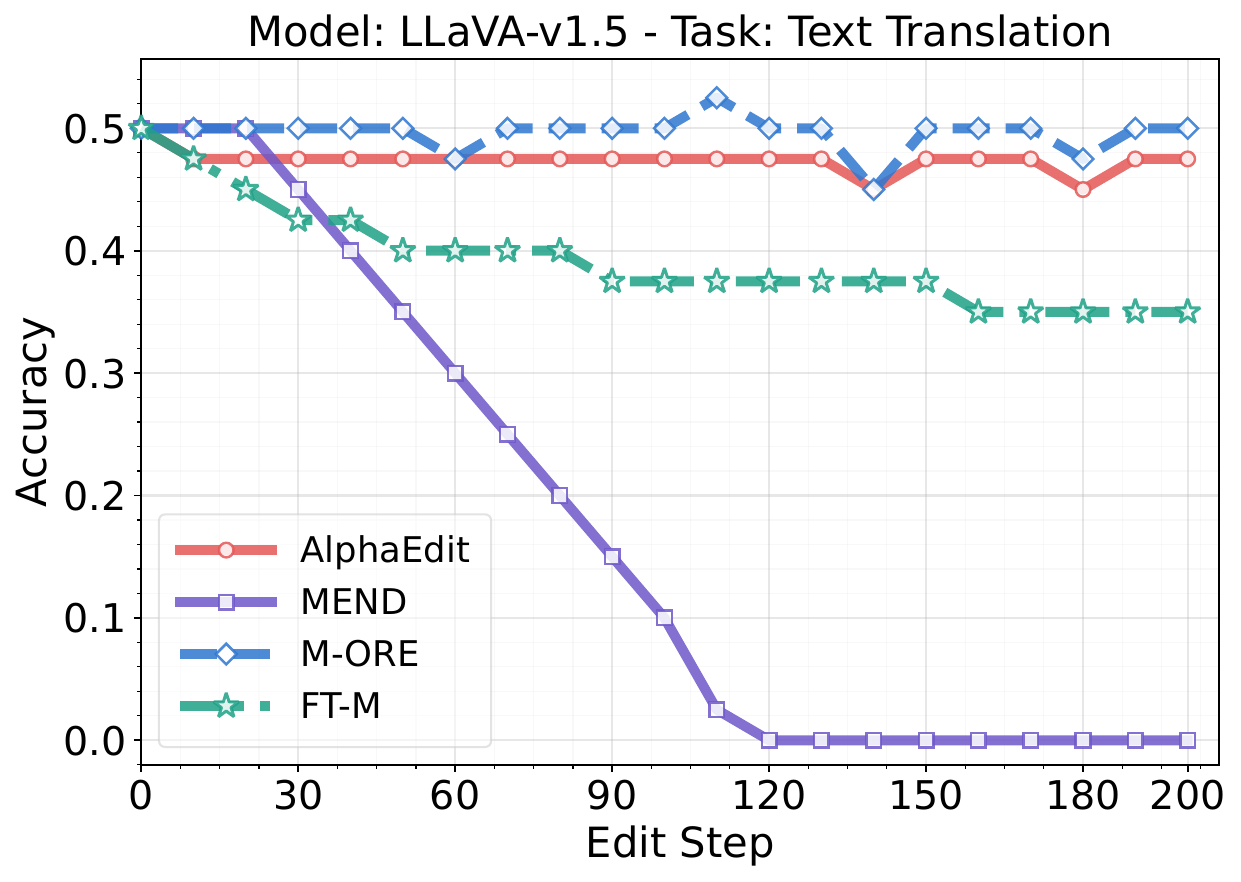}
        % \caption{Position}
    \end{subfigure}
    \caption{Accuracy of the post-edited LLaVA-v1.5 (7B) on  remaining eight tasks used for MLLM general capability testing.}
    \label{fig:mme_complete}
\end{figure*}

\begin{figure*}[ht]
    \centering
    \includegraphics[width=0.9\linewidth]{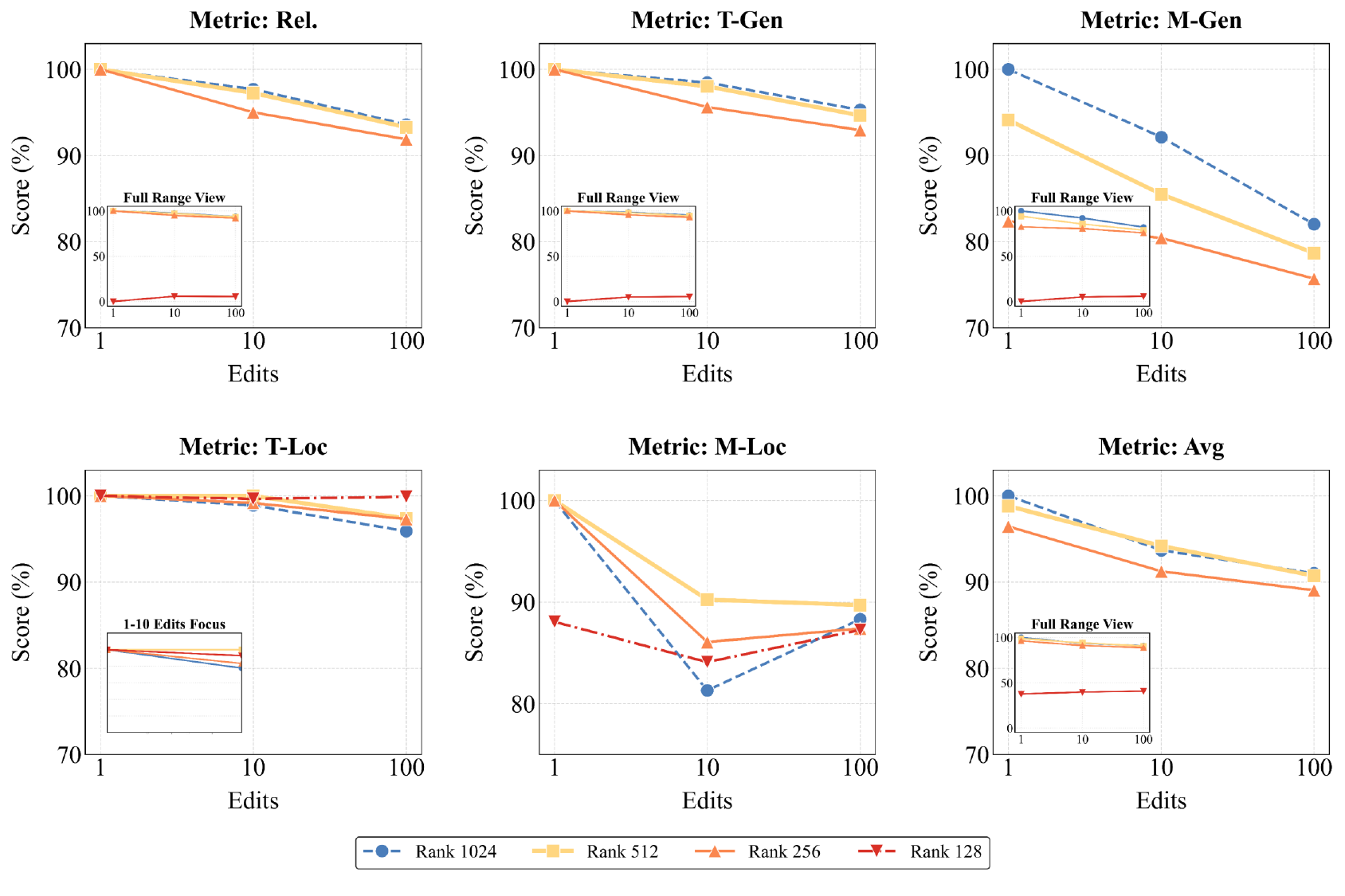}
\caption{Sensitivity analysis of the LoRA rank.}
    \label{fig:Hyper_Rank}
\end{figure*}

\begin{figure*}[ht]
    \centering
    \includegraphics[width=0.9\linewidth]{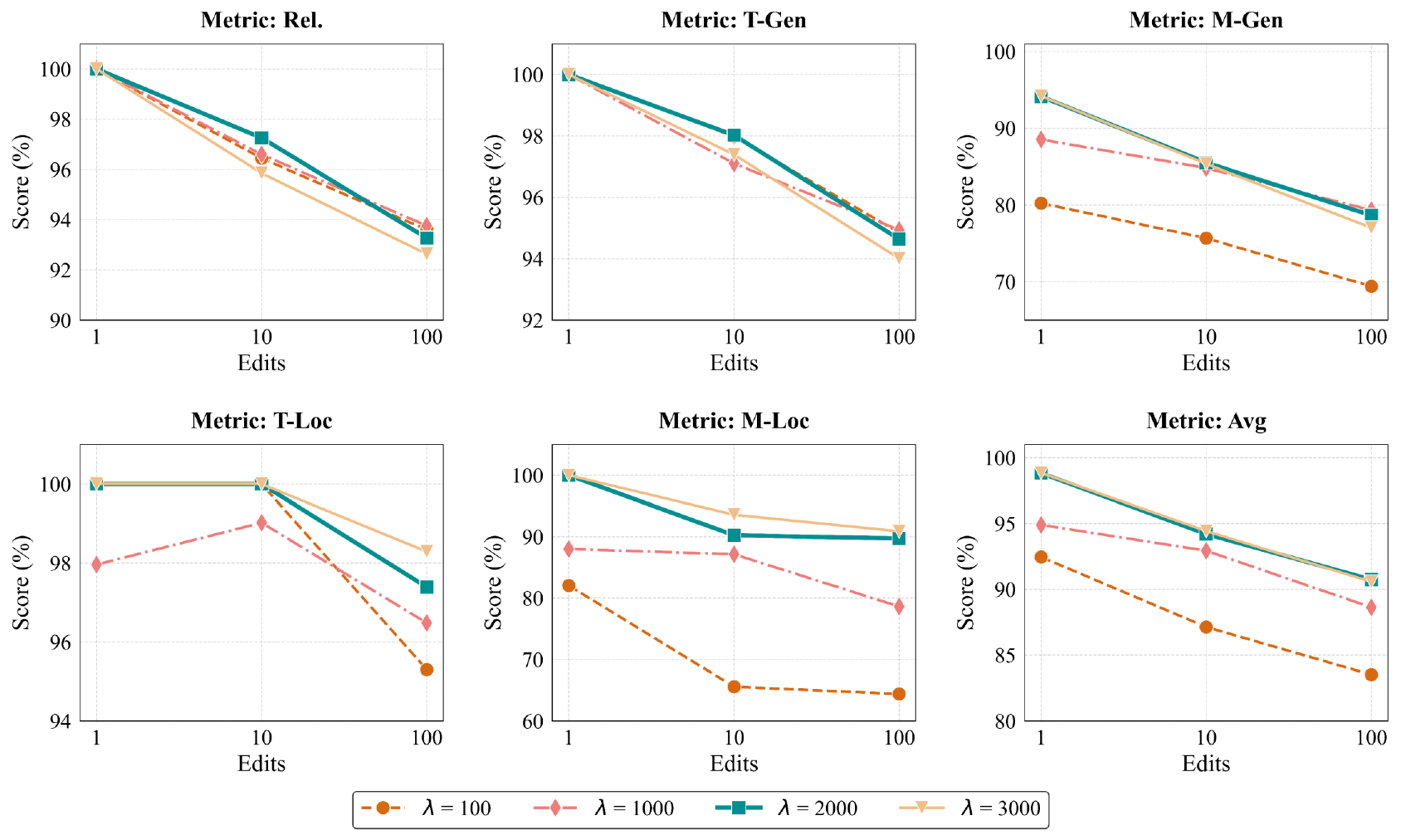}
\caption{Sensitivity analysis of the regularization weight $\lambda$.}
    \label{fig:Hyper_lambda}
\end{figure*}

\clearpage
\section{Relation to Locate-then-Editing Editors}
\label{app:equiv-target-match}

We relate our proximal-projection update (Eq.~\eqref{eq:prox-proj-rgl}) to the classical \emph{target-matching} formulation
that enforces $(W+\Delta W)K \approx V$, where $K$ and $V$ stack the key and value vectors, respectively.
We establish a first-order connection for the quadratic objective and explain why our formulation is better suited to
online sequential editing.

\paragraph{Target matching.}
Let $W\in\mathbb{R}^{d_{\text{out}}\times d}$, keys $K\in\mathbb{R}^{d\times n}$, and desired values
$V\in\mathbb{R}^{d_{\text{out}}\times n}$. A standard objective is
\begin{equation}
\Delta W_{\text{tm}}
=
\arg\min_{\Delta W}\ 
\|(W+\Delta W)K - V\|_F^2 + \|\Delta W\|_F^2.
\label{eq:app_tm_obj}
\end{equation}

\paragraph{Proximal view.}
Define $\mathcal{L}_{\text{tm}}(W)=\|WK-V\|_F^2$ and let
\begin{equation}
g \triangleq \nabla_W \mathcal{L}_{\text{tm}}(W)=2(WK-V)K^\top,
\label{eq:app_tm_grad}
\end{equation}
where $\langle A,B\rangle\triangleq\mathrm{Tr}(A^\top B)$ denotes the Frobenius inner product.
A gradient-centered proximal step can be viewed as a special case of
Eq.~\eqref{eq:prox-proj-rgl} with an isotropic locality geometry $C=I$:
\begin{equation}
\Delta W_{\text{prox}}
=
\arg\min_{\Delta W}\ \|\Delta W+\eta g\|_F^2 + \|\Delta W\|_F^2
=
-\frac{\eta}{2}\,g.
\label{eq:app_prox_tm}
\end{equation}

\begin{lemma}[First-order descent and preconditioned form]
\label{lem:local_equiv_short}
Assume $\eta>0$. For the quadratic loss $\mathcal{L}_{\text{tm}}$, both $\Delta W_{\text{prox}}$ and $\Delta W_{\text{tm}}$ are
first-order descent directions, i.e., $\langle g,\Delta W_{\text{prox}}\rangle<0$ and $\langle g,\Delta W_{\text{tm}}\rangle<0$ whenever $g\neq 0$.
Moreover, $\Delta W_{\text{tm}}$ admits the closed form
\begin{equation}
\Delta W_{\text{tm}}=-\tfrac12\, g\,(KK^\top+I)^{-1},
\label{eq:app_tm_closed_form}
\end{equation}
hence it is a \emph{right-preconditioned} gradient step. In particular, if $KK^\top\approx cI$ (e.g., under approximately whitened keys),
then $\Delta W_{\text{tm}}$ is approximately proportional to $g$.
\end{lemma}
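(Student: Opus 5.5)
I will first derive the closed form \eqref{eq:app_tm_closed_form}. Then I will reduce both descent claims to positivity of a trace of the form $\mathrm{Tr}(gMg^\top)$ with $M$ positive definite.

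\textbf{Closed form.} The objective in \eqref{eq:app_tm_obj} is a strictly convex quadratic in $\Delta W$, because the $\|\Delta W\|_F^2$ term makes its Hessian positive definite. So the unique minimizer is the stationary point. Differentiating gives
\begin{equation*}
2\big((W+\Delta W)K - V\big)K^\top + 2\Delta W = 0 .
\end{equation*}
Rearranging, $\Delta W(KK^\top + I) = -(WK-V)K^\top = -\tfrac12 g$, using \eqref{eq:app_tm_grad}. The matrix $KK^\top+I$ is symmetric positive definite, so it is invertible. Right-multiplying by its inverse yields \eqref{eq:app_tm_closed_form}. This shows $\Delta W_{\text{tm}}$ is the gradient right-multiplied by $P\triangleq(KK^\top+I)^{-1}$, i.e.\ a right-preconditioned gradient step. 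The prox identity \eqref{eq:app_prox_tm} follows in the same way: setting $2(\Delta W+\eta g)+2\Delta W=0$ gives $\Delta W_{\text{prox}}=-\tfrac{\eta}{2}g$.

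\textbf{Descent.} For the prox step,
\begin{equation*}
\langle g,\Delta W_{\text{prox}}\rangle=-\tfrac{\eta}{2}\|g\|_F^2<0
\end{equation*}
whenever $g\neq 0$ and $\eta>0$. For the target-matching step,
\begin{equation*}
\langle g,\Delta W_{\text{tm}}\rangle=-\tfrac12\mathrm{Tr}(g^\top gP)=-\tfrac12\mathrm{Tr}(gPg^\top).
\end{equation*}
Write $P=U\Lambda U^\top$ with eigenvalues $\Lambda_{ii}=1/(1+\sigma_i^2)\in(0,1]$, where $\sigma_i^2\ge 0$ are the eigenvalues of $KK^\top$. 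Then
\begin{equation*}
\mathrm{Tr}(gPg^\top)=\sum_i \Lambda_{ii}\|gu_i\|^2\ \ge\ \lambda_{\min}(P)\,\|g\|_F^2>0
\end{equation*}
whenever $g\neq0$. This step is the only point needing care: it requires positive definiteness of $P$, not merely semidefiniteness, and that is supplied by the $+I$ term.

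\textbf{Approximately isotropic keys.} If $KK^\top = cI+E$ with $\|E\|$ small relative to $1+c$, then
\begin{equation*}
P=\tfrac{1}{1+c}\Big(I+\tfrac{E}{1+c}\Big)^{-1}=\tfrac{1}{1+c}I+O\big(\|E\|/(1+c)^2\big)
\end{equation*}
by a Neumann series. Hence $\Delta W_{\text{tm}}=-\tfrac{1}{2(1+c)}g+O(\|g\|\,\|E\|)$, which is approximately proportional to $g$. This matches $\Delta W_{\text{prox}}$ under the choice $\eta=1/(1+c)$.

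I expect no real obstacle. The main care point is making "approximately" precise via this perturbation bound, rather than leaving it informal.
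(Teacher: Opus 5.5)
Your proposal is correct and follows essentially the same route as the paper: first-order optimality of the strictly convex target-matching objective gives $\Delta W_{\text{tm}}(KK^\top+I)=-\tfrac12 g$, and both descent inequalities then follow from $\eta>0$ and the positive definiteness of $(KK^\top+I)^{-1}$. Your eigenvalue bound $\mathrm{Tr}(gPg^\top)\ge\lambda_{\min}(P)\|g\|_F^2$ and your Neumann-series estimate for $KK^\top\approx cI$ just spell out steps the paper states in one line each.
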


\begin{proof}
For any perturbation $\Delta W$, a first-order Taylor expansion gives
\[
\mathcal{L}_{\text{tm}}(W+\Delta W)
=
\mathcal{L}_{\text{tm}}(W)+\langle g,\Delta W\rangle + O(\|\Delta W\|_F^2).
\]
From Eq.~\eqref{eq:app_prox_tm}, $\Delta W_{\text{prox}}=-\alpha g$ with $\alpha=\eta/2>0$, so
\[
\langle g,\Delta W_{\text{prox}}\rangle=-\alpha\|g\|_F^2<0\quad\text{when }g\neq 0.
\]

For Eq.~\eqref{eq:app_tm_obj}, the first-order optimality condition is
\[
2\big((W+\Delta W_{\text{tm}})K-V\big)K^\top + 2\Delta W_{\text{tm}}=0,
\]
which rearranges to
\[
\Delta W_{\text{tm}}(KK^\top+I)=(V-WK)K^\top, \; \Delta W_{\text{tm}}=-(WK-V)K^\top (KK^\top+I)^{-1}. 
\]
Since $(KK^\top+I)\succ 0$ is invertible, this yields Eq.~\eqref{eq:app_tm_closed_form}.
Finally,
\[
\langle g,\Delta W_{\text{tm}}\rangle
=
-\tfrac12\,\mathrm{Tr}\!\big(g^\top g (KK^\top+I)^{-1}\big)<0
\quad\text{when }g\neq 0.
\]
Since $(KK^\top+I)^{-1}\succ 0$,
the proportionality remark follows from $(KK^\top+I)^{-1}\approx\frac{1}{c+1}I$ when $KK^\top\approx cI$.
\end{proof}

\begin{proposition}[Online suitability]
\label{prop:online_suitability_short}
At edit step $t$, Eq.~\eqref{eq:prox-proj-rgl} explicitly imposes a
history-dependent quadratic locality geometry
$\mathrm{Tr}(\Delta W\,C_{t-1}\,\Delta W^\top)$, where $C_{t-1}$ contains only
statistics accumulated from previous edit steps. After the current edit, the
step-only context is inserted into the statistic to obtain $C_t$ for future
edits. This makes the update naturally suited to online sequential editing:
the current write is protected by past locality information, while the current
context is cached only for subsequent steps.
In contrast, a one-shot target-matching objective of the form
Eq.~\eqref{eq:app_tm_obj} does not encode this cross-edit geometry unless one
augments it with additional preservation terms, e.g., a
$C_{t-1}$-weighted quadratic penalty at step $t$. In that case, the resulting
update becomes equivalent in spirit to our proximal-projection formulation.
\end{proposition}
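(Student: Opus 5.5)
The proposition is mostly structural, so the plan is to make each clause a concrete algebraic fact and check it. First, the history dependence. By the definition of $C_{t-1}$ in Section~\ref{sec:proj-principle}, $C_{t-1}=\sum_{s=1}^{t-1}\sum_{x\in\mathcal{B}_s}k_s(x)k_s(x)^\top$ depends only on the contexts $\mathcal{B}_1,\dots,\mathcal{B}_{t-1}$. Hence the quadratic penalty $\mathrm{Tr}(\Delta W\,C_{t-1}\,\Delta W^\top)$ in Eq.~\eqref{eq:prox-proj-rgl} is fixed before step $t$. After the write, the update $C_t=C_{t-1}+\sum_{x\in\mathcal{B}_t}k_t(x)k_t(x)^\top$ inserts the current context for later steps only. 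Since $C_{t-1}\succeq 0$, the objective is strictly convex, and its stationarity condition $\Delta W+\eta g_t+\Delta W C_{t-1}=0$ gives the unique minimizer
\[
\Delta W_t=-\eta\, g_t\,(I+C_{t-1})^{-1}.
\]
This is the full-space analogue of Eq.~\eqref{eq:closedform}. The same argument applies to the steady-space version with $S_{t-1}^{(l)}$ and the Sherman--Morrison recursion Eq.~\eqref{eq:sm-update}. To make ``protected by past locality'' precise, I would note that for any direction $u$ with $C_{t-1}u=\mu u$, the update satisfies $\Delta W_t u=-\eta g_t u/(1+\mu)$. Writes along heavily used past keys are therefore shrunk.

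Second, the contrast with target matching. Using Lemma~\ref{lem:local_equiv_short}, $\Delta W_{\text{tm}}=-\tfrac12 g\,(KK^\top+I)^{-1}$. Its preconditioner involves only the current keys $K$, with no reference to $\mathcal{B}_1,\dots,\mathcal{B}_{t-1}$. A simple example makes this explicit: take a past key direction $u$ orthogonal to the columns of $K$. Then $(KK^\top+I)^{-1}u=u$, so $\Delta W_{\text{tm}}u=-\tfrac12 gu$, and the update receives no attenuation along $u$. The proximal update, by contrast, attenuates it by the factor $1/(1+\mu)$.

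Third, the augmented equivalence. I would add the penalty and solve
\[
\min_{\Delta W}\ \|(W+\Delta W)K-V\|_F^2+\|\Delta W\|_F^2+\mathrm{Tr}(\Delta W C_{t-1}\Delta W^\top).
\]
The same first-order computation as in the lemma gives
\[
\Delta W=-\tfrac12\, g\,(KK^\top+I+C_{t-1})^{-1}.
\]
This is a right-preconditioned gradient step whose preconditioner contains the identical history term. It coincides with the proximal form when $\eta=\tfrac12$, $g_t=g$, and the current-key curvature $KK^\top$ is absorbed into the locality geometry. The last condition holds for instance in the linearized regime where one sets $C'=C_{t-1}+KK^\top$, or approximately when $KK^\top\approx cI$, after rescaling $\eta$.

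The main obstacle is that ``equivalent in spirit'' is not a sharp statement. The two updates differ by the current-step curvature $KK^\top$, and $g_t$ is a general edit-loss gradient rather than $2(WK-V)K^\top$. I would therefore state the equivalence as an exact identity under the stated identifications, and otherwise as equality of the structural form $-\eta' g\,(I+\text{history}+\text{current})^{-1}$.
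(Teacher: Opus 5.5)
Your first and third parts are correct, and they are more explicit than the paper. The paper gives no separate proof of this proposition; it rests on the definition of $C_{t-1}$, the closed form in Appendix~\ref{app:deriv}, and Lemma~\ref{lem:local_equiv_short}. Solving the augmented objective to get $-\tfrac12 g\,(KK^\top+I+C_{t-1})^{-1}$ is a genuine gain, because it turns ``equivalent in spirit'' into an identity under stated identifications.

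The step that fails is the ``simple example'' in your second part. Because $g=2(WK-V)K^\top$, any $u$ with $K^\top u=0$ has $gu=0$, so $\Delta W_{\text{tm}}u=-\tfrac12 gu=0$. Equivalently, the push-through identity $K^\top(KK^\top+I)^{-1}=(K^\top K+I)^{-1}K^\top$ shows that the row space of $\Delta W_{\text{tm}}$ lies in the span of the current keys. So along a past key orthogonal to the current keys, target matching does not ``fail to attenuate.'' It leaves $Wu$ exactly unchanged, which is perfect preservation and the opposite of what you want to show. The comparison with the proximal factor $1/(1+\mu)$ is vacuous there as well. With the same $g$, or with any $g_t$ whose rows lie in the span of the current keys (as for an FFN output matrix, where $g_t=\sum_i\delta_i k_i^\top$ over the current positions), you get $g_tu=0$, so both updates vanish along $u$.

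To repair this, place the contrast where interference actually occurs: on past directions that overlap the current keys. Take $u$ with $C_{t-1}u=\mu u$ and $K^\top u\neq 0$. Then $\Delta W_{\text{tm}}u=-\tfrac12 g\,(KK^\top+I)^{-1}u$ is generically nonzero and does not depend on $\mu$ or on any past context. The proximal step with $\eta=\tfrac12$ and the same $g$ gives $-\tfrac12 gu/(1+\mu)$, which is suppressed as $\mu$ grows. Alternatively, drop the example and rely on the sentence you already wrote: $\Delta W_{\text{tm}}$ is a function of $(W,K,V)$ only, so it is independent of $\mathcal{B}_1,\dots,\mathcal{B}_{t-1}$ except through $W_{t-1}$.

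A smaller correction in your third part: when $KK^\top\approx cI$, rescaling $\eta$ alone does not recover the proximal form, because $-\tfrac12 g\,((1+c)I+C_{t-1})^{-1}=-\tfrac{1}{2(1+c)}\,g\,(I+C_{t-1}/(1+c))^{-1}$. The locality matrix must also be scaled by $1/(1+c)$, unless $C_{t-1}$ is a multiple of $I$.
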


\section{Derivations for Proximal Projection and Steady-Space RLS}
\label{app:deriv}

We consider Eq.~\eqref{eq:prox-proj-rgl} at edit step $t$:
\[
\min_{\Delta W}\ 
\|\Delta W+\eta g_t\|_F^2
+
\mathrm{Tr}(\Delta W C_{t-1}\Delta W^\top),
\quad
C_{t-1}\succeq 0.
\]
Using
$\nabla_{\Delta W}\|\Delta W+\eta g_t\|_F^2 = 2(\Delta W+\eta g_t)$
and, since $C_{t-1}$ is symmetric PSD ($C_t\succeq 0$),
\[
\nabla_{\Delta W}\mathrm{Tr}(\Delta W C_{t-1}\Delta W^\top)
=
2\Delta W C_{t-1},
\]
setting the gradient to zero yields
\[
(\Delta W+\eta g_t) + \Delta W C_{t-1} = 0
\ \Rightarrow\
\Delta W(I+C_{t-1}) = -\eta g_t
\ \Rightarrow\
\Delta W_t = -\eta\, g_t (I+C_{t-1})^{-1}.
\]
Since $I+C_{t-1}\succ 0$, the minimizer is unique.

\subsection{From Full-Space Locality to the Steady Orthogonal Coordinates}
\label{app:steady-map}

Under the low-rank interface $\Delta W^{(l)}=\Delta B^{(l)}A^{(l)}$,
the locality quadratic form used at edit step $t$ transforms as
\begin{align*}
\mathrm{Tr}\!\left(\Delta W^{(l)} C_{t-1}^{(l)} (\Delta W^{(l)})^\top\right)
&=
\mathrm{Tr}\!\left(\Delta B^{(l)}A^{(l)} C_{t-1}^{(l)} {A^{(l)}}^\top (\Delta B^{(l)})^\top\right)\\
&=
\mathrm{Tr}\!\left(\Delta B^{(l)} \widetilde C_{t-1}^{(l)} (\Delta B^{(l)})^\top\right),
\end{align*}
where
\[
\widetilde C_{t-1}^{(l)}
\triangleq
A^{(l)} C_{t-1}^{(l)} {A^{(l)}}^\top
\in \mathbb{R}^{r\times r}.
\]
Moreover, with $z_s^{(l)}(x)=A^{(l)}k_s^{(l)}(x)$ and
\[
C_{t-1}^{(l)}
=
\sum_{s=1}^{t-1}
\sum_{x\in\mathcal{B}_s^{(l)}}
k_s^{(l)}(x){k_s^{(l)}(x)}^\top,
\]
we have
\[
\widetilde C_{t-1}^{(l)}
=
\sum_{s=1}^{t-1}
\sum_{x\in\mathcal{B}_s^{(l)}}
z_s^{(l)}(x){z_s^{(l)}(x)}^\top.
\]
To keep constant per-edit overhead, we approximate each step-wise second-order
statistic by a rank-one sketch using the pooled feature $\bar z_s^{(l)}$ and maintain $S_{t-1}^{(l)}$ via Eq.~\eqref{eq:S-def}.

\subsection{Closed Form and Sherman--Morrison Recursion in the Steady Space}
\label{app:rls-deriv}

At edit step $t$, the current write uses the steady-space statistic accumulated
before the current step. Consider Eq.~\eqref{eq:proj-obj}:
\[
\min_{\Delta B^{(l)}}\
\|\Delta B^{(l)}+\eta G_t^{(l)}\|_F^2
+
\mathrm{Tr}\!\left(
\Delta B^{(l)} S_{t-1}^{(l)} (\Delta B^{(l)})^\top
\right).
\]
Taking derivatives gives
\[
2(\Delta B^{(l)}+\eta G_t^{(l)})
+
2\Delta B^{(l)}S_{t-1}^{(l)}
=
0,
\]
hence
\[
\Delta B_t^{(l)}
=
-\eta\,G_t^{(l)}(I+S_{t-1}^{(l)})^{-1}.
\]
Define
\[
P_{t-1}^{(l)}
\triangleq
(I+S_{t-1}^{(l)})^{-1}.
\]
Then the current write is
\[
\Delta B_t^{(l)}
=
-\eta\,G_t^{(l)}P_{t-1}^{(l)}.
\]

After applying the current write, the current step feature $\bar z_t^{(l)}$ is
inserted into the recursive statistic:
\[
S_t^{(l)}
=
S_{t-1}^{(l)}
+
\bar z_t^{(l)}(\bar z_t^{(l)})^\top.
\]
Therefore,
\[
(I+S_t^{(l)})
=
(I+S_{t-1}^{(l)})
+
\bar z_t^{(l)}(\bar z_t^{(l)})^\top.
\]
Applying the Sherman--Morrison lemma with $u=v=\bar z_t^{(l)}$ yields
\[
P_t^{(l)}
=
P_{t-1}^{(l)}
-
\frac{
P_{t-1}^{(l)} \bar z_t^{(l)} (\bar z_t^{(l)})^\top P_{t-1}^{(l)}
}{
1 + (\bar z_t^{(l)})^\top P_{t-1}^{(l)} \bar z_t^{(l)}
},
\]
which is Eq.~\eqref{eq:sm-update}. Thus, $P_{t-1}^{(l)}$ is used to compute the
current write $\Delta B_t^{(l)}$, while the updated $P_t^{(l)}$ is cached for
the next edit step $t+1$.
Finally, because $S_0^{(l)}=\lambda I_r$ with $\lambda>0$ and each update adds
a PSD rank-one matrix, $I+S_t^{(l)}\succ 0$ for all $t$. Hence $P_t^{(l)}$ is
always well-defined. The recursion updates each layer in $O(r^2)$ time and
stores only $P_t^{(l)}\in\mathbb{R}^{r\times r}$.

% =========================
% Appendix: Complexity Derivations and Details
% =========================
\section{Complexity Derivations and Details}
\label{app:complexity}

\paragraph{Scope and accounting.}
We analyze the \emph{editor-specific} overhead per online edit step, i.e.,
(i) statistics maintained by the editor and (ii) parameter updates applied by the editor.
We \emph{exclude} the forward/backward cost for computing edit gradients, since it is shared by parameter-modifying editors under the same training objective and hardware settings.
Unless stated otherwise, we assume a constant-size step-only buffer $\mathcal{B}_t$ (per layer) with $b\triangleq|\mathcal{B}_t|=O(1)$.

\subsection{Notation}
\label{app:complexity_notation}
Let $l\in\mathcal{L}$ index the edited FFN layers.
Denote the FFN key/input dimension by $d$ and the FFN output dimension by $d_{\text{out}}$.
M-ORE uses a fixed steady-space rank $r\ll d$ and a frozen orthogonal basis
$A^{(l)}\in\mathbb{R}^{r\times d}$ (Eq.~\eqref{eq:lora} in the main paper).
The editable write parameters are $B^{(l)}_{t-1}\in\mathbb{R}^{d_{\text{out}}\times r}$.
At edit step $t$, let $k_t^{(l)}(x)\in\mathbb{R}^{d}$ be the FFN key vector for editing sample input $x$,
and $G_t^{(l)}=\nabla_{B^{(l)}_{t-1}}\mathcal{L}_{\text{edit}}\in\mathbb{R}^{d_{\text{out}}\times r}$ the edit gradient in the steady space.
The steady-space statistic is $\bar z_t^{(l)}\in\mathbb{R}^{r}$ and the preconditioner is $P_t^{(l)}\in\mathbb{R}^{r\times r}$.

\subsection{M-ORE: Per-edit Time Complexity}
\label{app:complexity_more_time}

\paragraph{Step A: preconditioned write.}
The layer-wise write at edit step $t$ uses the preconditioner accumulated
before the current step:
\begin{equation}
\small
\Delta B_t^{(l)}
=
-\eta\,G_t^{(l)}P_{t-1}^{(l)},
\end{equation}
where $G_t^{(l)}\in\mathbb{R}^{d_{\text{out}}\times r}$ and
$P_{t-1}^{(l)}\in\mathbb{R}^{r\times r}$.
Right-multiplying by $P_{t-1}^{(l)}$ costs $O(d_{\text{out}}r^2)$.

\paragraph{Step B: pooled steady-space statistic.}
After the current write, M-ORE forms the pooled key for updating the next
preconditioner:
\begin{equation}
\small
\bar k_t^{(l)}
\triangleq
\mathrm{Mean}_{x\in\mathcal{B}_t^{(l)}}\!\big(k_t^{(l)}(x)\big)
\in\mathbb{R}^{d},
\end{equation}
which requires a reduction over $b$ vectors of length $d$, i.e., $O(bd)$ time.
It then projects the pooled key into the steady subspace,
\begin{equation}
\small
\bar z_t^{(l)} = A^{(l)}\bar k_t^{(l)}\in\mathbb{R}^{r},
\end{equation}
which costs $O(rd)$ time since $A^{(l)}\in\mathbb{R}^{r\times d}$.
Equivalently, by linearity,
$\bar z_t^{(l)}
=
\mathrm{Mean}_{x\in\mathcal{B}_t^{(l)}}
\big(A^{(l)}k_t^{(l)}(x)\big)$.

\paragraph{Step C: Sherman--Morrison update for the next preconditioner.}
M-ORE inserts the current step feature $\bar z_t^{(l)}$ into the recursive
statistic via the Sherman--Morrison lemma, yielding the preconditioner for the
next edit step:
\begin{equation}
\small
P_t^{(l)}
=
P_{t-1}^{(l)}
-
\frac{
P_{t-1}^{(l)}\bar z_t^{(l)}(\bar z_t^{(l)})^\top P_{t-1}^{(l)}
}{
1+(\bar z_t^{(l)})^\top P_{t-1}^{(l)}\bar z_t^{(l)}
}.
\end{equation}
This can be computed via the standard rank-one routine:
compute $u=P_{t-1}^{(l)}\bar z_t^{(l)}$ in $O(r^2)$ time, compute the scalar
denominator in $O(r)$ time, and apply the outer-product update
$P_{t-1}^{(l)}-uu^\top/\mathrm{den}$ in $O(r^2)$ time.
Thus, Step C is $O(r^2)$.

\paragraph{Total per-edit time (exact and simplified).}
Summing Steps A--C for a single layer gives
\begin{equation}
\small
O\big(d_{\text{out}}r^2 + bd + rd + r^2\big).
\end{equation}
Aggregating over all edited layers $|\mathcal{L}|$ yields the exact per-edit time:
\begin{equation}
\small
T_{\text{M-ORE}}
= O\!\Big(|\mathcal{L}|\,(d_{\text{out}}r^2 + bd + rd + r^2)\Big).
\end{equation}
With a constant-size buffer ($b=O(1)$) and $r\ll d,d_{\text{out}}$, we drop lower-order terms and obtain:
\begin{equation}
\small
T_{\text{M-ORE}}
= O\!\Big(|\mathcal{L}|\,(d_{\text{out}}r^2 + rd + r^2)\Big)
\approx O\!\big(|\mathcal{L}|\,d_{\text{out}}r^2\big).
\end{equation}

\subsection{M-ORE: Per-edit Space Complexity}
\label{app:complexity_more_mem}

\paragraph{Stored state.}
For each edited layer $l\in\mathcal{L}$, M-ORE stores:
(i) the frozen low-rank write basis $A^{(l)}\in\mathbb{R}^{r\times d}$,
(ii) the low-rank write parameters $B^{(l)}\in\mathbb{R}^{d_{\text{out}}\times r}$,
and (iii) the steady-space preconditioner $P^{(l)}\in\mathbb{R}^{r\times r}$.
Therefore, the memory cost is
\begin{equation}
\small
M_{\text{M-ORE}}
= O\!\Big(|\mathcal{L}|\,(rd + d_{\text{out}}r + r^2)\Big).
\end{equation}
Since $r\ll d,d_{\text{out}}$ and $d$ is typically of the same order as
$d_{\text{out}}$ for the edited modules, the lower-order $r^2$ term can be
ignored, yielding
\begin{equation}
\small
M_{\text{M-ORE}}
\approx O\!\Big(|\mathcal{L}|\,r(d+d_{\text{out}})\Big)
\approx O\!\big(|\mathcal{L}|\,d_{\text{out}}r\big).
\end{equation}
The buffer memory is $O(bd)$ per layer if keys are stored explicitly; in our implementation
$b$ is constant and keys can be recomputed on-the-fly, so the asymptotic dependence on the edit length $t$
remains unchanged.

\paragraph{Independence from edit length $t$.}
M-ORE maintains only fixed-size per-layer state ($B^{(l)}$ and $P^{(l)}$) and does not store
edit-specific experts or retrieval items.
Thus, both $T_{\text{M-ORE}}$ and $M_{\text{M-ORE}}$ are $O(1)$ with respect to the edit stream length $t$.

\subsection{Baseline Complexity Derivations}
\label{app:complexity_baselines}
Below, we justify the baseline entries in Table~\ref{tab:complexity}.
We again consider editor-specific overhead; when a method also induces inference-time costs (e.g., selection/retrieval),
we report those costs explicitly.

\subsubsection{Locate-then-edit}
Many locate-then-edit editors require solving a regularized system involving a dense second-order statistic
(e.g., covariance/Gram matrix) $\Sigma\in\mathbb{R}^{d\times d}$, such as computing
a factorization of $\Sigma+\lambda I$ or its inverse, and then applying it to obtain a closed-form update.
A standard dense factorization (SVD/Cholesky) has $O(d^3)$ time and $O(d^2)$ memory.
If a factorization is precomputed and cached, each edit still applies a dense linear map, which costs $O(d^2)$ time,
while the stored statistic/factor remains $O(d^2)$ memory.
This yields the table entry: $T=O(d^3)$ (or $O(d^2)$ apply) and $M=O(d^2)$.

\subsubsection{Null-space constraint (SVD on accumulated keys $K_t$)}
Null-space constrained editors maintain constraints w.r.t.\ accumulated keys.
Let $K_t=[k_1,\ldots,k_t]\in\mathbb{R}^{d\times t}$ be the matrix of past keys.
Computing an orthogonal complement of $\mathrm{span}(K_t)$ via SVD on the tall matrix $K_t$
typically costs $O(d\,t^2)$ time and requires storing $K_t$ or its bases, i.e., $O(d\,t)$ memory, for $d\gg t$.
Hence both time and memory grow with $t$.

\subsubsection{Parameter-preserving memory/expert methods}
These methods store edit-specific items (e.g., adapters/experts or memory entries).
If each edit stores an item of size $p_{\text{mem}}$, then memory grows linearly as $M=O(t\,p_{\text{mem}})$.
At inference, selecting among $t$ items costs $\mathrm{Sel}(t)$, commonly $O(t)$ for brute-force scan
or $O(\log t)$ (or sublinear) with approximate nearest-neighbor indexing.
This justifies the table entry $T=O(\mathrm{Sel}(t))$ and $M=O(t\,p_{\text{mem}})$.

\subsubsection{Naive finetuning (LoRA rank $r$)}
Under the same low-rank interface (updating $B^{(l)}\in\mathbb{R}^{d_{\text{out}}\times r}$ with SGD/Adam)
but without the preconditioner recursion, the editor-specific update is a single step
$B^{(l)}\leftarrow B^{(l)}-\eta G_t^{(l)}$, which costs $O(d_{\text{out}}r)$ time per layer.
Storing the trainable parameters costs $O(d_{\text{out}}r)$ per layer.
Thus, this is $T=O(|\mathcal{L}|\,d_{\text{out}}r)$ and $M=O(|\mathcal{L}|\,d_{\text{out}}r)$,
both independent of $t$.

\begin{table*}[t]
\centering
\footnotesize
\caption{Complete editing results on E-VQA and E-IC for BLIP2-OPT and LLaVA-v1.5 under different edit horizons. “Rel.”, “T/M-Gen.”, and “T/M-Loc.” abbreviate \emph{Reliability}, \emph{Generality}, and \emph{Locality} (for text/modal evaluations), respectively.
The subscript of each method (e.g., $_1$, $_{10}$, $_{100}$) denotes the number of online edits performed. Rows shaded in light purple indicate \emph{parameter-modifying} methods.}
\label{tab:full_online_editing}
\setlength{\tabcolsep}{3pt}
\renewcommand{\arraystretch}{1.05}

\begin{tabular}{>{\centering\arraybackslash}p{2.5em}| c |cccccc |cccccc}
\toprule
\multirow{2}{*}{\textbf{Model}} & \multirow{2}{*}{\textbf{Methods}} &
\multicolumn{6}{c|}{\textbf{E-VQA}} & \multicolumn{6}{c}{\textbf{E-IC}} \\
\cmidrule(lr){3-8} \cmidrule(lr){9-14}
& & Rel. & T-Gen. & M-Gen. & T-Loc. & M-Loc. & Avg. &
Rel. & T-Gen. & M-Gen. & T-Loc. & M-Loc. & Avg. \\
\midrule

% ===================== Model: BLIP2-OPT =====================

\multirow{24}{*}{\rotatebox[origin=c]{90}{\textbf{BLIP2-OPT}}}

& \cellcolor{lightgray} FT-L$_{1}$   
& \cellcolor{lightgray} 100.00 
& \cellcolor{lightgray} 100.00 
& \cellcolor{lightgray} 60.00 
& \cellcolor{lightgray} 94.74 
& \cellcolor{lightgray} 100.00 
& \cellcolor{lightgray} 90.95
& \cellcolor{lightgray} 96.77 
& \cellcolor{lightgray} 95.02 
& \cellcolor{lightgray} 90.72 
& \cellcolor{lightgray} 90.05 
& \cellcolor{lightgray} 68.27 
& \cellcolor{lightgray} 88.16 \\

& \cellcolor{lightgray} FT-M$_{1}$   
& \cellcolor{lightgray} 100.00 
& \cellcolor{lightgray} 96.67 
& \cellcolor{lightgray} 63.33 
& \cellcolor{lightgray} 100.00 
& \cellcolor{lightgray} 73.33 
& \cellcolor{lightgray} 86.67
& \cellcolor{lightgray} 100.00 
& \cellcolor{lightgray} 100.00 
& \cellcolor{lightgray} 76.92 
& \cellcolor{lightgray} 100.00 
& \cellcolor{lightgray} 24.79 
& \cellcolor{lightgray} 80.34 \\

& \cellcolor{lightgray} MEND$_{1}$    
& \cellcolor{lightgray} 100.00 
& \cellcolor{lightgray} 100.00 
& \cellcolor{lightgray} 100.00 
& \cellcolor{lightgray} 60.61 
& \cellcolor{lightgray} 33.33 
& \cellcolor{lightgray} 78.79
& \cellcolor{lightgray} 100.00 
& \cellcolor{lightgray} 100.00 
& \cellcolor{lightgray} 100.00 
& \cellcolor{lightgray} 84.21 
& \cellcolor{lightgray} 100.00 
& \cellcolor{lightgray} 96.84 \\

& \cellcolor{lightgray} AlphaEdit$_{1}$    
& \cellcolor{lightgray} 60.00 
& \cellcolor{lightgray} 50.00 
& \cellcolor{lightgray} 40.00 
& \cellcolor{lightgray} 91.64 
& \cellcolor{lightgray} 73.33 
& \cellcolor{lightgray} 62.99
& \cellcolor{lightgray} 30.77 
& \cellcolor{lightgray} 30.77 
& \cellcolor{lightgray} 30.77 
& \cellcolor{lightgray} 89.47 
& \cellcolor{lightgray} 100.00 
& \cellcolor{lightgray} 56.35 \\

& SERAC$_{1}$ & 100.00 & 100.00 & 100.00 & 89.47 & 80.00 & 93.89 & 97.38 & 95.52 & 86.11 & 100.00 & 63.82 & 88.57 \\
& IKE$_{1}$   & 100.00 & 100.00 & 100.00 & 52.63 & 13.33 & 73.19 & 100.00 & 100.00 & \textbf{100.00} & 63.16 & 10.00 & 74.63 \\
& LiveEdit$_{1}$  & 92.68 & 92.33 & 89.25 & \textbf{100.00} & 95.57 & 93.97 & 80.67 & 80.67 & 77.79 & 100.00 & 98.09 & 87.44 \\

& \cellcolor{ours}\textbf{M-ORE$_{1}$ (Ours)}
  & \cellcolor{ours}\textbf{100.00}
  & \cellcolor{ours}\textbf{100.00}
  & \cellcolor{ours}\textbf{100.00}
  & \cellcolor{ours}94.74
  & \cellcolor{ours}\textbf{100.00}
  & \cellcolor{ours}\textbf{98.95}
  & \cellcolor{ours}\textbf{100.00}
  & \cellcolor{ours}\textbf{100.00}
  & \cellcolor{ours}93.75
  & \cellcolor{ours}\textbf{100.00}
  & \cellcolor{ours}\textbf{100.00}
  & \cellcolor{ours}\textbf{98.75} \\

\addlinespace[2pt]
\cmidrule{2-14}

& \cellcolor{lightgray}FT-L$_{10}$   
& \cellcolor{lightgray}80.00 
& \cellcolor{lightgray}60.00 
& \cellcolor{lightgray}40.00 
& \cellcolor{lightgray}90.06 
& \cellcolor{lightgray}87.92 
& \cellcolor{lightgray}71.59
& \cellcolor{lightgray}92.09 
& \cellcolor{lightgray}90.31 
& \cellcolor{lightgray}70.43 
& \cellcolor{lightgray}91.85 
& \cellcolor{lightgray}55.52 
& \cellcolor{lightgray}80.04 \\

& \cellcolor{lightgray}FT-M$_{10}$  
& \cellcolor{lightgray}76.67 
& \cellcolor{lightgray}73.33 
& \cellcolor{lightgray}43.33 
& \cellcolor{lightgray}\textbf{100.00} 
& \cellcolor{lightgray}43.33 
& \cellcolor{lightgray}67.33
& \cellcolor{lightgray}80.48 
& \cellcolor{lightgray}80.44 
& \cellcolor{lightgray}65.91 
& \cellcolor{lightgray}\textbf{100.00} 
& \cellcolor{lightgray}11.56 
& \cellcolor{lightgray}67.68 \\

& \cellcolor{lightgray}MEND$_{10}$  
& \cellcolor{lightgray}2.33 
& \cellcolor{lightgray}2.33 
& \cellcolor{lightgray}2.67 
& \cellcolor{lightgray}69.41 
& \cellcolor{lightgray}60.00 
& \cellcolor{lightgray}27.35
& \cellcolor{lightgray}0.00 
& \cellcolor{lightgray}0.00 
& \cellcolor{lightgray}0.00 
& \cellcolor{lightgray}28.48 
& \cellcolor{lightgray}30.00 
& \cellcolor{lightgray}11.70 \\

& \cellcolor{lightgray}AlphaEdit$_{10}$  
& \cellcolor{lightgray}40.83 
& \cellcolor{lightgray}38.33 
& \cellcolor{lightgray}32.50 
& \cellcolor{lightgray}88.19 
& \cellcolor{lightgray}64.17 
& \cellcolor{lightgray}52.80
& \cellcolor{lightgray}37.35 
& \cellcolor{lightgray}36.17 
& \cellcolor{lightgray}34.30 
& \cellcolor{lightgray}95.39 
& \cellcolor{lightgray}84.81 
& \cellcolor{lightgray}57.60 \\

& SERAC$_{10}$ & 90.40 & 92.05 & 90.62 & 89.00 & 40.06 & 80.43 & 86.65 & 87.49 & 87.22 & 91.24 & 60.48 & 82.62 \\
& IKE$_{10}$   & 91.80 & 91.74 & 90.95 & 73.06 & 8.67 & 71.24 & 80.40 & 80.46 & \textbf{90.23} & 73.44 & 11.83 & 67.27 \\
& LiveEdit$_{10}$  & 92.39 & 91.73 & 85.46 & 99.67 & 95.33 & 92.92 & 79.80 & 77.21 & 67.93 & 98.96 & 96.37 & 84.05 \\

& \cellcolor{ours}\textbf{M-ORE$_{10}$ (Ours)}
  & \cellcolor{ours}\textbf{100.00}
  & \cellcolor{ours}\textbf{96.67}
  & \cellcolor{ours}\textbf{100.00}
  & \cellcolor{ours}92.73
  & \cellcolor{ours}\textbf{96.67}
  & \cellcolor{ours}\textbf{97.21}
  & \cellcolor{ours}\textbf{93.58}
  & \cellcolor{ours}\textbf{93.44}
  & \cellcolor{ours}73.16
  & \cellcolor{ours}97.78
  & \cellcolor{ours}\textbf{96.67}
  & \cellcolor{ours}\textbf{90.93} \\

\addlinespace[2pt]
\cmidrule{2-14}

& \cellcolor{lightgray}FT-L$_{100}$   
& \cellcolor{lightgray}26.00 
& \cellcolor{lightgray}18.00 
& \cellcolor{lightgray}11.00 
& \cellcolor{lightgray}86.28 
& \cellcolor{lightgray}53.12 
& \cellcolor{lightgray}38.88
& \cellcolor{lightgray}79.45 
& \cellcolor{lightgray}71.69 
& \cellcolor{lightgray}57.82 
& \cellcolor{lightgray}92.23 
& \cellcolor{lightgray}55.18 
& \cellcolor{lightgray}71.27 \\

& \cellcolor{lightgray}FT-M$_{100}$    
& \cellcolor{lightgray}58.40 
& \cellcolor{lightgray}50.85 
& \cellcolor{lightgray}43.69 
& \cellcolor{lightgray}\textbf{100.00} 
& \cellcolor{lightgray}46.85 
& \cellcolor{lightgray}59.96
& \cellcolor{lightgray}67.18 
& \cellcolor{lightgray}62.17 
& \cellcolor{lightgray}51.63 
& \cellcolor{lightgray}\textbf{100.00} 
& \cellcolor{lightgray}9.81 
& \cellcolor{lightgray}58.16 \\

& \cellcolor{lightgray}MEND$_{100}$     
& \cellcolor{lightgray}1.00 
& \cellcolor{lightgray}1.00 
& \cellcolor{lightgray}1.00 
& \cellcolor{lightgray}90.42 
& \cellcolor{lightgray}77.20 
& \cellcolor{lightgray}34.12
& \cellcolor{lightgray}0.00 
& \cellcolor{lightgray}0.00 
& \cellcolor{lightgray}0.00 
& \cellcolor{lightgray}46.96 
& \cellcolor{lightgray}54.05 
& \cellcolor{lightgray}20.20 \\

& \cellcolor{lightgray}AlphaEdit$_{100}$     
& \cellcolor{lightgray}36.83 
& \cellcolor{lightgray}28.50 
& \cellcolor{lightgray}26.78 
& \cellcolor{lightgray}86.35 
& \cellcolor{lightgray}69.97 
& \cellcolor{lightgray}49.69
& \cellcolor{lightgray}27.94 
& \cellcolor{lightgray}28.57 
& \cellcolor{lightgray}26.69 
& \cellcolor{lightgray}90.52 
& \cellcolor{lightgray}78.83 
& \cellcolor{lightgray}50.51 \\

& SERAC$_{100}$ & 88.03 & 85.18 & 88.03 & 89.95 & 37.60 & 77.76 & 74.02 & 63.79 & 56.93 & 77.52 & 42.94 & 63.04 \\
& IKE$_{100}$   & 83.53 & 83.00 & 84.72 & 74.63 & 6.37 & 66.45 & 81.18 & 71.13 & \textbf{84.30} & 79.90 & 11.93 & 65.69 \\
& LiveEdit$_{100}$  & 91.83 & 91.16 & 85.02 & 99.31 & \textbf{92.78} & 92.02 & 74.63 & 74.33 & 61.95 & 96.77 & 95.34 & 80.60 \\

& \cellcolor{ours}\textbf{M-ORE$_{100}$    (Ours)}
  & \cellcolor{ours}\textbf{96.05}
  & \cellcolor{ours}\textbf{92.05}
  & \cellcolor{ours}\textbf{94.06}
  & \cellcolor{ours}91.77
  & \cellcolor{ours}88.85
  & \cellcolor{ours}\textbf{92.56}
  & \cellcolor{ours}\textbf{83.14}
  & \cellcolor{ours}\textbf{78.17}
  & \cellcolor{ours}60.08
  & \cellcolor{ours}95.51
  & \cellcolor{ours}\textbf{95.58}
  & \cellcolor{ours}\textbf{82.49} \\

\midrule
\midrule

% ===================== Model: LLaVA-v1.5 =====================
\multirow{24}{*}{\rotatebox[origin=c]{90}{\textbf{LLaVA-v1.5}}}

& \cellcolor{lightgray}FT-L$_{1}$   
& \cellcolor{lightgray}95.66 
& \cellcolor{lightgray}99.00 
& \cellcolor{lightgray}81.84 
& \cellcolor{lightgray}89.38 
& \cellcolor{lightgray}89.98 
& \cellcolor{lightgray}91.17
& \cellcolor{lightgray}100.00 
& \cellcolor{lightgray}100.00 
& \cellcolor{lightgray}89.80 
& \cellcolor{lightgray}91.67 
& \cellcolor{lightgray}28.01 
& \cellcolor{lightgray}81.89 \\

& \cellcolor{lightgray}FT-M$_{1}$  
& \cellcolor{lightgray}95.00 
& \cellcolor{lightgray}95.00 
& \cellcolor{lightgray}79.67 
& \cellcolor{lightgray}100.00 
& \cellcolor{lightgray}85.83 
& \cellcolor{lightgray}91.10
& \cellcolor{lightgray}100.00 
& \cellcolor{lightgray}100.00 
& \cellcolor{lightgray}76.47 
& \cellcolor{lightgray}100.00 
& \cellcolor{lightgray}26.11 
& \cellcolor{lightgray}80.52 \\

& \cellcolor{lightgray}MEND $ _{1} $   
& \cellcolor{lightgray}95.53 
& \cellcolor{lightgray}95.53 
& \cellcolor{lightgray}83.79 
& \cellcolor{lightgray}74.82 
& \cellcolor{lightgray}59.65 
& \cellcolor{lightgray}81.86
& \cellcolor{lightgray}96.81 
& \cellcolor{lightgray}97.65 
& \cellcolor{lightgray}\textbf{96.44} 
& \cellcolor{lightgray}94.74 
& \cellcolor{lightgray}100.00 
& \cellcolor{lightgray}97.13 \\

& \cellcolor{lightgray}AlphaEdit $ _{1} $   
& \cellcolor{lightgray}72.57 
& \cellcolor{lightgray}72.57 
& \cellcolor{lightgray}70.67 
& \cellcolor{lightgray}88.04 
& \cellcolor{lightgray}96.67 
& \cellcolor{lightgray}80.10
& \cellcolor{lightgray}47.06 
& \cellcolor{lightgray}47.06 
& \cellcolor{lightgray}52.94 
& \cellcolor{lightgray}100.00 
& \cellcolor{lightgray}100.00 
& \cellcolor{lightgray}69.41 \\

& SERAC$_{1}$ & 90.00 & 90.00 & 60.00 & 100.00 & 23.33 & 72.67 & 88.61 & 90.17 & 81.45 & 98.24 & 50.83 & 81.86 \\
& IKE$_{1}$   & 50.00 & 50.00 & 50.00 & 58.33 & 15.00 & 44.67 & 94.12 & 94.12 & 94.12 & 62.50 & 12.50 & 71.47 \\
& LiveEdit$_{1}$  & 93.36 & 93.67 & \textbf{87.91} & 100.00 & 100.00 & 94.99 & 82.33 & 82.33 & 80.67 & 100.00 & 100.00 & 89.07 \\

& \cellcolor{ours}\textbf{M-ORE$ _{1} $  (Ours)}
  & \cellcolor{ours}\textbf{100.00}
  & \cellcolor{ours}\textbf{100.00}
  & \cellcolor{ours}85.12
  & \cellcolor{ours}\textbf{100.00}
  & \cellcolor{ours}\textbf{100.00}
  & \cellcolor{ours}\textbf{97.02}
  & \cellcolor{ours}\textbf{100.00}
  & \cellcolor{ours}\textbf{100.00}
  & \cellcolor{ours}94.12
  & \cellcolor{ours}\textbf{100.00}
  & \cellcolor{ours}\textbf{100.00}
  & \cellcolor{ours}\textbf{98.82} \\

\addlinespace[2pt]
\cmidrule{2-14}

& \cellcolor{lightgray}FT-L $ _{10} $   
& \cellcolor{lightgray}90.00 
& \cellcolor{lightgray}90.00 
& \cellcolor{lightgray}85.00 
& \cellcolor{lightgray}92.93 
& \cellcolor{lightgray}81.82 
& \cellcolor{lightgray}87.95 
& \cellcolor{lightgray}92.41 
& \cellcolor{lightgray}90.93 
& \cellcolor{lightgray}81.01 
& \cellcolor{lightgray}92.04 
& \cellcolor{lightgray}20.02 
& \cellcolor{lightgray}75.28 \\

& \cellcolor{lightgray}FT-M $ _{10} $   
& \cellcolor{lightgray}80.00 
& \cellcolor{lightgray}80.00 
& \cellcolor{lightgray}69.67 
& \cellcolor{lightgray}\textbf{100.00} 
& \cellcolor{lightgray}62.41 
& \cellcolor{lightgray}78.42
& \cellcolor{lightgray}94.50 
& \cellcolor{lightgray}92.96 
& \cellcolor{lightgray}70.12 
& \cellcolor{lightgray}100.00 
& \cellcolor{lightgray}21.41 
& \cellcolor{lightgray}75.80 \\

& \cellcolor{lightgray}MEND $ _{10} $   
& \cellcolor{lightgray}9.74 
& \cellcolor{lightgray}9.74 
& \cellcolor{lightgray}10.66 
& \cellcolor{lightgray}78.71 
& \cellcolor{lightgray}60.14 
& \cellcolor{lightgray}33.80 
& \cellcolor{lightgray}1.80 
& \cellcolor{lightgray}1.76 
& \cellcolor{lightgray}0.98 
& \cellcolor{lightgray}5.89 
& \cellcolor{lightgray}8.33 
& \cellcolor{lightgray}3.75 \\

& \cellcolor{lightgray}AlphaEdit $ _{10} $   
& \cellcolor{lightgray}74.67 
& \cellcolor{lightgray}73.48 
& \cellcolor{lightgray}71.80 
& \cellcolor{lightgray}88.37 
& \cellcolor{lightgray}91.45 
& \cellcolor{lightgray}79.95
& \cellcolor{lightgray}48.67 
& \cellcolor{lightgray}48.67 
& \cellcolor{lightgray}52.33 
& \cellcolor{lightgray}93.58 
& \cellcolor{lightgray}85.09 
& \cellcolor{lightgray}65.67 \\

& SERAC$_{10}$ & 87.33 & 86.83 & 68.68 & 97.47 & 26.43 & 73.35 & 80.09 & 77.91 & 62.94 & 75.53 & 22.12 & 63.72 \\
& IKE$_{10}$   & 38.00 & 32.67 & 36.00 & 53.81 & 11.86 & 34.47 & 85.48 & 84.18 & 85.48 & 60.70 & 10.61 & 65.29 \\
& LiveEdit$_{10}$  & 92.75 & 93.05 & 84.17 & 98.67 & \textbf{97.93} & 93.31 & 80.77 & 80.80 & 73.98 & 100.00 & \textbf{98.71} & 86.85 \\

& \cellcolor{ours}{\textbf{M-ORE}$_{10} $  (Ours)}
  & \cellcolor{ours}{\textbf{100.00}}
  & \cellcolor{ours}{\textbf{100.00}}
  & \cellcolor{ours}{\textbf{88.63}}
  & \cellcolor{ours}{99.11}
  & \cellcolor{ours}{95.00}
  & \cellcolor{ours}\textbf{96.55}
  & \cellcolor{ours}{\textbf{98.02}}
  & \cellcolor{ours}{\textbf{97.25}}
  & \cellcolor{ours}{\textbf{85.51}}
  & \cellcolor{ours}{\textbf{100.00}}
  & \cellcolor{ours}{90.24}
  & \cellcolor{ours}\textbf{94.20} \\

\addlinespace[2pt]
\cmidrule{2-14}

& \cellcolor{lightgray}FT-L$_{100} $   
  & \cellcolor{lightgray}74.55 
  & \cellcolor{lightgray}66.59 
  & \cellcolor{lightgray}66.14 
  & \cellcolor{lightgray}84.15 
  & \cellcolor{lightgray}65.71 
  & \cellcolor{lightgray}71.43 
  & \cellcolor{lightgray}86.00 
  & \cellcolor{lightgray}82.11 
  & \cellcolor{lightgray}78.15 
  & \cellcolor{lightgray}77.13 
  & \cellcolor{lightgray}11.33 
  & \cellcolor{lightgray}66.94 \\
  
& \cellcolor{lightgray}FT-M$_{100} $  
  & \cellcolor{lightgray}85.67 
  & \cellcolor{lightgray}81.75 
  & \cellcolor{lightgray}67.03 
  & \cellcolor{lightgray}\textbf{100.00} 
  & \cellcolor{lightgray}46.19 
  & \cellcolor{lightgray}76.13
  & \cellcolor{lightgray}84.57 
  & \cellcolor{lightgray}80.05 
  & \cellcolor{lightgray}62.88 
  & \cellcolor{lightgray}\textbf{100.00} 
  & \cellcolor{lightgray}8.32 
  & \cellcolor{lightgray}67.16 \\
  
& \cellcolor{lightgray}MEND$_{100} $   
  & \cellcolor{lightgray}1.09 
  & \cellcolor{lightgray}1.09 
  & \cellcolor{lightgray}1.01 
  & \cellcolor{lightgray}75.33 
  & \cellcolor{lightgray}61.67 
  & \cellcolor{lightgray}28.04
  & \cellcolor{lightgray}0.11 
  & \cellcolor{lightgray}0.08 
  & \cellcolor{lightgray}0.04 
  & \cellcolor{lightgray}25.52 
  & \cellcolor{lightgray}31.33 
  & \cellcolor{lightgray}11.42\\
  
& \cellcolor{lightgray}AlphaEdit$_{100} $  
  & \cellcolor{lightgray}71.33 
  & \cellcolor{lightgray}70.93 
  & \cellcolor{lightgray}70.67 
  & \cellcolor{lightgray}88.45 
  & \cellcolor{lightgray}77.67 
  & \cellcolor{lightgray}75.81
  & \cellcolor{lightgray}51.98 
  & \cellcolor{lightgray}48.21 
  & \cellcolor{lightgray}47.23 
  & \cellcolor{lightgray}92.59 
  & \cellcolor{lightgray}56.90 
  & \cellcolor{lightgray}59.38 \\

& SERAC$_{100}$ & 87.71 & 85.03 & 67.13 & 92.27 & 20.83 & 70.59 & 73.38 & 71.00 & 59.72 & 72.88 & 25.85 & 60.57 \\
& IKE$_{100}$   & 38.87 & 35.85 & 39.40 & 46.22 & 11.24 & 34.32 & 78.78 & 77.37 & 78.07 & 53.86 & 12.88 & 60.19 \\
& LiveEdit$_{100}$  & 90.22 & 91.39 & 81.49 & 98.05 & \textbf{95.27} & 91.28 & 78.49 & 78.77 & 65.50 & 98.77 & \textbf{96.76} & 83.66 \\

& \cellcolor{ours}\textbf{M-ORE$_{100} $  (Ours)}
  & \cellcolor{ours}\textbf{97.43}
  & \cellcolor{ours}\textbf{94.30}
  & \cellcolor{ours}\textbf{85.40}
  & \cellcolor{ours}96.32
  & \cellcolor{ours}91.90
  & \cellcolor{ours}\textbf{93.07}
  & \cellcolor{ours}\textbf{94.64}
  & \cellcolor{ours}\textbf{93.27}
  & \cellcolor{ours}\textbf{78.66}
  & \cellcolor{ours}97.39
  & \cellcolor{ours}89.71
  & \cellcolor{ours}\textbf{90.73} \\

\bottomrule
\end{tabular}

\end{table*}

\begin{figure*}[ht]
    \centering
    % --- 第一组：Sample 1 + Partial 1 ---
    \begin{subfigure}{0.9\linewidth}
        \centering
        \includegraphics[width=\linewidth]{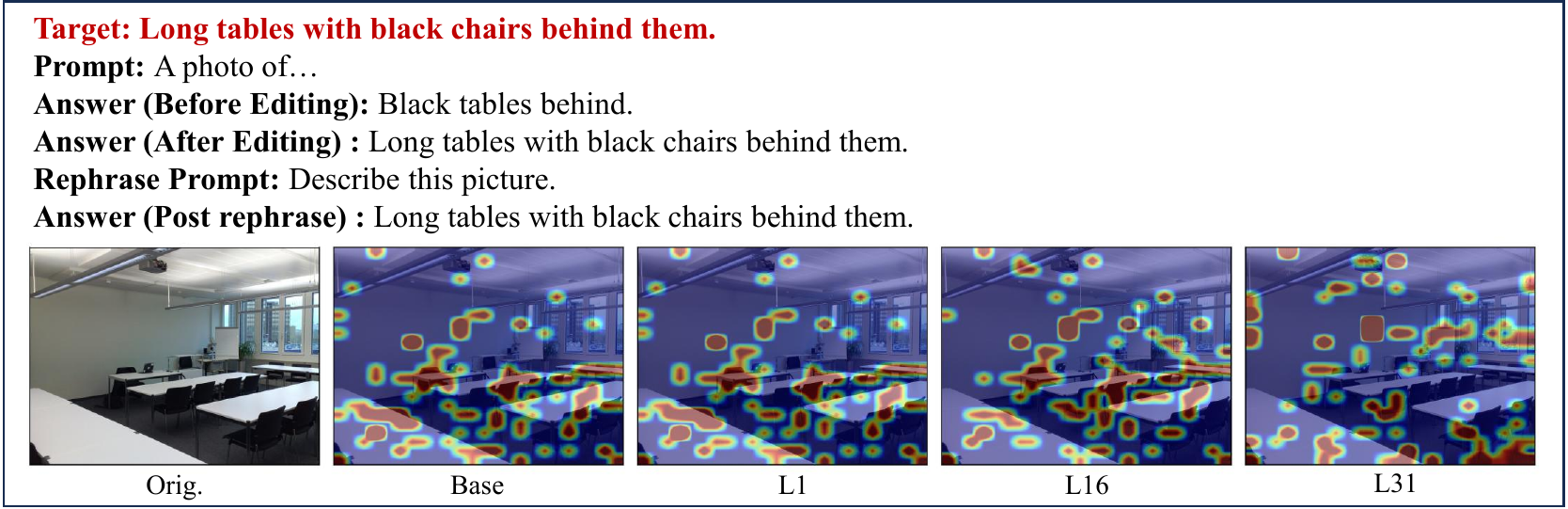}
    \end{subfigure}

    \begin{subfigure}{0.9\linewidth}
        \centering
        \includegraphics[width=\linewidth]{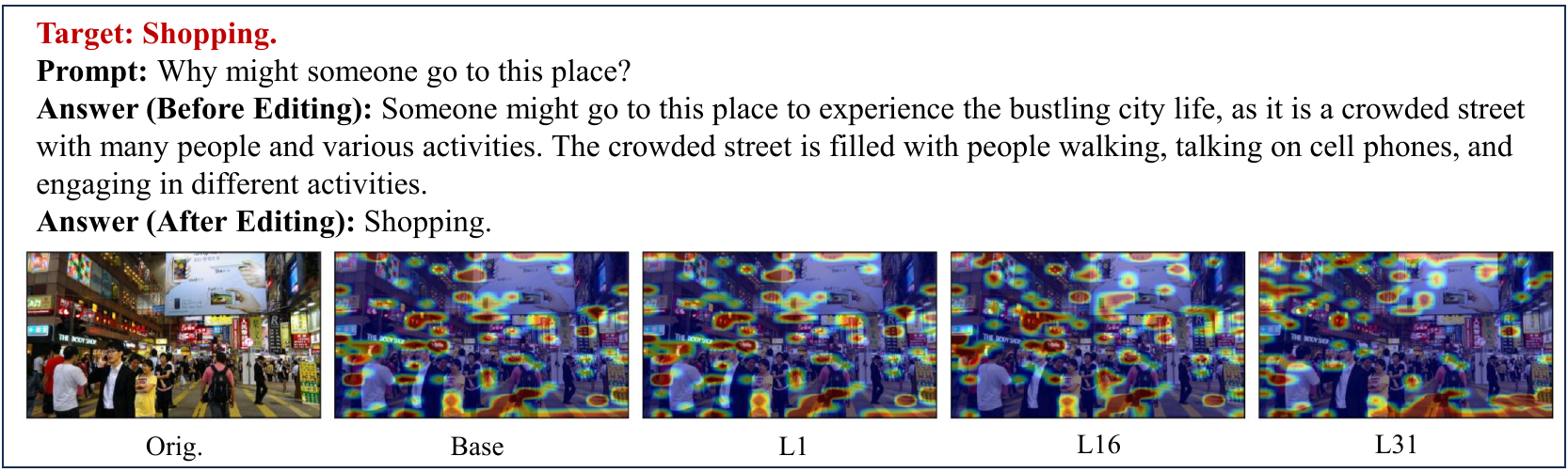}
        % \caption{Partial view / attention map for sample 1}
        \label{fig:case_partial_1}
    \end{subfigure}

    % --- 第二组：Sample 2 + Partial 2 ---
    \begin{subfigure}{0.9\linewidth}
        \centering
        \includegraphics[width=\linewidth]{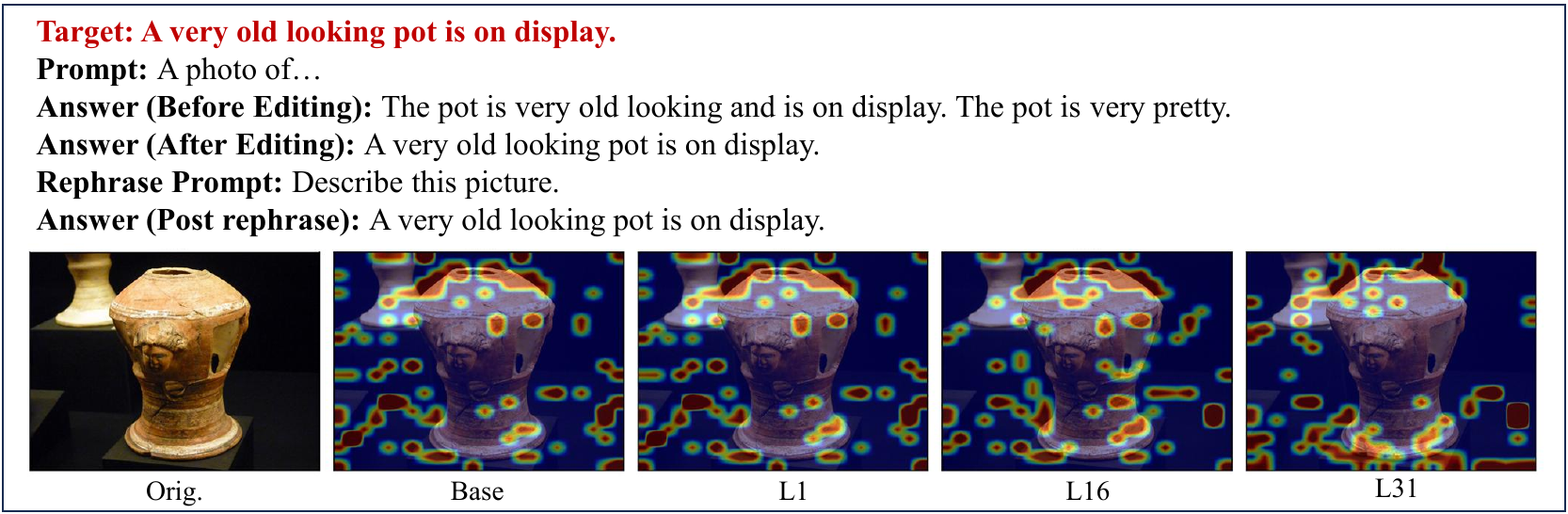}
    \end{subfigure}

    \begin{subfigure}{0.9\linewidth}
        \centering
        \includegraphics[width=\linewidth]{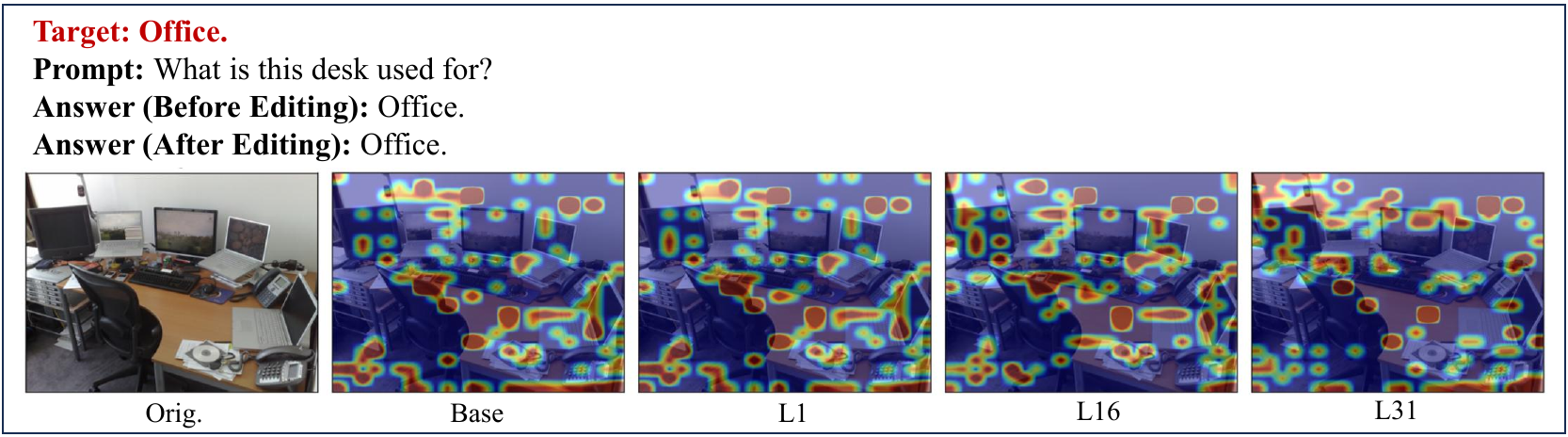}
    \end{subfigure}
\caption{
E-IC case studies (challenging visual-understanding edits).
Top two panels: Group 1; bottom two panels: Group 2.
In each group, the upper panel is the edit sample and the lower panel is the corresponding locality sample.
}
    \label{fig:Case_Study_EIC_1}
\end{figure*}

\begin{figure*}[t]
    \centering
    % --- 第三组：Sample 3 + Partial 3 ---
    \begin{subfigure}{0.9\linewidth}
        \centering
        \includegraphics[width=\linewidth]{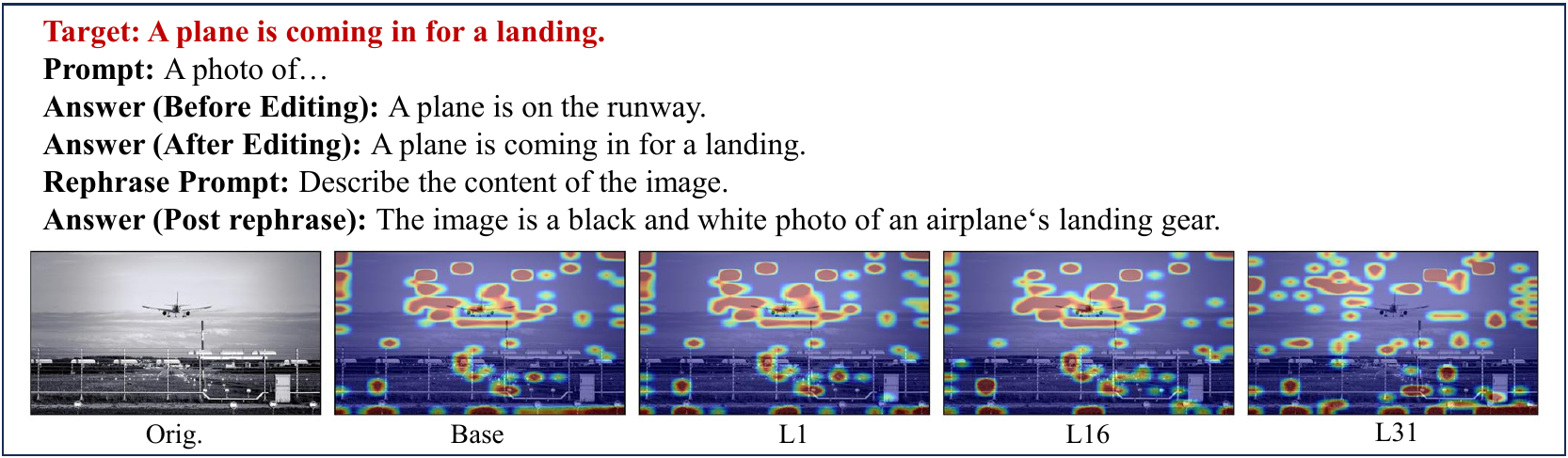}

    \end{subfigure}

    \begin{subfigure}{0.9\linewidth}
        \centering
        \includegraphics[width=\linewidth]{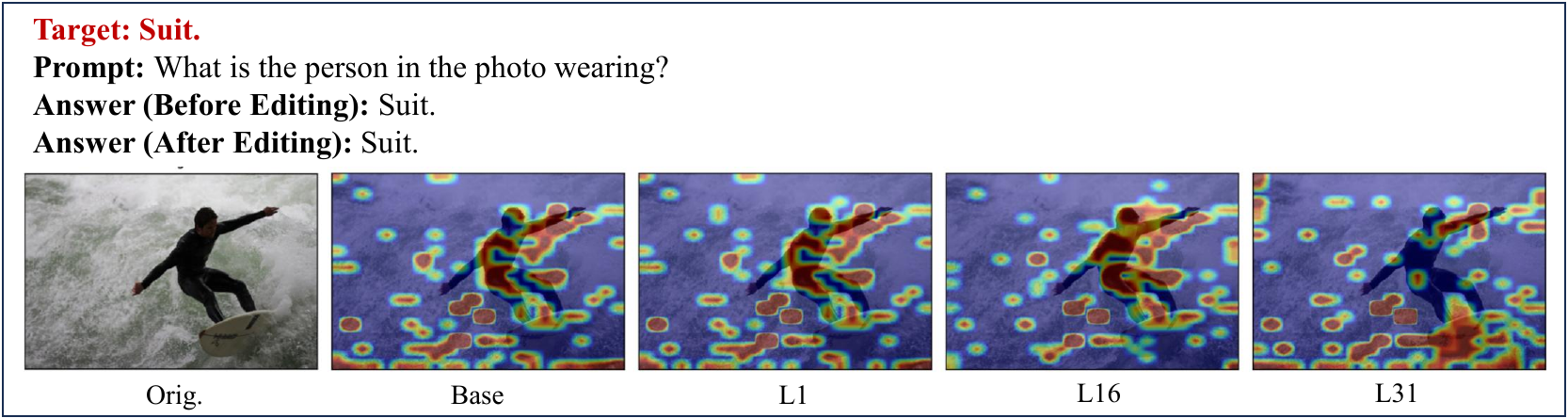}

    \end{subfigure}

    % --- 第四组：Sample 4 + Partial 4 ---
    \begin{subfigure}{0.9\linewidth}
        \centering
        \includegraphics[width=\linewidth]{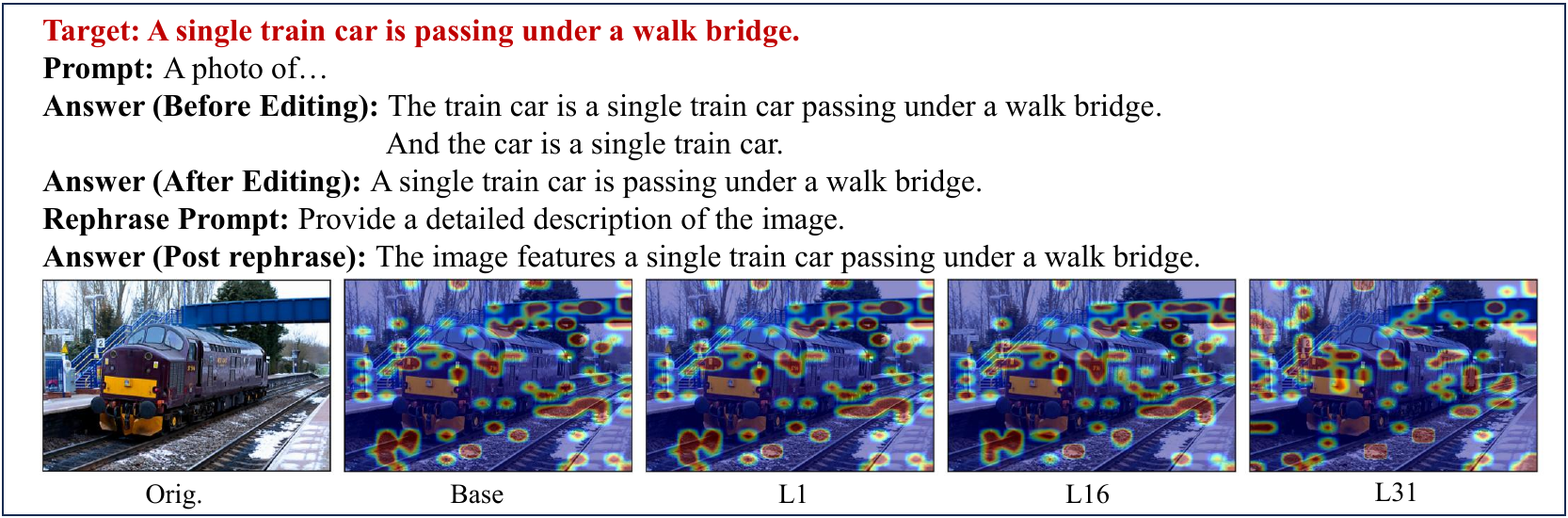}

    \end{subfigure}

    \begin{subfigure}{0.9\linewidth}
        \centering
        \includegraphics[width=\linewidth]{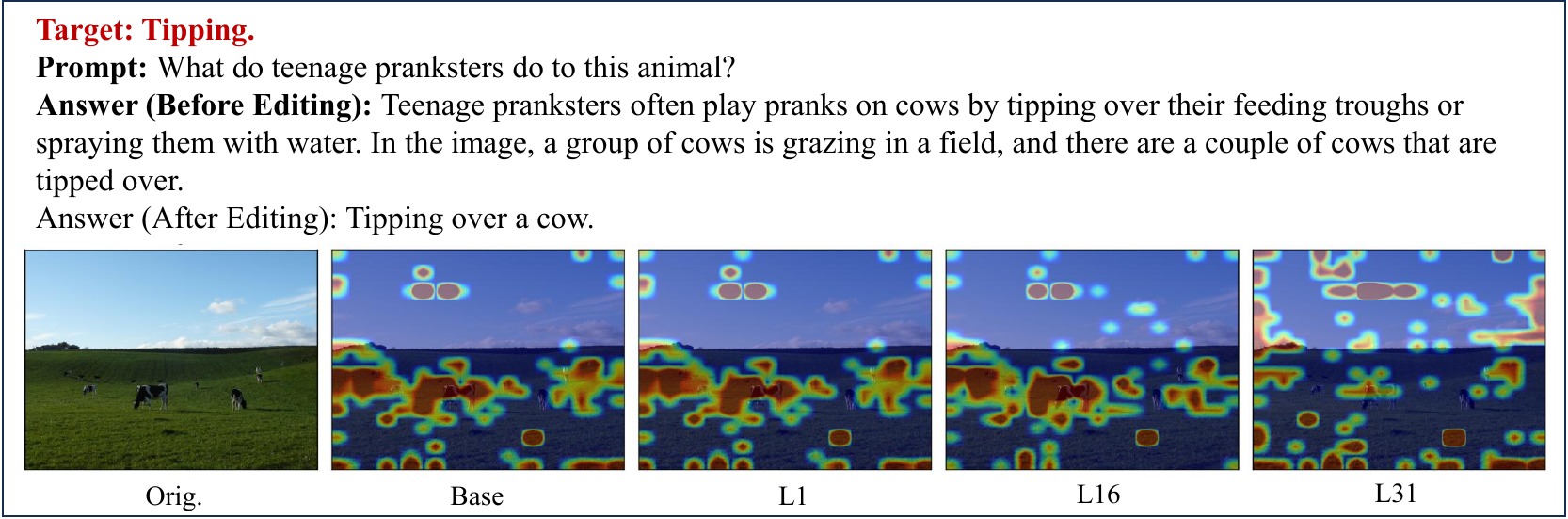}

    \end{subfigure}

    \caption{
E-IC case studies (challenging visual-understanding edits).
Top two panels: Group 1; bottom two panels: Group 2.
In each group, the upper panel is the edit sample and the lower panel is the corresponding locality sample.
    }
    \label{fig:Case_Study_EIC_2}
\end{figure*}

\begin{figure*}[t]
    \centering
    % --- EVQA第一组：Sample 1 + Partial 1 ---
    \begin{subfigure}{0.9\linewidth}
        \centering
        \includegraphics[width=\linewidth]{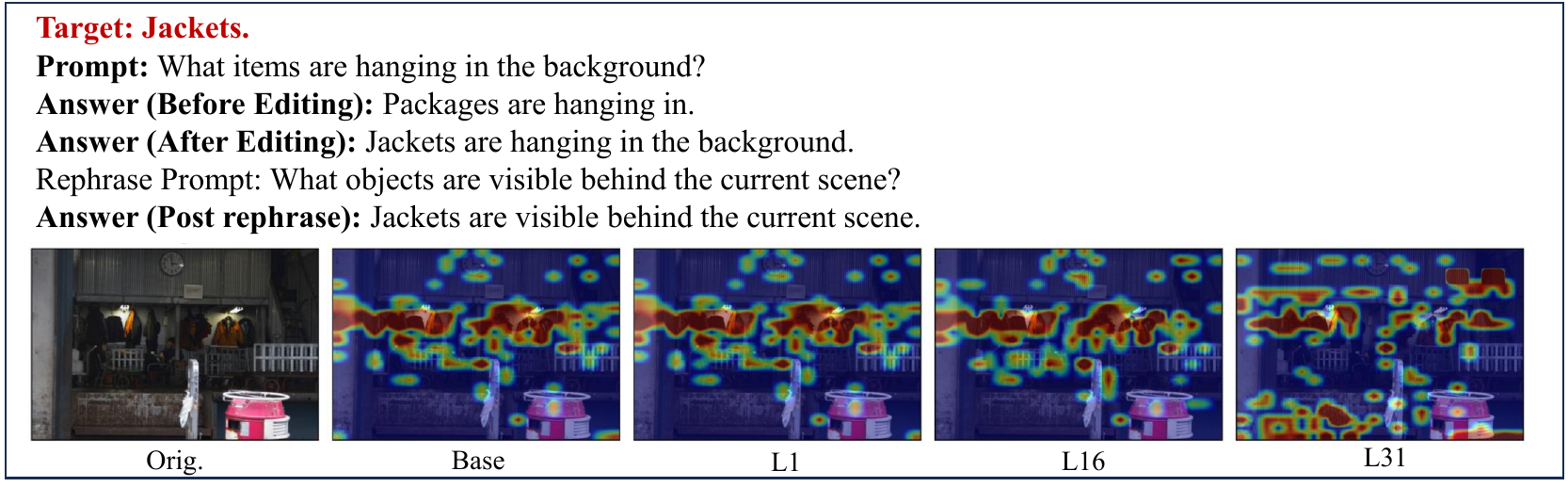}

    \end{subfigure}

    \begin{subfigure}{0.9\linewidth}
        \centering
        \includegraphics[width=\linewidth]{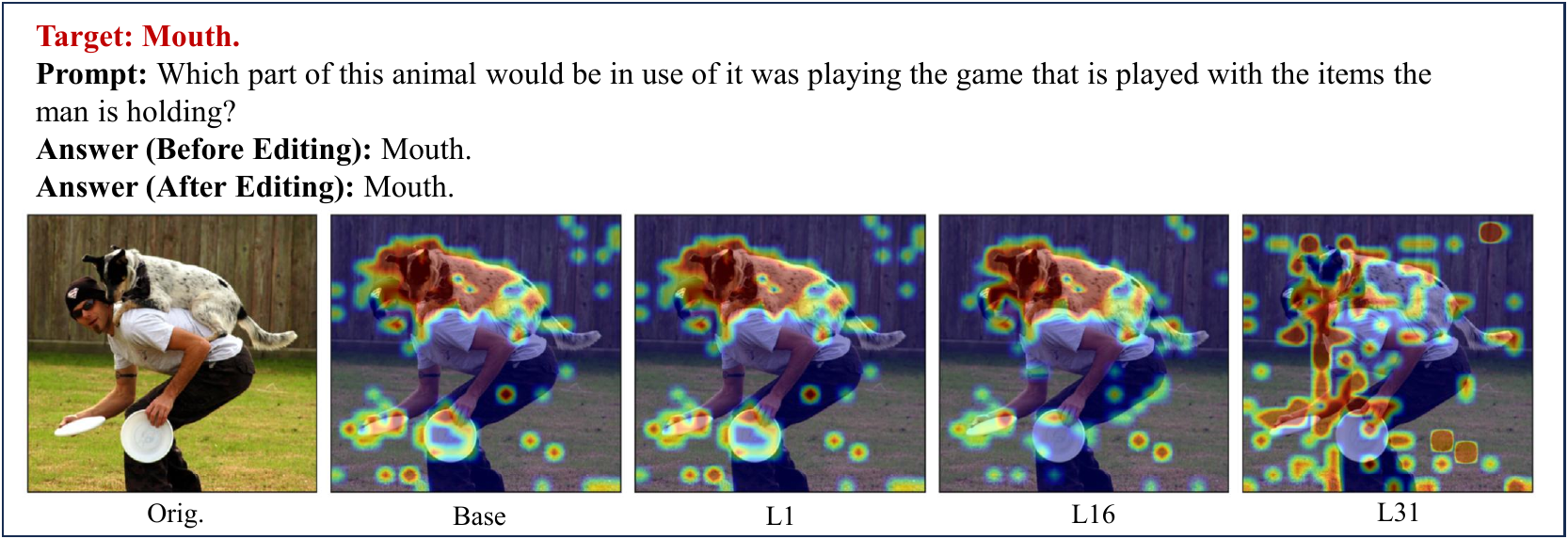}

    \end{subfigure}

    % --- EVQA第二组：Sample 2 + Partial 2 ---
    \begin{subfigure}{0.9\linewidth}
        \centering
        \includegraphics[width=\linewidth]{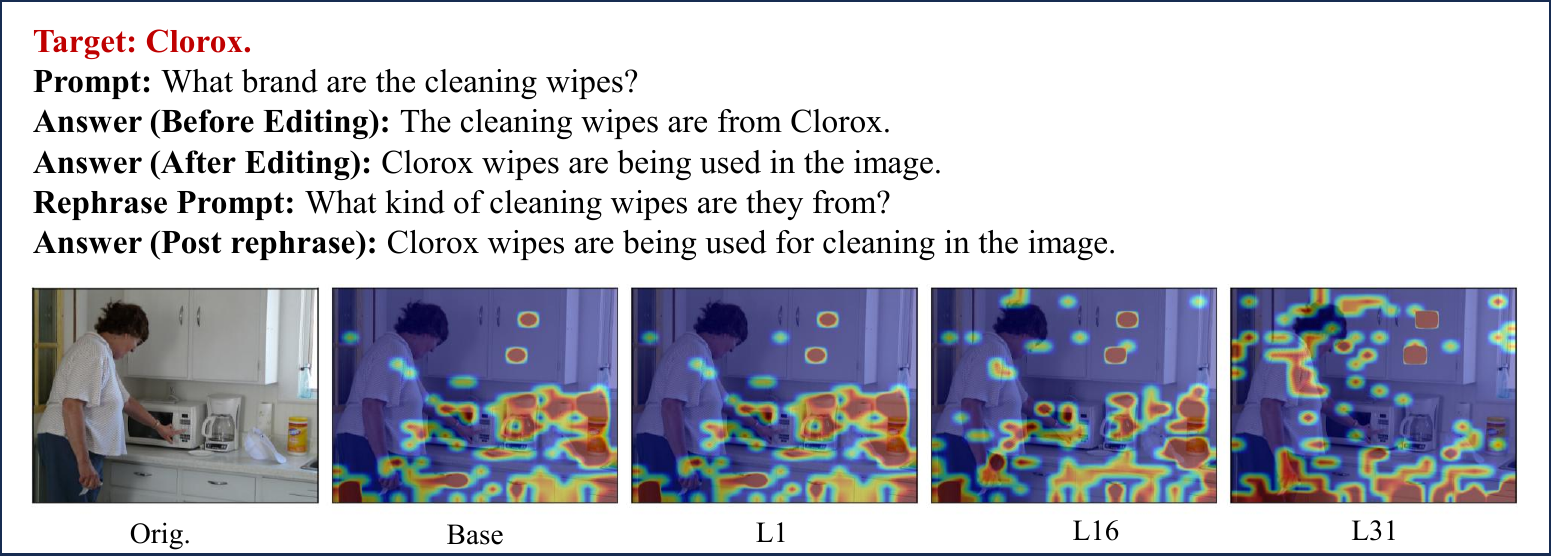}

    \end{subfigure}

    \begin{subfigure}{0.9\linewidth}
        \centering
        \includegraphics[width=\linewidth]{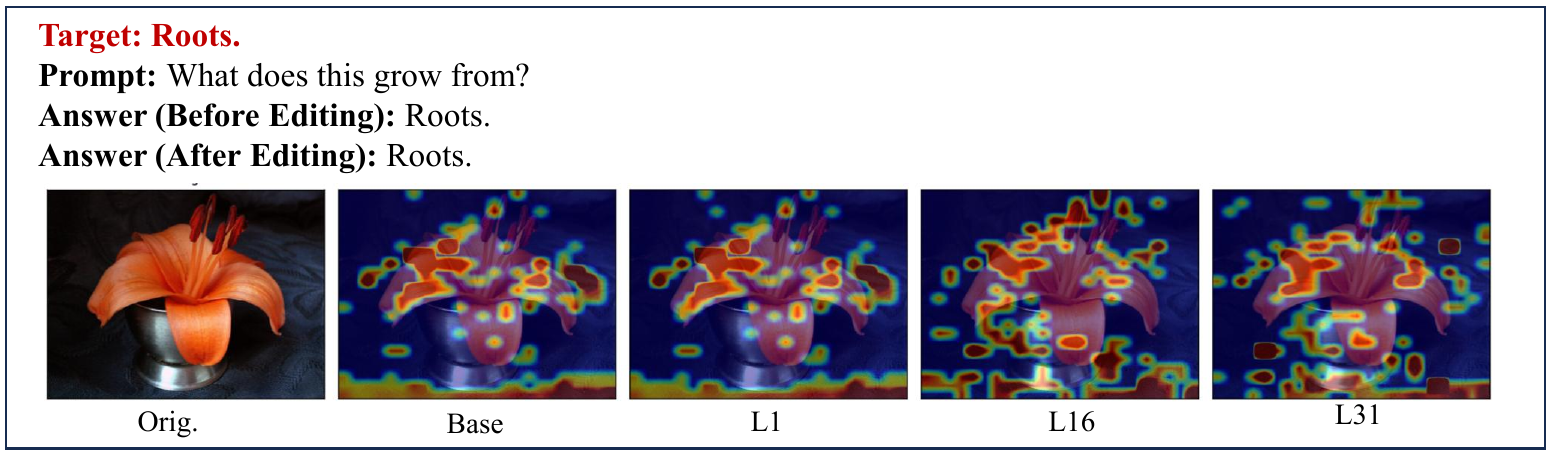}

    \end{subfigure}

    \caption{
E-VQA case studies.
Top two panels: Group 1; bottom two panels: Group 2.
In each group, the upper panel is the edit sample and the lower panel is the corresponding locality sample.
    }
    \label{fig:Case_Study_EVQA_1}
\end{figure*}

\begin{figure*}[t]
    \centering
    % --- EVQA第三组：Sample 1 + Partial 1 ---
    \begin{subfigure}{0.9\linewidth}
        \centering
        \includegraphics[width=\linewidth]{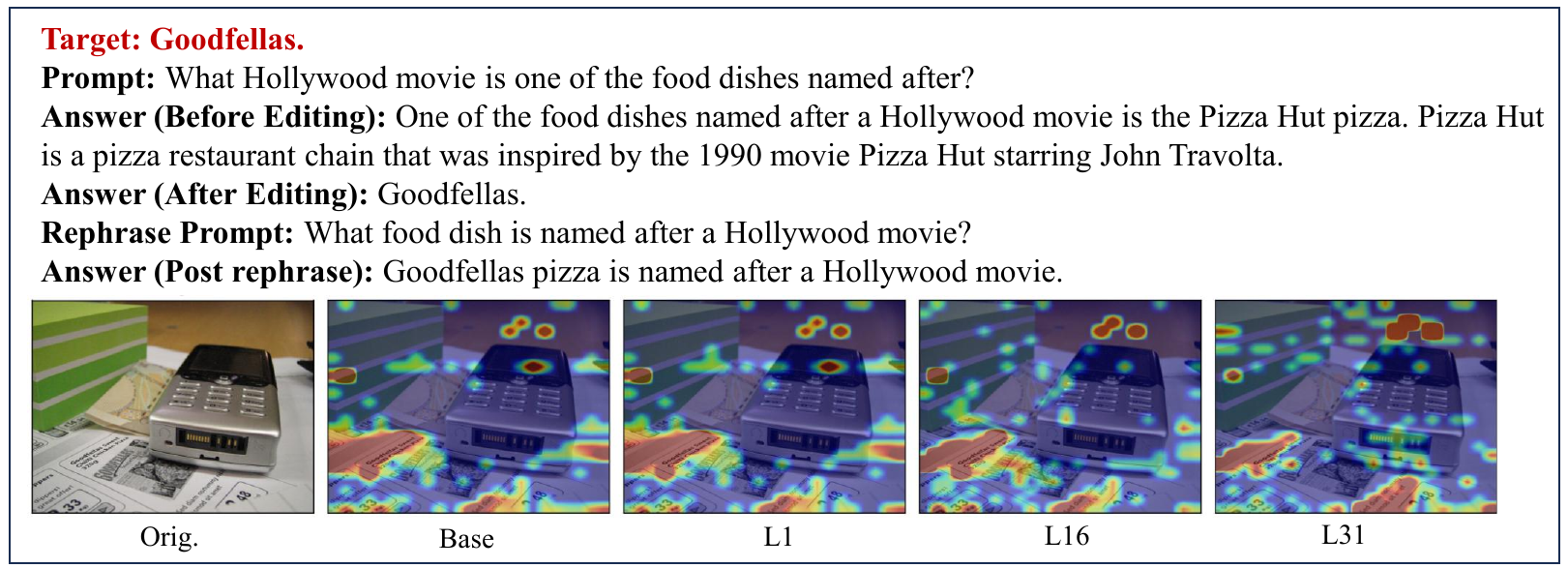}
    \end{subfigure}

    \begin{subfigure}{\linewidth}
        \centering
        \includegraphics[width=0.9\linewidth]{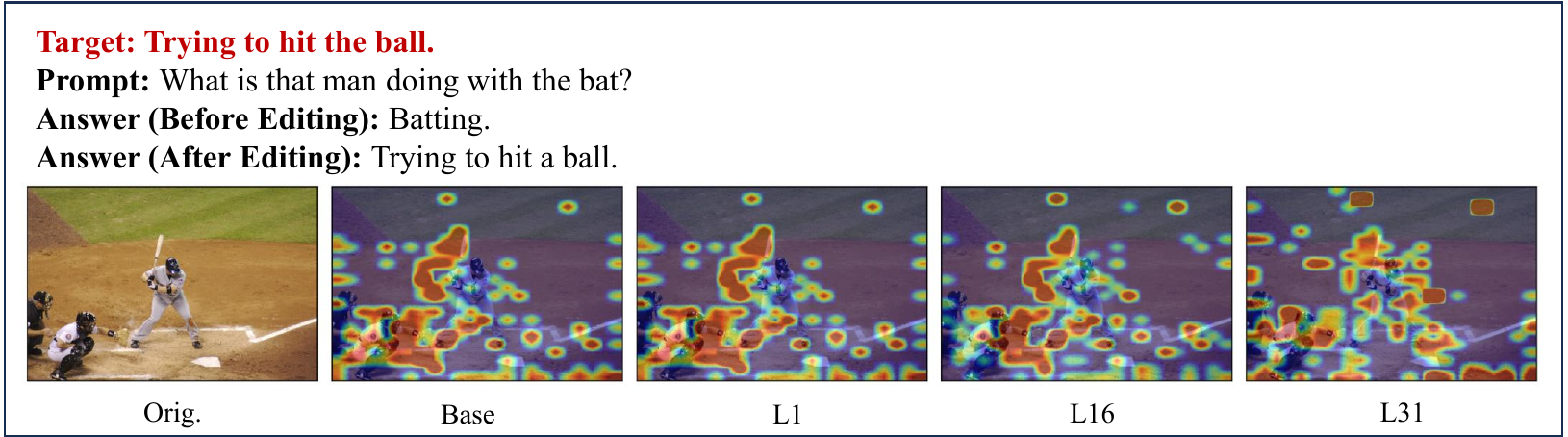}
    \end{subfigure}

    % --- EVQA第四组：Sample 2 + Partial 2 ---
    \begin{subfigure}{0.9\linewidth}
        \centering
        \includegraphics[width=\linewidth]{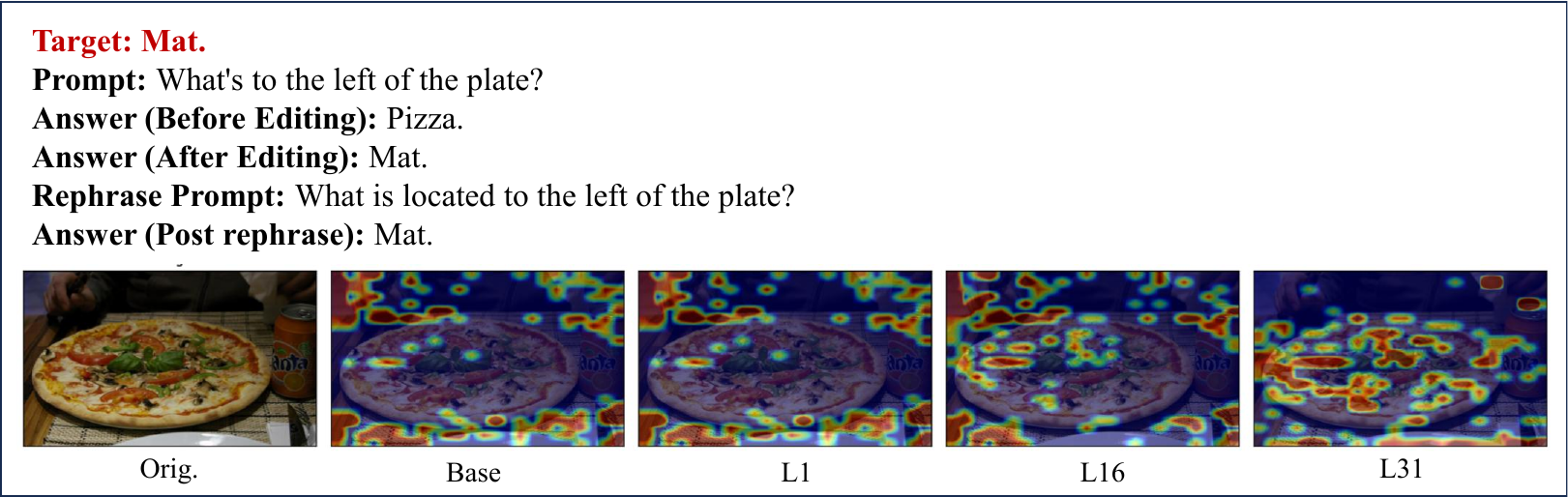}
    \end{subfigure}

    \begin{subfigure}{0.9\linewidth}
        \centering
        \includegraphics[width=\linewidth]{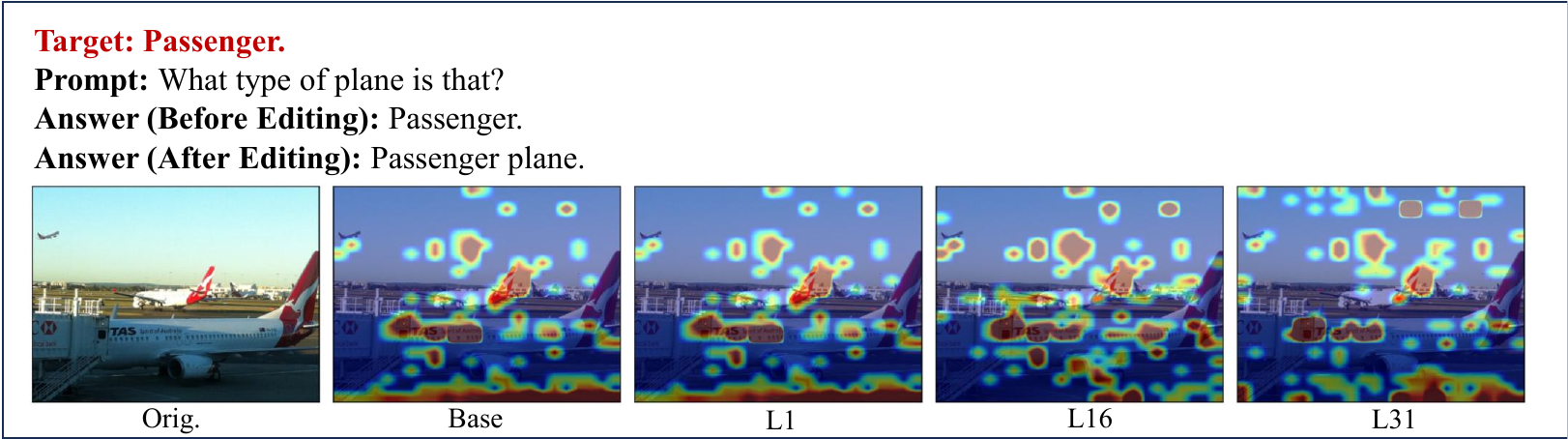}
    \end{subfigure}

    \caption{
E-VQA case studies.
Top two panels: Group 1; bottom two panels: Group 2.
In each group, the upper panel is the edit sample and the lower panel is the corresponding locality sample.
    }
    \label{fig:Case_Study_EVQA_2}
\end{figure*}

\end{document}